\documentclass{article}

\newif\ifarxiv
\arxivtrue       % arXiv version
\ifarxiv
  \usepackage[preprint]{neurips_2026}
\else
  \usepackage{neurips_2026}
\fi

\usepackage[utf8]{inputenc}
\usepackage[T1]{fontenc}
\usepackage{microtype}
\usepackage{graphicx}
\usepackage{subcaption}
\usepackage{booktabs}
\usepackage[breaklinks=true]{hyperref}
\usepackage{url}
\usepackage{xcolor}

\usepackage{eso-pic}

\usepackage{eso-pic}

\newcommand{\placerepo}{%
  \AddToShipoutPictureFG*{%
    \AtPageLowerLeft{%
      \raisebox{0.58in}{%
        \hspace*{\dimexpr 1in+\oddsidemargin\relax}%
        \makebox[\textwidth][l]{%
          \normalfont\scriptsize
          \ifarxiv
            Code repository:
            \href{https://github.com/PeterXingyuZhao/Arithmetic_Transformer}{%
              \nolinkurl{https://github.com/PeterXingyuZhao/Arithmetic_Transformer}%
            }%
          \else
            Anonymous repository:
            \href{https://github.com/anonymous2025abc/Arithmetic_Transformer.git}{%
              \nolinkurl{https://github.com/anonymous2025abc/Arithmetic_Transformer.git}%
            }%
          \fi
        }%
      }%
    }%
  }%
}

\usepackage{amsmath}
\usepackage{amssymb}
\usepackage{mathtools}
\usepackage{amsthm}
\usepackage[capitalize,noabbrev]{cleveref}
\usepackage{enumitem}
\usepackage{makecell}
\theoremstyle{plain}
\newtheorem{theorem}{Theorem}[section]

\theoremstyle{definition}

\theoremstyle{remark}

\usepackage[textsize=tiny]{todonotes}

\usepackage{amsfonts}
\usepackage{nicefrac}       % compact symbols for 1/2, etc.
\usepackage{bm}
\usepackage{soul}
\usepackage{wrapfig}
\usepackage{placeins}
\usepackage{fancyvrb}
\usepackage{fvextra}

\newcommand{\reda}{\color{red} a}
\newcommand{\blueb}{\color{blue} b}
\title{Shattered Compositionality: Counterintuitive Learning Dynamics of Transformers for Arithmetic} 

\author{%
  Xingyu Zhao \\
  University of Wisconsin--Madison \\
  \texttt{xzhao468@wisc.edu}
  \And
  Darsh Sharma \\
  University of Wisconsin--Madison \\
  \texttt{darsh.sharma@wisc.edu}
  \And
  Rheeya Uppaal \\
  University of Wisconsin--Madison \\
  \texttt{ruppaal@wisc.edu}
  \And
  Yiqiao Zhong \\
  University of Wisconsin--Madison \\
  \texttt{yiqiao.zhong@wisc.edu}
}

\begin{document}
\raggedbottom
\maketitle
\placerepo
\begin{abstract} 
% Large language models (LLMs) can achieve strong benchmark accuracy while remaining brittle under small distribution shifts. Recent mechanistic work suggests that models often compose skills differently from humans, but two questions remain underexplored: how these non-human compositions emerge during training, and how training data distributions shape them. We study these questions through small transformers trained on synthetic arithmetic tasks, using fine-grained black-box diagnostics to track subskill acquisition over training. We find that transformers often fail to acquire arithmetic skills in human-like sequential order: in addition they learn digits in reverse order, and in multiplication, comparison, and sorting they acquire multiple subskills in parallel. We call this training-dynamics failure mode shattered compositionality. To explain it, we provide evidence that learning is driven by correlational matching to the training distribution rather than by discovering a causal or procedural decomposition of the task. These non-human trajectories have concrete consequences: parallel subskills can compete, producing characteristic mixing errors and sharper failures under controlled distribution shifts. We further show that these patterns persist under alternative setups, larger and pretrained models, and scratchpad supervision, and that related brittleness appears in modern open-source LLMs on clause-extended GSM8K. Our results highlight a mismatch between high apparent competence and robust compositional reasoning. \PX{Alternative abstract version by GPT}

Large language models (LLMs) often achieve strong benchmark accuracy yet remain brittle under small distribution shifts. While recent mechanistic studies reveal the discrepancy between LLMs and humans in skill compositions, the learning dynamics of skill acquisition and the role of data distributions  remain elusive. 
%In this study, we complement existing mechanistic analyses of internal compositions by focusing on learning dynamics of skill acquisition and the consequences. 
In this study, we train transformers on synthetic arithmetic tasks with black-box model-agnostic metrics for analyzing non-human skill compositions. We discover that transformers often acquire skills for arithmetic in reverse order or in parallel instead of human-like sequential rules---a phenomenon we refer to as shattered compositionality. To explain these behaviors, we provide evidence that correlational matching to the training data, rather than causal or procedural composition, shapes learning dynamics. As a consequence, this non-human acquisition creates competition between partially learned skills, producing characteristic mixing errors and weaker robustness under controlled distribution shifts.
We further show that the same qualitative behavior persists in modern LLMs and is not mitigated by pure model scaling or scratchpad supervision. Our results highlight a mismatch between training-time skill acquisition and the human-like hierarchical compositions, with implications for reasoning
reliability and out-of-distribution robustness.
\end{abstract}

\vspace{-0.5cm}
\section{Introduction}
\label{sec:introduction}

Large language models (LLMs) achieve high accuracy on many compositional benchmarks, from code generation to mathematical reasoning~\citep{white2024livebench}. Yet strong in-distribution performance does not guarantee robust composition: small shifts in problem formulation can expose reasoning that ``almost'' composes, but fails in brittle and unexpected ways~\citep{song2025a}. %Such brittleness is especially concerning in settings that require reliable reuse and recombination of learned capabilities, with direct implications for robustness, alignment, and safety~\citep{macdiarmid2025natural}.
Such brittleness is especially concerning in settings that require reliable skill reuse, with direct implications for alignment and safety~\citep{macdiarmid2025natural}.

%Existing mechanistic analyses, which focus on model's internal operations, reveal that models build  skill compositions differently from human procedures. 
Existing mechanistic analyses focus on a model’s internal operations and reveal that learned skill compositions can differ markedly from textbook or human procedures. 
Specifically, 
%A natural hypothesis is that these failures arise because models build and apply skill compositions differently from humans. Recent mechanistic analyses support this view: rather than implementing a single coherent procedure, 
LLMs often rely on multiple interacting mechanisms---sometimes complementary, sometimes competing---consistent with a ``bag of heuristics'' rather than a unified algorithm~\citep{wiegreffeanswer, lindsey2025biology, nikankin2025arithmetic}. 
%More broadly, models can appear competent while relying on non-human internal strategies that fail under distribution shift or adversarial variation~\citep{berglund2024the, shojaee2025illusion}. 
These findings raise a training-dynamics question: If a
model ends up using a non-canonical composition, what made that composition easier to acquire than the canonical one?
However, most prior work studies \emph{static} snapshots of trained models, leaving two fundamental questions open. 
\textit{Q1: how non-human compositions \textbf{emerge} during training?} %\textit{Q2: what is the role of training \textbf{data distributions} in inducing non-human compositions and brittle generalization.}
\textit{Q2: how do training \textbf{data distributions} shape those compositions and their failures}?
%and what determines the order
%We still lack a developmental account of \emph{how compositional structures emerge during training, and what determines the order in which constituent subskills are learned}. Without this training-dynamics view, it is difficult to diagnose why compositions become brittle or to design interventions that shape them.

%In this work, we provide a complementary approach to existing mechanistic studies: we focus on learning dynamics of models throughout training using black-box data-centric measurements, thereby emphasizing the mechanism of emergence and data distributions. We mostly focus on
%We address this gap by training small transformers on synthetic arithmetic tasks. Arithmetic is a clean compositional domain with well-specified rules and minimal semantic ambiguity, and it has long served as a testbed for understanding learning dynamics such as grokking and length generalization~\citep{dziri2023faith, power2022grokking, anil2022exploring, kazemnejad2023impact}. Crucially, arithmetic admits fine-grained, mechanistically meaningful diagnostics. We decompose tasks into subskills---for example digit-wise prediction, carry-dependent dependencies, length comparison, and digit comparison---and track their acquisition across training, enabling a direct view of how compositions form and how they fail.

%In this work, we take a complementary, data-centric view: we analyze learning dynamics throughout training using black-box measurements.
To answer these questions,
we study small transformers on synthetic arithmetic tasks---a setting where the task's canonical (human-like) decomposition is known and where partial competence can be measured throughout training. Arithmetic is a clean compositional domain with well-specified rules and minimal semantic ambiguity, long serving as a testbed for understanding learning dynamics such as grokking and length generalization~\citep{dziri2023faith, power2022grokking, anil2022exploring, kazemnejad2023impact}. Crucially, arithmetic admits fine-grained diagnostics. We decompose tasks into subskills---for example digit-wise prediction, carry-dependent dependencies, length comparison, and digit comparison---and track their acquisition across training.

For Q1, our central finding is a broader training-dynamics pattern which we call \textbf{shattered compositionality}: relative to a natural hierarchical decomposition of the task, transformers often acquire subskills in reverse order (for example, higher-place digits before lower-place digits) or in parallel. We use the term most strongly for cases where these partially learned subskills do not settle into a stable hierarchy and instead interfere with one another.

%For Q1, our central finding is a training-dynamics failure mode which we name as \textbf{shattered compositionality}---transformers do not reliably acquire subskills according to human-like sequential rules; instead, they often learn in \emph{reverse order} (for example, higher-place digits before lower-place digits) or acquire multiple subskills \emph{in parallel}. 

For Q2, we provide evidence consistent with a simple hypothesis: learning dynamics are shaped strongly by the correlational signals exposed by the training distribution, rather than by immediately recovering a causal or procedural decomposition of the underlying rules. Concretely, the emergence of specific subskills aligns with information-theoretic signals available in the data, and additional subskills become learnable once those signals are saturated.

%For Q2, we provide evidence for a simple hypothesis: learning dynamics are shaped primarily by \emph{correlational matching} to the training distribution, rather than by discovering a causal or procedural decomposition of the underlying rules. Concretely, the emergence of specific subskills aligns with information-theoretic signals available in the data, and additional subskills become learnable once those signals are saturated.

%Through explaining the counterintuitive learning trajectories, our analyses reveal the \textbf{consequences} of shattered compositionality---
These analyses also clarify the \textbf{consequences} of shattered compositionality and why it matters. 
Parallel acquisition can create \emph{competition} between partially learned behaviors, which in turn yields characteristic \emph{mixing errors}. %Moreover, mild distribution shifts such as increased subskill contrasts and number of operands in evaluation  can lead to persistent degraded performance, which offers an explanation to empirical observations~\citep{sclarquantifying, mirzadehgsm, chatziveroglou2025exploring}.
Mild distribution shifts, such as stronger subskill contrasts or more operands at evaluation time, can then produce persistent performance degradation. This perspective helps explain empirical failures under prompt and distribution shifts~\citep{sclarquantifying, mirzadehgsm, chatziveroglou2025exploring} and connects them to broader safety concerns~\citep{ren2025llms}.

Our main contributions are as follows.
\begin{itemize}[itemsep=3pt, topsep=0pt, parsep=0pt, leftmargin=30pt]
    %\item We introduce \textbf{shattered compositionality} as a training-dynamics perspective on brittle skill composition: transformers may acquire subskills in reverse order or in parallel rather than through a human-like procedural decomposition.
    \item \emph{Non-canonical acquisition dynamics}. Using addition and multiplication as controlled case studies, we characterize non-canonical training trajectories of skill acquisition, and provide a correlational-matching account of how those orders emerge (Sec.~\ref{sec:mechanism}).
    %we provide evidence that these dynamics are driven by correlational matching to the training distribution (Sec.~\ref{sec:mechanism}).
    %\item Using addition and multiplication as %mechanistically analyzable 
    %in-depth case studies, we show the mechanism of learning dynamics: transformers exhibit non-human learning orders, and we provide evidence that these dynamics are driven by correlational matching to the training distribution (Sec.~\ref{sec:mechanism}).
    \item \emph{Consequences of shattered compositionality.} Using comparison and sorting, we show why non-canonical learning dynamics matter:
    %show the consequences of shattered compositionality:  
    parallel acquisition can create transient or persistent skill competition, producing mixing errors %and sharper failures 
    under controlled distribution shifts (Sec.~\ref{sec:parallel-competition}).
    \item \emph{Connections to broader compositional brittleness.} We show these behaviors and consequences persist under alternative positional encodings, larger models, pretrained models, and scratchpad-based reasoning, and we connect the same broader failure mode to clause-extended GSM8K evaluations on modern LLMs (Sec.~\ref{sec:robustness}--\ref{sec:external-robustness}).
\end{itemize}
%In all, our work analyzes a failure mode of compositionality via model-agnostic metrics evaluated primarily on learning dynamics, which highlights the critical role of data distributions, thus providing a complementary account to existing model-specific operational analysis of compositions.
Overall, our work offers a model-agnostic account of a failure mode of compositionality through training dynamics, complementing existing model-specific analyses of internal operations.

%The rest of the paper follows the causal narrative suggested by these findings. We first introduce the experimental setup. We then analyze the \emph{mechanism} of non-human learning dynamics in addition and multiplication, turn to the \emph{consequences} of parallel acquisition in comparison and sorting, test the \emph{persistence} of these behaviors under standard interventions, and finally examine whether a related brittleness appears in modern LLMs under mild distribution shift.

\begin{figure}[t]
    \centering
    \begin{subfigure}{0.49\textwidth} % Adjust width as needed
        \centering
        \includegraphics[width=\textwidth]{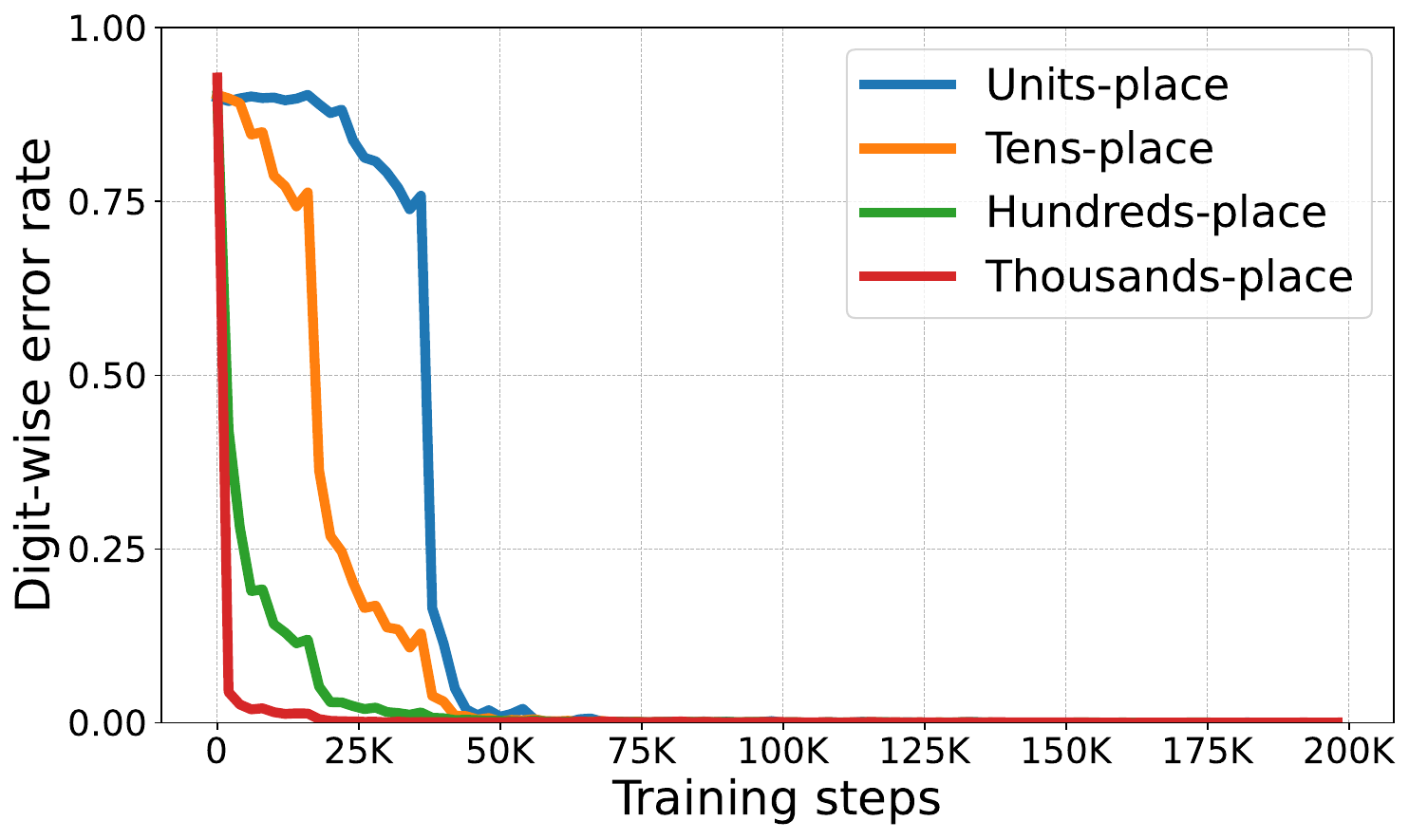} 
        \caption{Digit-wise error in plain output format}
        \label{addition-plain}
    \end{subfigure}%
    \hfill % Adds horizontal space between subfigures
    \begin{subfigure}{0.49\textwidth} % Adjust width as needed
        \centering
        \includegraphics[width=\textwidth]{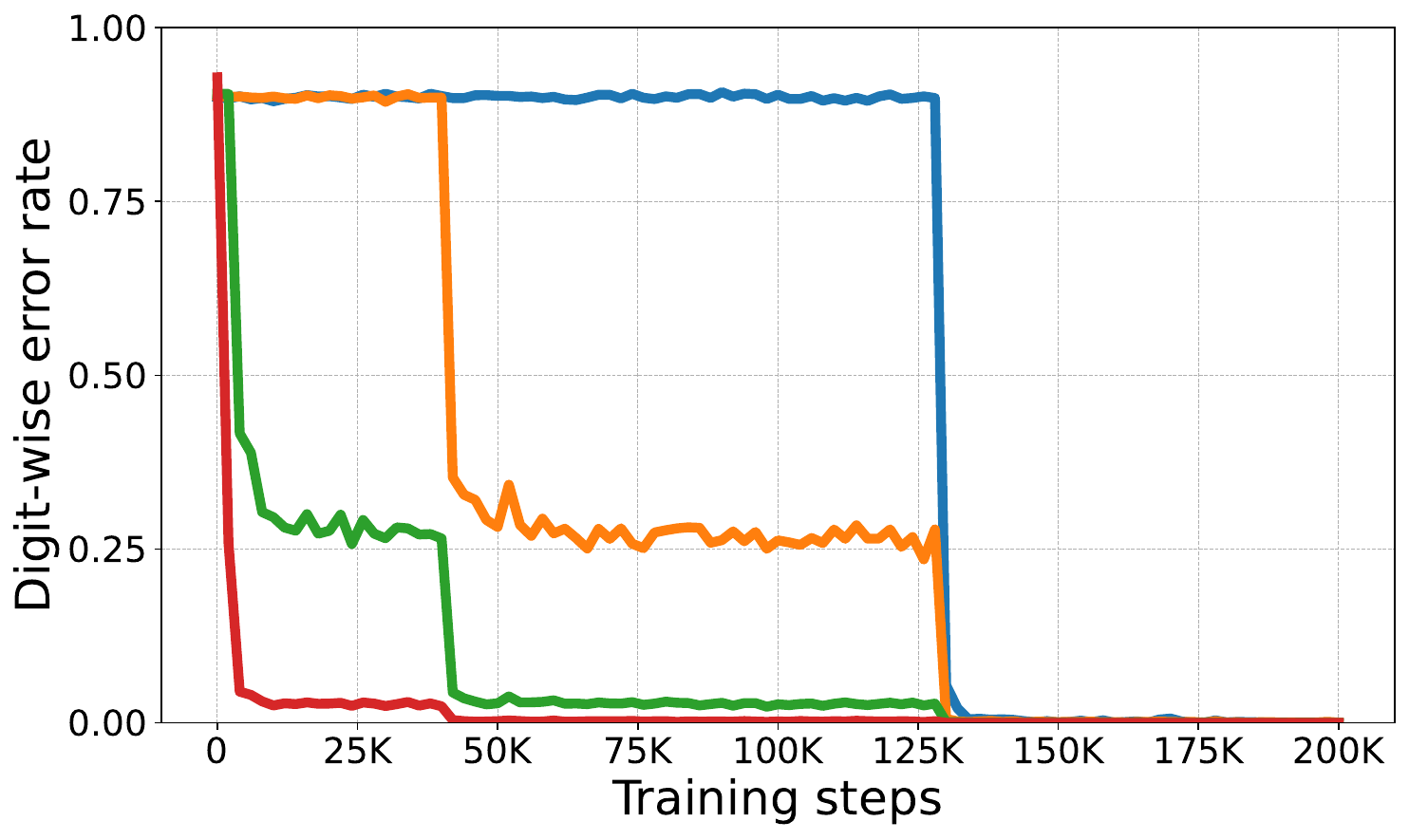}
        \caption{Digit-wise error in reverse output format}
        \label{addition-reverse}
    \end{subfigure}
    \caption{\textbf{Transformers learn digits in non-human order for addition regardless of output formats}. We train transformers on addition of the format $a+b+c+d=e$ with multiple-digit input integers $a,b,c,d$, and evaluate the digit-wise error rates of $e$ across training. In contrast to human's natural order, models learn from higher digits to lower digits. 
    %This non-human order remains so even when we reverse the digit order of output $e$ which incentivizes the model to follow the human's order.
    Reversing the output order, which incentivizes the model to follow the human-like order, does not restore human-like acquisition.
    %While it is easier for humans to solve addition under the reverse output format (due to carry), the models learn the task slower and still learn higher digits first.
    %\textbf{Left}: Train/test examples are in the plain output format \eqref{eq:plain} %\textbf{Right}: Train/test examples are in the reverse output format. \eqref{eq:reverse} 
    %\RU{Briefly explain why the left and right plots have different trends.}
    %\YZ{Clean up the plots. Make plot titles/labels, numbers, legends bigger. Make curves thicker. Train both for full 200K steps. } \PX{Updated the figures. These two are results using temperature 0.8.} \YZ{keep increasing the font size. The title should be ``digit-wise error in plain output format'', ``digit-wise error in reverse output format''. Right plot legend is redundant.} \PX{Updated the figures. Does the font size looks good now?} \RU{I think the font could be a little bigger. ML papers will often not have titles above the plots, that info is incorporated into the caption. Also, the legend is missing?} \PX{Figures updated again.}
    %\YZ{Change xticklabels to 25K instead of 25000. Change to x labels to ``Training steps'' (to be consistent with other places). Change the y labels to ``Error rate''.} \PX{Figures updated.}
    %\YZ{Remove the ``error ratio'' from the legend? It looks redundant. Also let's say error rate to be consistent.} \PX{Does this look better? I've removed all "error ratio" from the legend.}
    } 
    \vskip -1mm
    \label{fig:1}
\end{figure}

%\input{Figures/addition_improving_guesser}
%\vspace{-0.2cm}
\section{Experimental Setup}
\label{sec:experimental-setup}

%We consider $4$ synthetic experiments where 
We train small transformers for $4$  arithmetic tasks. Following~\cite{power2022grokking, lee2024teaching}, we use digit-wise tokenization where integers are converted to multiple digit-tokens in $\{0,1,\ldots,9\}$, and operators and delimiters such as `$+$', `$=$', `$\times$', `$>$', `$<$', `,' are treated as separate symbol-tokens.

\vspace{-0.1cm}
\subsection{Tasks and data generation}

\textbf{Addition task.} For the 4-operand addition task, we sample training examples of the following format $a + b + c + d = e$; for instance, 
\begin{equation}\label{eq:plain}
    \underbrace{349}_{a} + \underbrace{102}_{b} + \underbrace{382}_{c} + \underbrace{907}_{d} = \underbrace{1740}_{e} \tag{\text{plain format}}
\end{equation}
where $a,b,c,d$ are uniformly drawn from $\{0,1,\ldots,999\}$, and $e$ is the correct result. Following the setting of \cite{lee2024teaching}, we tokenize every character of the input sequence. For example, ``$349$'' is encoded by three separate tokens, and ``$+$'', ``$=$'' are encoded by two other tokens. We view subskills as predicting individual digits in the output. 

Following~\cite{lee2024teaching}, we also consider the \textit{reverse output format} where the order of digits in the output is reversed. This reverse format is consistent with the human's arithmetic rule since carry is determined by lower digits and used in higher digits.
\begin{equation}\label{eq:reverse}
    \underbrace{349}_{a} + \underbrace{102}_{b} + \underbrace{382}_{c} + \underbrace{907}_{d} = \underbrace{0471}_{e'}. \tag{\text{reverse format}}
\end{equation}
For evaluation, our test data has the same format as training data. We still interpret the digits using the original order; for example, we treat ``1'' in $e'$ as the thousands-place digit.

\textbf{Simple multiplication task.} The format of training examples is $a \times b = c$, where $a$ is a multiple-digit integer, $b$ is a single-digit integer, and $c$ is the correct result. For example,
\begin{equation*}
\underbrace{345901}_{a} \times \underbrace{8}_{b} = \underbrace{2767208}_{c}.
\end{equation*}
%\PX{We currently also use a balanced sampling for multiplication. The weights for different lengths are triangular, for the kth digits, the weight is $\frac{k}{N(N+1)/2}$}
We first sample the length $\ell$ for $a$ from $\{1,\ldots,40\}$ and then use uniform sampling to generate $a$ and $b$: we sample $a$ uniformly from $\{0,1,\ldots,10^\ell-1\}$ and $b$ uniformly from $\{0,1,\ldots,9\}$. 
We also consider the reverse output format where the output digits are arranged in the reverse order; for example, $c$ above is represented by ``8027672'' instead.

% \begin{wraptable}{r}{7cm}
%     \centering
%   \caption{\textbf{Summary of synthetic tasks.}}
%   \label{tab:syn-summary}
%   \begin{center}
%     \begin{small}
%       \begin{sc}
%       \resizebox{0.49\textwidth}{!}{
%         \begin{tabular}{lllll}
%           \toprule
%           &  {\small Addition} & {\small Multi} & {\small Comparison} & {\small Sorting} \\
%           \midrule
%           vocab size    & 15    & 15   & 18    & 15     \\
%           max seq len    & 21  & 85  & 12     & 40   \\
%           sample size & 1M  &  100K  &  45K   & 100K   \\
%           \bottomrule
%         \end{tabular}
%         }
%       \end{sc}
%     \end{small}
%   \end{center}
% \end{wraptable}

\begin{wraptable}{r}{0.48\columnwidth}
  \vspace{-0.7\baselineskip} % optional: move table slightly upward
  \centering
  \caption{\textbf{Summary of synthetic tasks.}}
  \label{tab:syn-summary}
  \setlength{\tabcolsep}{4pt} % default is 6pt; smaller saves width
  \renewcommand{\arraystretch}{0.95} % slightly tighter rows
  \small
  \resizebox{\linewidth}{!}{%
    \begin{tabular}{@{}lcccc@{}}
      \toprule
      & Addition & Multi & Comparison & Sorting \\
      \midrule
      vocab size  & 15  & 15   & 18  & 15   \\
      max seq len & 21  & 85   & 12  & 40   \\
      sample size & 1M  & 100K & 45K & 100K \\
      \bottomrule
    \end{tabular}%
  }
\end{wraptable}

%\PX{When doing slicing batch preparation and scaling up experiment, I extended the vocabulary to include all lowercase, uppercase, digits, and operation symbols. The vocabulary size becomes 97.}\PX{Updated the table.}

\textbf{Comparison task.} In the comparison task, the model %We formulate the comparison task as a classification problem where the model 
predicts the relationship token $r \in \{>, <, =\}$ between two comma-separated 4-digit integers a and b. An example input sequence is formatted as:
\begin{equation*}%\label{eq:compare}
\underbrace{5293}_{a}\quad, \quad \underbrace{5241}_{b} \quad \to \quad \underbrace{>}_{r} %\tag{\text{comparison task}}
\end{equation*}
where we tokenize every character as before. We study two sampling strategies. \textbf{Uniform sampling} draws $a$ and $b$ from $\{1000,\ldots,9999\}$, which undersamples hard cases because shared leading digits are rare. \textbf{Balanced sampling} instead uses five equiprobable groups defined by the \textit{Number of Controlled Identical Digits} (NCID), i.e., the length of a guaranteed shared prefix. For NCID $k$, we enforce $a_j=b_j$ for $j\le k$ and sample the remaining digits uniformly. The $k=4$ group consists of identical pairs, so 20\% of training examples contain ``$=$''. See Section~\ref{sec:append-data-gen-comparison} for details.

\textbf{Sorting task.} The training examples take the following format $a,b,c,d \to \mathrm{sorted}(a,b,c,d) $, where $a,b,c,d$ are either 3-digit or 4-digit integers. For instance, 
\begin{equation*}
    \underbrace{9312}_{a}, \underbrace{4661}_{b}, \underbrace{405}_{c}, \underbrace{6252}_{d} \to \underbrace{405}_{c}, \underbrace{4661}_{b}, \underbrace{6252}_{d}, \underbrace{9312}_{a}. 
\end{equation*}
For data generation, we mainly focus on a \textit{doubly balanced} sampling strategy.
%and defer the results of uniform sampling to the Appendix (Sect.~\ref{sec:append-sorting-uniform}). 
This sampling is balanced in two aspects. (i) Length-balanced:
each integer has $1/2$ probability to be 3-digit, and $1/2$ probability to be 4-digit; (ii) Closeness-balanced: similar to the comparison task, the generated dataset consists of 3 equiprobable groups with NCID $\in\{0,1,2\}$.  See Section~\ref{sec:append-data-gen-sorting} for details about data generation and Section~\ref{sec:append-sorting-uniform} for an alternative sampling strategy. %\YZ{Complete the corresponding appendix section.} %\PX{For sorting, currently we are using NCID $\in\{0,1,2\}$.}
%\YZ{I am actually confused. Is the example from the first MSDD group? Do $b$ and $c$ share the highest digit? Perhaps we treat the highest digit of 405 as 0? Explain more in the appendix. Also, I think Peter's closeness grouping is not exactly that same as Darsh's grouping; if so, we need to change the sampling to the same MSDD definition.} \PX{For my closeness grouping, the $k$-th group ($1\le k \le 3)$ only conditions on $a_j = b_j$ for $j < k$ (and does not require $a_k \neq b_k$), and we uniformly samples digits that satisfy this single condition. So the example actually belongs to the first group, but $b_1$ happens to be equal to $c_1$.}

%We call the dataset generated using this approach "doubly balanced" since it is both length-balanced and closeness-balanced.

\vspace{-0.2cm}
\subsection{Model and Training}

We use the NanoGPT architecture~\citep{Karpathy2022} with standard model configurations. Unless specified otherwise, the models have 6 layers, 6 attention heads, embedding dimension 384, totaling 10.63M parameters. We use absolute positional encoding by default. We adopt the standard autoregressive training using the AdamW optimizer~\citep{loshchilov2018decoupled}, with learning rate $0.001$, batch size 512, and dropout 0.2. For simplicity, each input sequence contains exactly one training example, which is right-padded to a fixed maximum sequence length.
For generation, we use sampling with temperature $0.8$. %When generating outputs, we sample from prediction probabilities with temperature $0.8$. %\PX{Do people call it "right-padded"? We are in fact adding '<pad>' to the end of sequence.} \YZ{Yes, changed.}

\textbf{Alternative setups.} We also consider several variants of our setup, including different input sequence formats, positional encodings, output digit permutations, and decoding schemes; Section~\ref{sec:robustness-alternative-setups} summarizes these robustness checks, and the appendix provides the corresponding figures.

\vspace{-0.2cm}
%\section{The Mechanism of Non-Human Learning Dynamics}
\section{Data-Shaped Non-Human Learning Dynamics}
\label{sec:mechanism}

We begin with addition and simple multiplication because the two tasks admit clean digit-wise skill compositions in the sequential order, where carry connects neighboring digit places. For the two tasks, a model's digit-wise skill acquisition naturally reflects its learning behaviors.  
%they offer canonical, mechanistically analyzable cases. Addition isolates carry-dependent sequential structure in a short output, while multiplication lets us test whether the same style of non-human acquisition persists when the output is much longer and the carry structure is richer.

\subsection{Non-human sequential learning in arithmetic}

\textbf{Addition is learned in reverse skill order.}
Solving addition requires correct modular addition together with carry operations. From a human procedural perspective, one would expect competence to develop from the unit place upward, since carry propagates from lower places to higher ones. In contrast, the models acquire digit-level subskills in the opposite order: training consistently begins with the most significant digit and then progresses toward lower digits (Figure~\ref{fig:1}). %\PX{Moved "numerical approximation" part to \ref{appendix-approximator}}

\textbf{Multiplication exhibits bidirectional acquisition rather than a single order.}
% The addition result above already shows that the model's preferred learning order is non-human.Multiplication reveals that the issue is broader than a simple reversal. 
We train on the task $a \times b = c$, where $a$ is up to $40$ digits, $b$ is a single digit, and $c$ can have up to $41$ digits. Figure~\ref{fig:mul-digit-err} shows that learning follows neither the human order (from lowest to highest digit) nor a clean reverse order. Instead, the model learns digits in a \emph{bidirectional} manner: competence spreads from the most significant end toward lower digits, while the unit-place digit is learned rapidly in parallel. This pattern is consistent across other output digit permutations, and alternative integer lengths; see Appendix~\ref{appendix_simple_mul}. Together, addition and multiplication show that transformers do not naturally solve arithmetic in human-like sequential order, even in clean algorithmic domains.

\begin{figure}[t]
    \centering
    \begin{subfigure}{0.49\textwidth} % Adjust width as needed
       \centering
    \includegraphics[width=\textwidth]{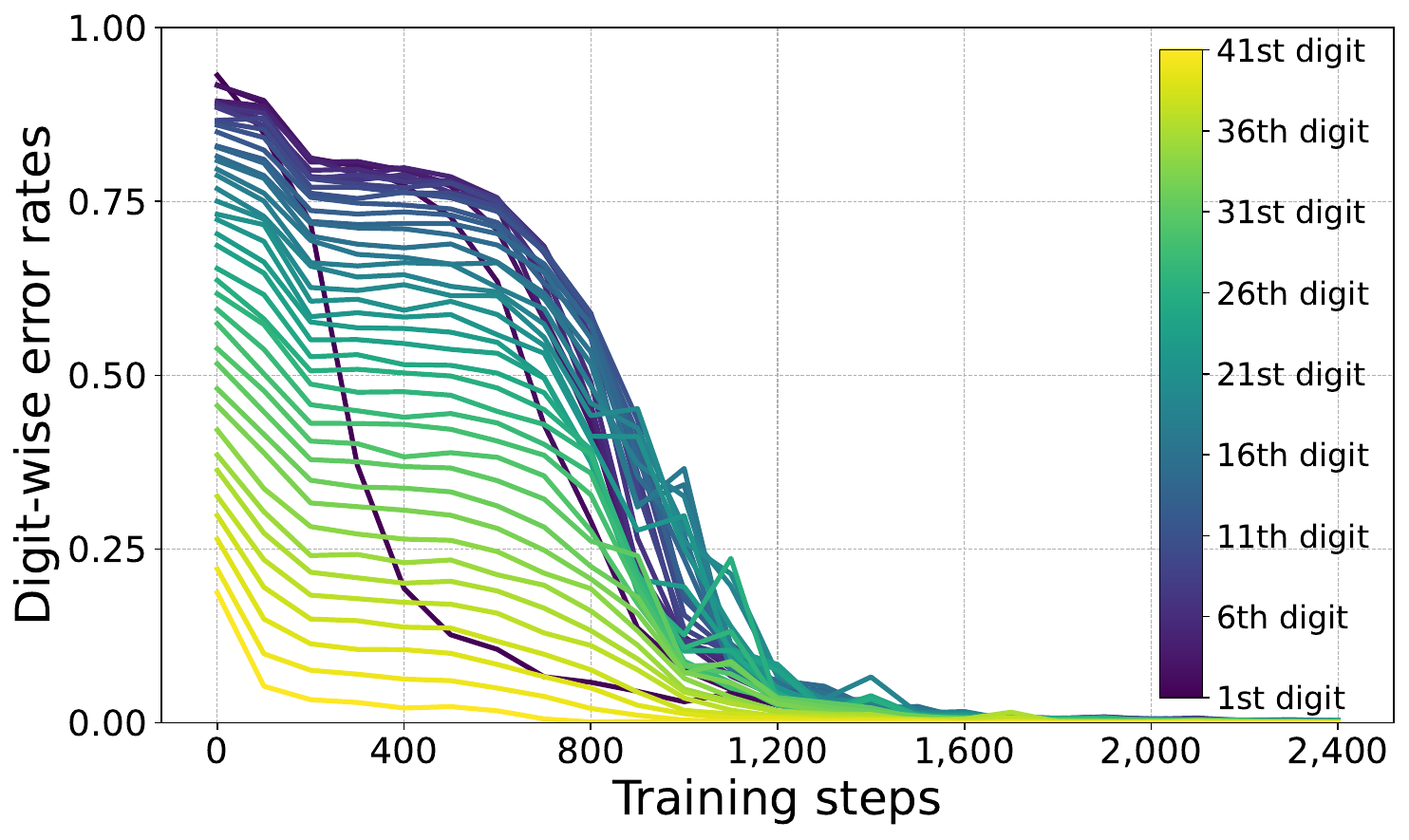} 
        \caption{Plain output format}
    \end{subfigure}%
    \hfill %
    \begin{subfigure}{0.49\textwidth} % Adjust width as needed
        \centering
    \includegraphics[width=\textwidth]{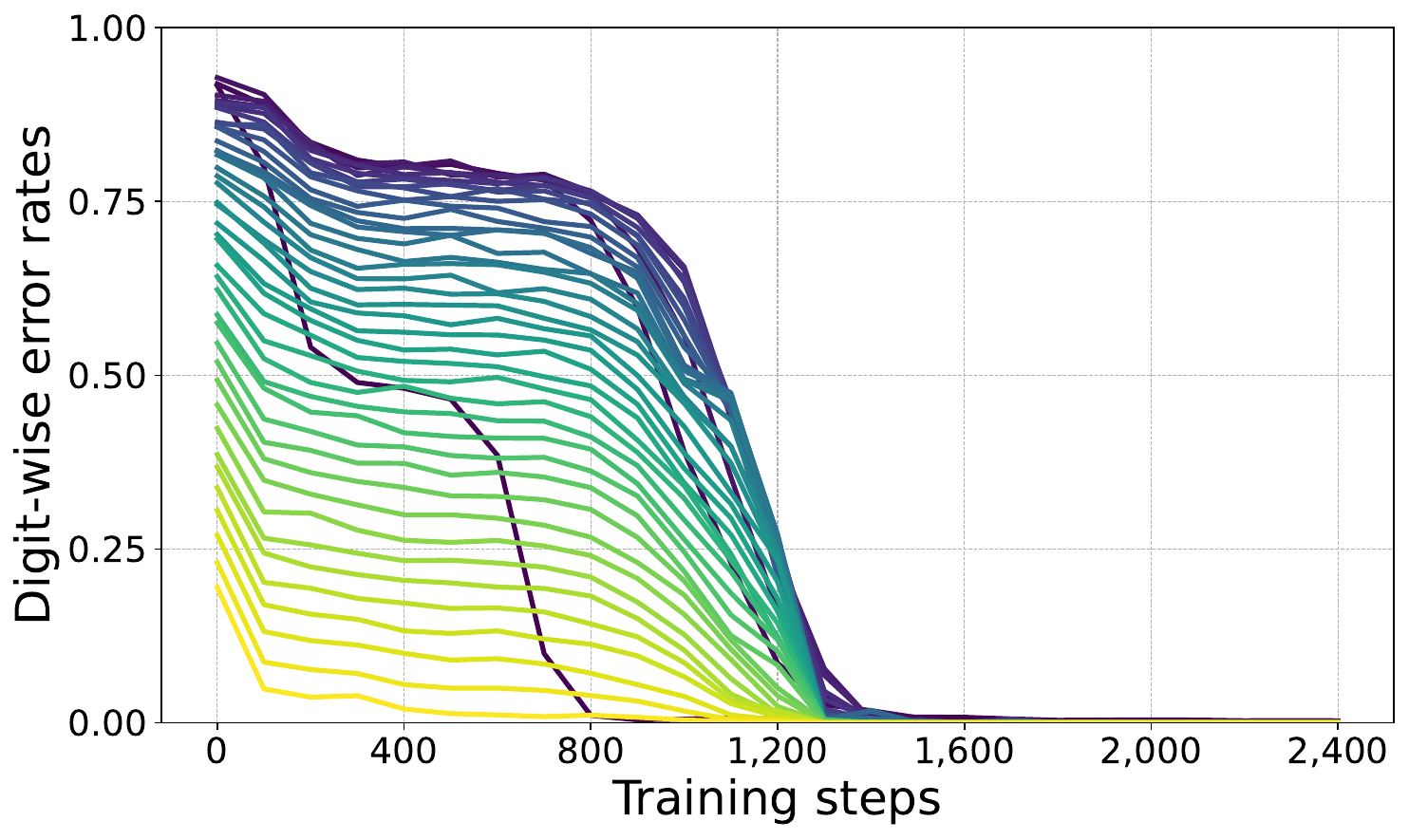} 
        \caption{Reverse output format}
    \end{subfigure}
    \caption{\textbf{Transformers learn digits in two-way order for simple multiplication}. %Training a transformer and evaluating its
    %From the evolution of digit-wise error rates, we find two opposite sequential orders are learned simultaneously. (i) Reverse order: starting from the 41st (highest) digit to the 3rd digit; (ii) Normal order: starting from the 1st (lowest) digit. %\YZ{Please help with the plot. Redraw the legends to make use of the white space on top right and fit into the figure. Use gradient colors with markers for the curves. Change x label to Training steps. Remove title. Change y label to ``Digit-wise error rates''. Reduce the total training steps if necessary, since the model converges early.}\PX{Updated the figure.}
    The model learns from both ends: learning spreads from the most significant end while the units place is acquired early in parallel. Multiplication therefore exhibits a bidirectional acquisition pattern rather than a single human-like or reverse order.
    } 
     % \vspace{-0.5cm}
    \label{fig:mul-digit-err}
\end{figure}

\subsection{The role of output formats}

A natural objection is that the reverse-output format should make addition \emph{easier}, because autoregressive prediction then aligns with the human procedure of computing from the least significant digit upward. To our surprise, Figure~\ref{fig:1} shows the opposite trend in our setting: the plain-output model converges faster overall, while the reverse-output model still learns higher digits first and exhibits sharper phase transitions. This indicates a conflict between the externally imposed format and the model's internally preferred learning trajectory. The key point is not merely that the learned order is ``reversed,'' but that optimization appears to prioritize the easiest correlational signals to fit, even when those signals are poorly aligned with the human algorithm.

%\subsection{Correlational matching as the driver}
\subsection{Evidence for correlational matching}
\label{sec:mi}

To understand these counterintuitive learning orders, we measure the correlational structure available in the training data. Here, we focus on 4-operand addition under the reverse format, leaving similar results of multiplication to Section~\ref{MI_mul}. Because all tokens are discrete, mutual information (MI) provides a natural way to quantify which input-output relationships can serve as learnable signals.

\textbf{Mutual-information metrics.}
Let $a=a_1a_2a_3$ denote one input integer and $e=e_0e_1e_2e_3$ denote the output sum, where smaller subscripts correspond to higher digit places. Inspired by \cite{dziri2023faith}, we first consider digit-wise MI $I(a_i,e_j)$. Under uniform sampling, most unconditional digit-wise MI terms vanish; the highest input digit and the highest output digit are the main exception. To obtain nontrivial statistics for lower places, we therefore consider conditional mutual information $I(a_i,e_i\mid c_{i-1})$, where $c_{i-1}$ is the carry into the next higher digit. Intuitively, these conditional MI terms quantify learnable correlational signals that only become available once the model has partially represented higher-place carry information.

The following theorem summarizes the basic structure of these signals; see proofs in Appendix~\ref{app:proof-thm1}.

\begin{theorem}\label{thm:1}
Under uniform sampling, we have $I(a_1,e_0)>0$ and $I(a_i,e_j)=0$ for all $j>0$. Moreover, $I(a_i,e_i\mid c_{i-1})>0$ for $i=1,2,3$.
\end{theorem}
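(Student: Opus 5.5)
The plan is to write every output digit as an explicit function of independent uniform input digits. Write $a=a_1a_2a_3$, and similarly $b,c,d$; under uniform sampling on $\{0,\dots,999\}$, all twelve digits are i.i.d.\ uniform on $\{0,\dots,9\}$. For place $i\in\{1,2,3\}$, let $\kappa_i$ be the carry entering place $i$ from the lower places, with $\kappa_3=0$. Define the column sum $s_i=a_i+b_i+c_i+d_i+\kappa_i$. Then $e_i=s_i \bmod 10$, the carry passed upward is $c_{i-1}=\lfloor s_i/10\rfloor$, and $e_0=c_0$. The key structural fact is that $\kappa_i$ is a function of digits at places $>i$ only, so $\kappa_i$ is independent of all digits at places $\le i$.

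\textbf{Vanishing of $I(a_i,e_j)$ for $j>0$.} I would show that $e_j$ is independent of $a_i$ by exhibiting an independent uniform shift. Take $b_j$ (with $b\neq a$ as the operand). Then $e_j=(b_j+R)\bmod 10$, where $R=a_j+c_j+d_j+\kappa_j$ is a function of digits other than $b_j$. Conditional on $(a_i,R)$, the digit $b_j$ is still uniform on $\mathbb{Z}_{10}$, so $e_j$ is uniform given $(a_i,R)$. Hence $e_j$ is uniform given $a_i$, and $I(a_i,e_j)=0$. This covers every pair $(i,j)$ with $j\ge1$, including $i=j$ and the case where $a_i$ lies at a lower place and feeds into $\kappa_j$.

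\textbf{Positivity of $I(a_1,e_0)$.} Here there is no modular reduction, since $e_0=\lfloor (a+b+c+d)/1000\rfloor$. It suffices to find one value of $e_0$ whose conditional probability differs across values of $a_1$. I would compare $P(e_0=0\mid a_1=0)$ with $P(e_0=0\mid a_1=9)$, i.e.\ the event $a+b+c+d<1000$. Raising $a$ by $900$ can only shrink this event. The inclusion is strict on a set of positive probability, for instance $b=c=d=0$ with $a<100$, versus $b=c=d=99$ with $a\ge 900$. So $a_1$ and $e_0$ are dependent, and the mutual information is strictly positive. If a cleaner statement is wanted, the same argument gives strict first-order stochastic dominance of $e_0$ given $a_1=9$ over $e_0$ given $a_1=0$.

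\textbf{Positivity of $I(a_i,e_i\mid c_{i-1})$.} I would use the decomposition $I(a_i,e_i\mid c_{i-1})=\sum_k P(c_{i-1}=k)\,I(a_i,e_i\mid c_{i-1}=k)$ and show that the $k=0$ term is positive. The event $c_{i-1}=0$ means $s_i\le 9$, and it has positive probability, e.g.\ when all digits at places $\ge i$ are $0$. On this event $e_i=s_i\ge a_i$. Consequently $P(a_i=9,e_i=0\mid c_{i-1}=0)=0$. On the other hand, both $P(a_i=9\mid c_{i-1}=0)$ and $P(e_i=0\mid c_{i-1}=0)$ are positive: take all other column terms zero for the first, and everything zero for the second. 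So the conditional joint law is not a product, the $k=0$ term is strictly positive, and the sum is positive. For $i=1$ the conditioning variable is $c_0=e_0$, which needs no separate treatment.

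\textbf{Main obstacle.} The only delicate point is bookkeeping. One must fix the convention that $c_{i-1}$ denotes the carry out of place $i$, and not confuse it with the incoming carry $\kappa_i$. One must also ensure the independence claims use digits at the correct places, so that the uniform-shift digit $b_j$ is genuinely independent of $a_i$ and of $\kappa_j$. Once these conventions are fixed, each step reduces to exhibiting explicit positive-probability configurations.
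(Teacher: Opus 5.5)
Your proposal is correct and follows essentially the paper's proof: a uniform-shift argument for the vanishing terms (your single shift digit $b_j$ handles every pair $(i,j)$ with $j\ge 1$ at once, which is tidier than the paper's three-group case split), a comparison of $a_1=0$ with $a_1=9$ through a strict event inclusion for $I(a_1,e_0)>0$, and the observation that on $\{c_{i-1}=0\}$ the value $a_i=9$ forces $e_i=9$. The only slip is in the strictness witness for $I(a_1,e_0)>0$: the configuration $b=c=d=0$ gives $e_0=0$ for both values of $a_1$, so it witnesses nothing; use the single event $\{b=c=d=99\}$ instead, on which $e_0=0$ when $a_1=0$ and $e_0\ge 1$ when $a_1=9$.
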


\textbf{Learning dynamics align with these correlational signals.}
For each evaluation example, we extract the model's predictive distributions over output digits and estimate $I(a_1,\widehat p_0)$ together with $I(a_i,\widehat p_i\mid c_{i-1})$ for $i=1,2,3$. Figure~\ref{fig:mi-addition} shows that sharp descents in training loss occur precisely when the model's predictions come to match the training-data MI structure. Figure~\ref{fig:addition_mi_wth_digitwise_error} shows the same phase transitions from the perspective of digit-wise error. Taken together, these results support a correlational-matching account: the model first exploits the strongest unconditional signal, and then progressively unlocks lower-place skills once higher-place latent variables, especially carry, are sufficiently represented.

% \begin{figure}[t]
%     \centering
% % \begin{subfigure}[b]{0.5\textwidth}
% %   \includegraphics[width=1\linewidth]{Figures/Images/MI-addition.png}
% % \end{subfigure}

% % \medskip % insert a bit of vertical whitespace
% % \begin{subfigure}[b]{0.5\textwidth}
% %   \includegraphics[width=1\linewidth]{Figures/Images/loss-addition.png}
% % \end{subfigure}   

% \begin{subfigure}[b]{0.9\textwidth}
%   \centering\includegraphics[width=0.7\linewidth]{Figures/Images/mi_conditioned_on_carries_plot_1M.pdf}
% \end{subfigure}

% \caption{\textbf{Transformer's learning behavior is captured by mutual information (MI) metrics.} \textbf{Top:} We track the MI metrics $I(a_1,\widehat p_0)$ (highest digit) and $I(a_i,\widehat p_i|c_{i-1})$ (other digits) using the model's prediction probabilities across training (solid curves), which are compared against the corresponding $I(a_1,e_0)$ and $I(a_i,e_i|c_{i-1})$ based on the training data (dashed line). \textbf{Bottom:} The sharp descent of the training loss corresponds to the model's sudden matching with MI metrics at each output digit, indicating skill acquisition. %\YZ{Match the vertical dashed lines.} \PX{Updated the figure.}
%     %\YZ{I downloaded from W\&B. Need to change plot style to match other plots. Title, colors, legend, xlabels. Match MI metrics with the loss curve.} \PX{Updated the figure.}
%     %\PX{Updated to test on MI data of 1M lines}
%     } 
%     \label{fig:mi-addition}
% \end{figure}

% \begin{wrapfigure}

\begin{figure}[t]
  %\vskip 0.2in
  \begin{center}
    \centerline{\includegraphics[width=0.55\textwidth]{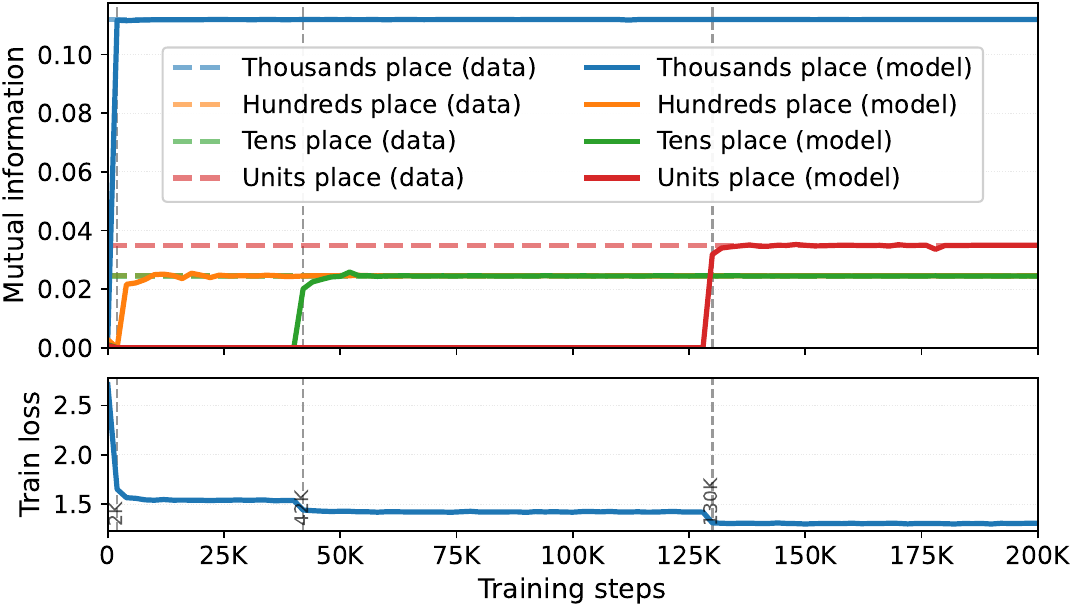}}
    \caption{
      \textbf{Transformer's learning behavior is captured by mutual information (MI) metrics}. \textbf{Top:} We track the MI metrics $I(a_1,\widehat p_0)$ (highest digit) and $I(a_i,\widehat p_i|c_{i-1})$ (other digits) using the model's prediction probabilities across training (solid curves), which are compared against the corresponding $I(a_1,e_0)$ and $I(a_i,e_i|c_{i-1})$ based on the training data (dashed line). \textbf{Bottom:} Sharp descents of training loss are matched with changes of MI metrics at each output digit, indicating skill acquisition.
    }
    \label{fig:mi-addition}
    \vspace{-0.5cm}
  \end{center}
\end{figure}
% \end{wrapfigure}

\begin{wrapfigure}[18]{R}{0.5\textwidth}
\centering
% Note: The input image uses image_0.png, but your LaTeX code points to a PDF. Use the correct filename for your system.
\includegraphics[width=1\linewidth]{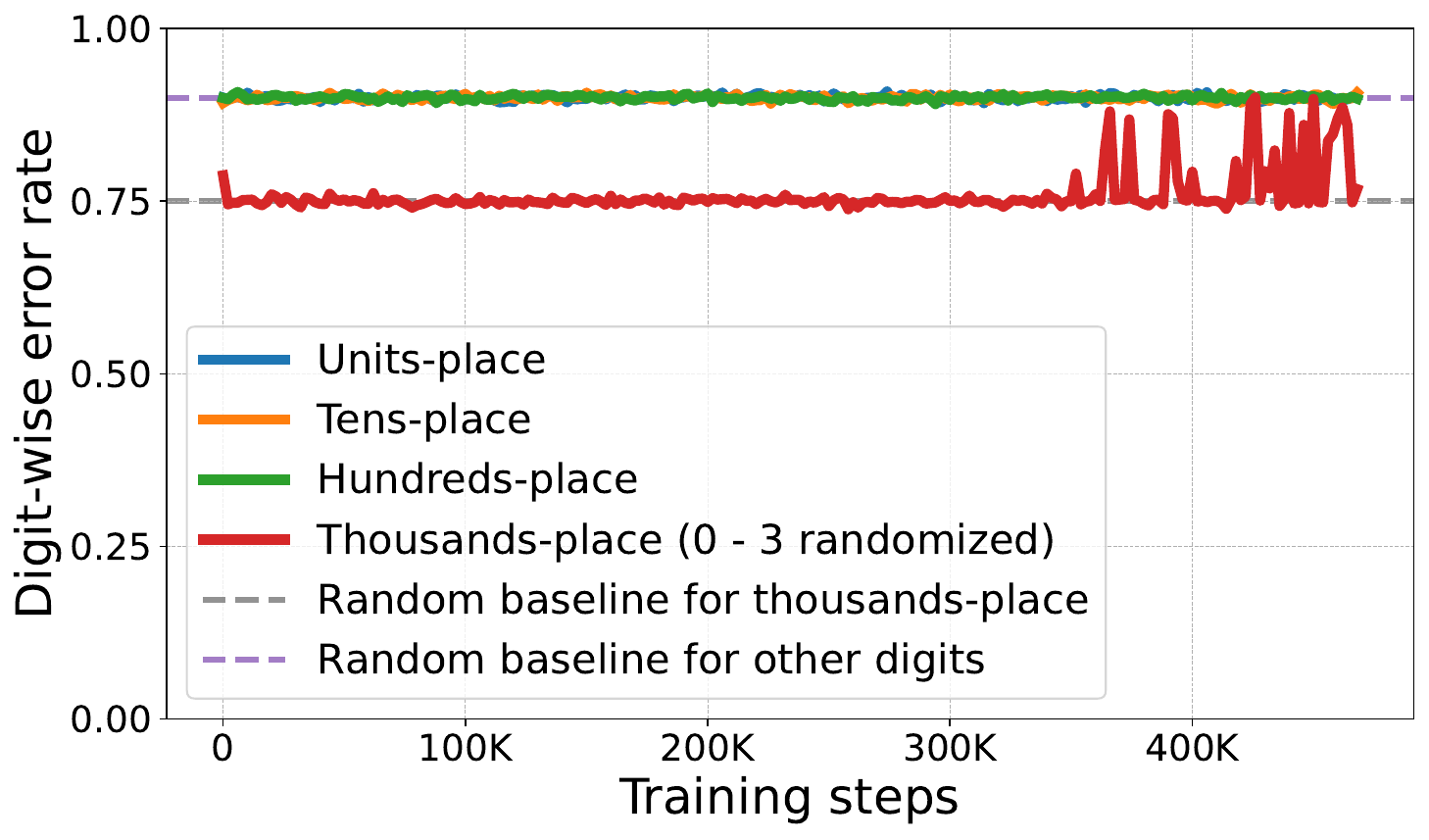}

\caption{\textbf{Ablation experiment shows randomizing higher digits disables learning.} We train transformers with modified training data for addition with the reverse format, where the thousands-place digits are replaced by uniform digits in $\{0,1,2,3\}$. The model fails to learn any lower digit, whose digit-wise error rates are not better than random baseline.}
\label{fig:ablation}
\end{wrapfigure}

\subsection{Conditional dependencies revealed by ablation experiment}

The MI analysis suggests: if higher-place signals are disrupted, lower-place skills should become much harder to learn because the relevant conditional dependencies never become available. We test this by modifying the addition data so that the thousands-place output digit is replaced by a uniformly sampled digit from $\{0,1,2,3\}$, and by running a similar ablation on the hundreds-place digit.

%The result is striking. 
As shown in Figure~\ref{fig:ablation} and Figure~\ref{fig:hundreds-randomized-reverse}, disrupting higher digits prevents or severely delays the acquisition of lower digits. Even after hundreds of thousands of training steps, most lower-place error rates remain near random baseline. 
%This is difficult to reconcile with a picture in which each digit is learned independently. Instead, it supports the view that 
Instead of each digit being learned independently, it supports the view that later-learned subskills depend on conditionally available structure created by earlier, non-human stages of learning.

To summarize, the learning-dynamics mechanism in this section is as follows: optimization matches the most accessible correlational structure first; this induces non-human acquisition orders; and those early-acquired latent structures then gate which additional subskills become learnable later.

\vspace{-0.2cm}
%\section{Parallel Acquisition and Skill Competition}
\section{Parallel Acquisition, Skill Competition, and Brittleness}
\label{sec:parallel-competition}

%The previous section explained why learning can proceed in non-human orders. We now turn to the main consequences of that behavior. When several subskills are learned in parallel rather than in a sequential fashion,
%a through a stable hierarchy,  they can transiently or persistently interfere with one another. We study this phenomenon in comparison and sorting, where distinct subskills compete during training.
%can be isolated more directly than in addition.

After analyzing the mechanism of non-human orders, we now turn to the consequences of such behavior. When several subskills are learned in parallel rather than  sequentially,
they can transiently or persistently interfere with one another---which may lead to worse accuracy under distribution shifts despite strong in-distribution performance. We study this phenomenon in comparison and sorting.

\subsection{Defining the relevant subskills}

% Table~\ref{tab:subskills_overview} summarizes the decomposition we use throughout this section.
%\YZ{we need to briefly define subskills clearly here, and leave detailed descrption in the appendix.} \PX{Updated}
In comparison, the subskills involved are: (i) Compare the most significant
differing digit; (ii) output ‘=’ if all digits are equal. In sorting, the subskills are: (i) First compare integer length; (ii) if length is tied, compare the most significant differing digit.
See 
Table~\ref{tab:comparison_subskills} and Table~\ref{tab:sorting_subskills} for more detailed subskills definitions for comparison and sorting. We note that for both tasks, the desired rule is hierarchical: later subskills should be invoked only when earlier ones fail to resolve the answer.

% \PX{ Shall we define subskills using sentences here, and remove Table 2. For comparison, the subskills are "Compare the most significant
% differing digit; output ‘=’ if all
% digits are equal.". For sorting the subskills are "First compare integer length; if
% length is tied, compare the 1st
% digit, then the 2nd, 3rd, and 4th
% digits only as needed." A more detailed subskill definition for comparison and sorting is Table~\ref{tab:comparison_subskills} and Table~\ref{tab:sorting_subskills}}

% \input{Tables/subskills_overview.tex}

\subsection{Comparison: parallel acquisition can create temporary competition}

For the comparison task, we evaluate learning dynamics using \emph{contrast pairs}: test examples $(a,b)$ where the two numbers differ at only one digit position. These controlled shifts let us separate the model's competence at each digit place. Figure~\ref{fig:comparison_staircase_test} in the appendix shows no clear sequential order on same-distribution tests, suggesting that multiple comparison-related subskills are acquired in parallel.

\begin{wrapfigure}[20]{R}{0.5\textwidth}
    \centering
    \includegraphics[width=0.952\linewidth]{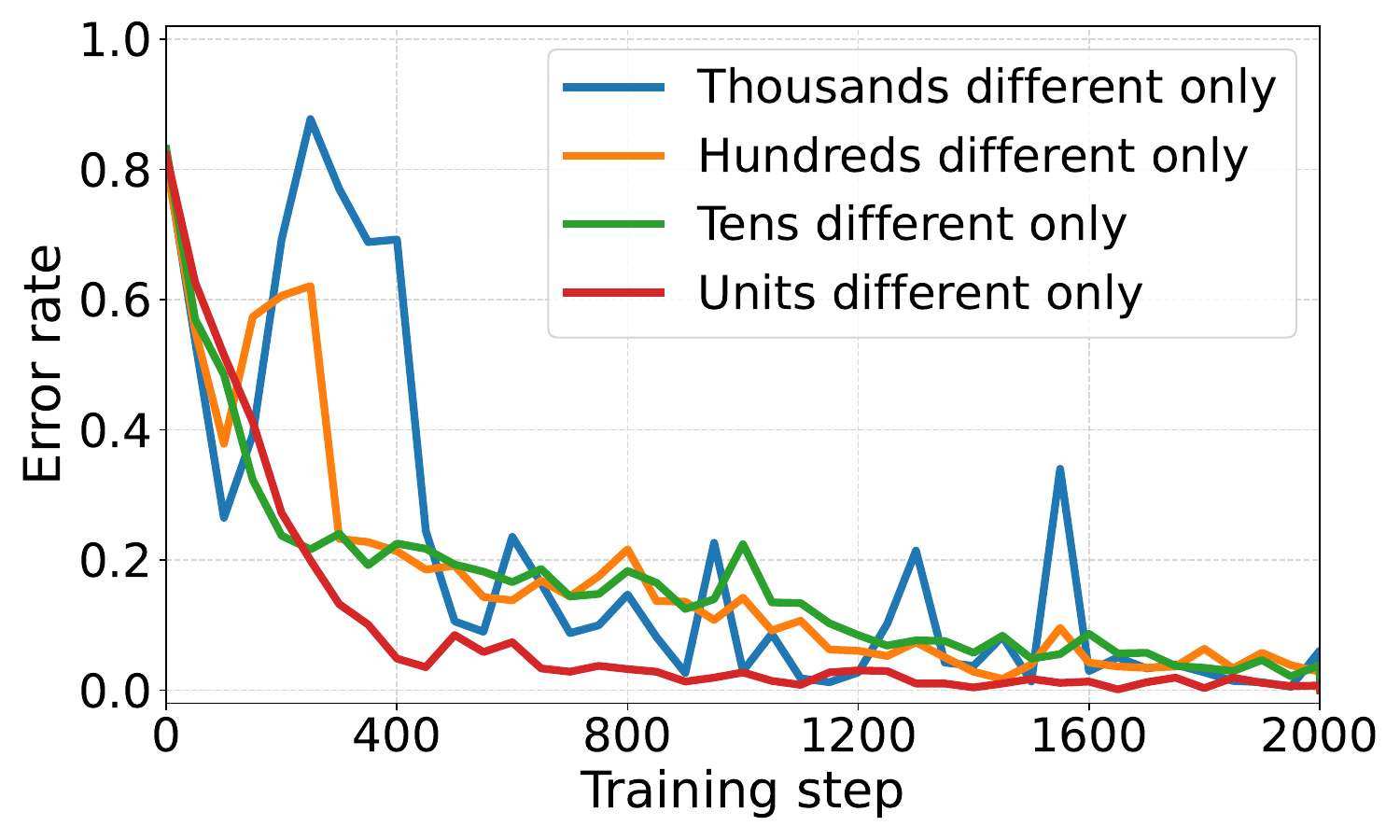}
    \caption{\textbf{Contrast pairs reveal competition between learning  subskills for the comparison task}. 
    %We use controlled test data to evaluate the model's learning dynamics of skills. \textbf{Left}: using test data of the same distribution as training data (grouped by NCID), we find that all skills are learned simultaneously. \textbf{Right}: 
    We use 4 test groups of contrast pairs (differing in single digits) to evaluate model's ability to distinguish similar integers. The increase of errors at 200 steps, particularly at thousands-place digits, suggests that learning ``='' temporarily interfered  learning comparison of thousands-place digits.}
    \label{fig:comparison-results}
    % \vspace{-0.1cm}
\end{wrapfigure}

The contrast pairs reveal what this parallelism costs. Figure~\ref{fig:comparison-results} shows a non-monotone trend for the thousands-place test group: the model first learns to compare the highest differing digit, then temporarily becomes worse when it starts learning to predict ``='' from the other identical digits, and only later reconciles the two behaviors. In other words, acquiring one useful subskill can briefly interfere with another. This is a minimal example of \emph{skill competition}: the model is not simply accumulating independent capabilities, but is reorganizing partially overlapping behaviors whose interaction can be antagonistic before it becomes stable.

\subsection[Sorting: parallel skills yield mixing errors and brittleness under distribution shift]{Sorting: parallel skills yield mixing \\ errors and brittleness under distribution \\ shift}
\label{sec:sorting}

Sorting requires a larger hierarchy of subskills, including identifying integer length and applying digit comparisons at the correct position. Figure~\ref{fig:sorting_subskill_order} shows that these subskills are again learned in parallel rather than in the clean hierarchical order that the task semantics would suggest.

This parallel acquisition produces a distinctive failure mode. The model often fails to preserve input integers as coherent wholes and instead outputs new integers assembled from fragments of different inputs. Table~\ref{tab:mixing} shows the dominant forms of these \emph{mixing errors}. These failures are especially informative because they do not look like simple arithmetic mistakes; they reveal a brittle composition in which partially learned subskills are recombined inconsistently.

% \begin{table}[t]
% \centering
% \caption{\textbf{Two dominant mixing error types in sorting.} Among the four input integers, two numbers, say 4-digit ${\reda}={\reda}_1 {\reda}_2 {\reda}_3 {\reda}_4$ and ${\blueb}={\blueb}_1 {\blueb}_2 {\blueb}_3 {\blueb}_4$, form recombined new integers in the output.}\label{tab:mixing}
% \begin{tabular}{c|cc}
%           & Error case 1 & Error case 2 \\ \hhline{=|==}
% Swapping  &     $ {\reda}_1 {\reda}_2 {\reda}_3 {\blueb}_4$, ${\blueb}_1 {\blueb}_2 {\blueb}_3 {\reda}_4$              &   ${\reda}_1 {\reda}_2 {\blueb}_3 {\blueb}_4$, ${\blueb}_1 {\blueb}_2 {\reda}_3 {\reda}_4$         \\
% Repeating &    $ {\reda}_1 {\reda}_2 {\reda}_3 {\blueb}_4$, ${\blueb}_1 {\blueb}_2 {\blueb}_3 {\blueb}_4$        &     ${\reda}_1 {\reda}_2 {\blueb}_3 {\blueb}_4$,  ${\blueb}_1 {\blueb}_2 {\blueb}_3 {\blueb}_4$         
% \end{tabular}
% \end{table}

\begin{table}[t]
  % \vspace{-0.09cm}
  \caption{\textbf{Two dominant mixing error types in sorting.} Among the four input integers, two numbers, say 4-digit ${\reda}={\reda}_1 {\reda}_2 {\reda}_3 {\reda}_4$ and ${\blueb}={\blueb}_1 {\blueb}_2 {\blueb}_3 {\blueb}_4$, form recombined new integers in the output.}
  \label{tab:mixing}
  \begin{center}
    \begin{small}
      \begin{sc}
        \begin{tabular}{lcccr}
          \toprule
            & Error case 1 & Error case 2 \\ 
          \midrule
Swapping  &     $ {\reda}_1 {\reda}_2 {\reda}_3 {\blueb}_4$, ${\blueb}_1 {\blueb}_2 {\blueb}_3 {\reda}_4$              &   ${\reda}_1 {\reda}_2 {\blueb}_3 {\blueb}_4$, ${\blueb}_1 {\blueb}_2 {\reda}_3 {\reda}_4$         \\
Repeating &    $ {\reda}_1 {\reda}_2 {\reda}_3 {\blueb}_4$, ${\blueb}_1 {\blueb}_2 {\blueb}_3 {\blueb}_4$        &     ${\reda}_1 {\reda}_2 {\blueb}_3 {\blueb}_4$,  ${\blueb}_1 {\blueb}_2 {\blueb}_3 {\blueb}_4$   \\
          \bottomrule
        \end{tabular}
      \end{sc}
    \end{small}
  \end{center}
  \vspace{-0.5cm}
\end{table}

\begin{figure}[t]
% \begin{wrapfigure}{r}{0.5\textwidth}
    \centering
        \centering
        \includegraphics[width=0.75\textwidth]{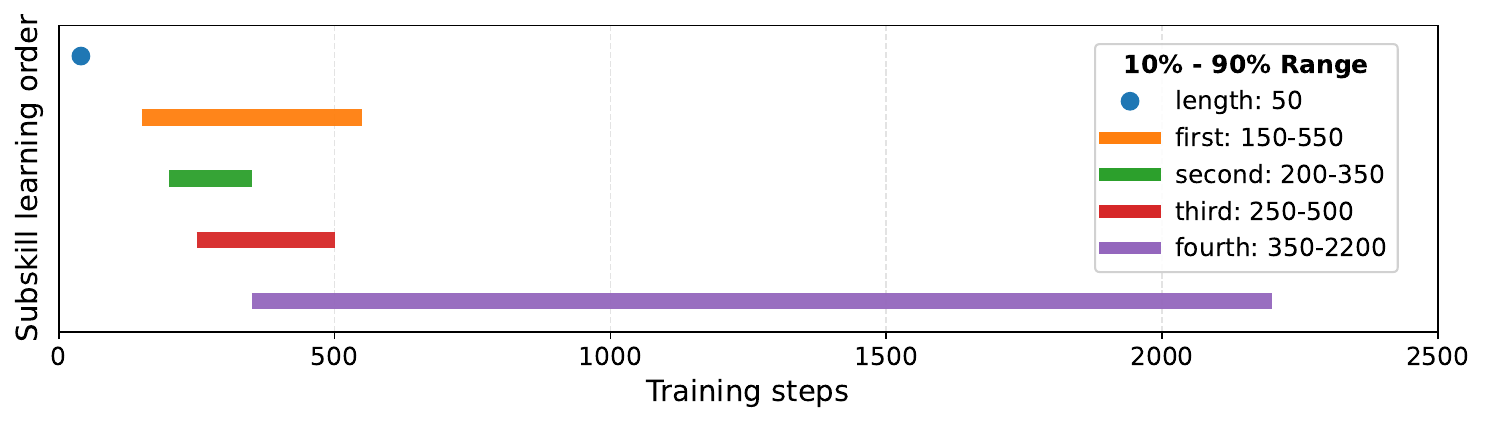} 
    \caption{%\textbf{Transformers learn the crude length first and the rest of digits in parallel}. We train NanoGPTs using doubly balanced dataset and use multiple test datasets to evaluate for what training step range these subskill accuracies are between 10\% and 90\%. In sorting, models learn the crude length (where to output the delimiter, e.g. comma) early in training, and then learn the rest of digits mostly concurrently. %\PX{Updated the figure.}
    \textbf{Progress bars show  concurrent learning of multiple
skills for sorting.} We decompose sorting into five skills: crude lengths of integers, and four digit-wise ranks. The model learns the crude length of input numbers early in training, and then learn the four digit-wise skills concurrently.
    }
    \vskip -0.2in
    \label{fig:sorting_subskill_order}
\end{figure}
% \end{wrapfigure}

To test whether such competition matters for robustness, we construct sorting examples where two comparison cues either \emph{agree} or \emph{conflict}. Table~\ref{tab:conflict} shows that the model performs markedly worse in the conflicting group, indicating that oppositely directed subskills amplify one another's interference. This makes the connection to shattered compositionality explicit: non-human parallel acquisition does not just change \emph{how} the model learns, but creates concrete brittleness under distribution shift.

\begin{table*}[t]
  \caption{\textbf{Evaluating the interaction of two skills for the sorting task}. Both groups (as test data under distribution shifts) yield higher error rates compared with the final test error (1.7\%). Furthermore, the model produces mixing errors (swapping or repeating) significantly more often on conflicting group than on agreeing group, showing two skills with opposite effects worsen performance.
  %\YZ{Defer the dynamics of repeating errors to the appendix.}
  }
  \label{tab:conflict}
  \vskip -0.1in
  \begin{center}
    \begin{sc}
      % Scales the tabular environment to the linewidth
      \resizebox{\linewidth}{!}{
        \begin{tabular}{lcccc}
          \toprule
          Group & Example input & Condition & Mixing Error & Error SD \\
          \midrule  
          
            Conflicting & \texttt{1000,6{\color{blue}5}8{\color{red}9},6{\color{blue}6}8{\color{red}2},9999} & conflicting in $({\color{blue}b}_2,{\color{blue}c}_2)$ and $({\color{red}b}_4,{\color{red}c}_4)$ & $6.63\%$ & $0.46\%$ \\
            Agreeing & \texttt{1000,6{\color{blue}5}8{\color{blue}2},6{\color{blue}6}8{\color{blue}9},9999} & agreeing in $({\color{blue}b}_2,{\color{blue}c}_2)$ and $({\color{blue}b}_4,{\color{blue}c}_4)$ & $3.61\%$ & $0.58$\% \\

          \bottomrule
        \end{tabular}
      }
    \end{sc}
  \end{center}
  \vskip -0.1in
\end{table*}

\vspace{-0.1cm}
%\section{Robustness and the Persistence of Shattered Compositionality}
\section{Persistence of Shattered Compositionality}
\label{sec:robustness}

A natural question is whether shattered compositionality is merely an artifact of one architecture, one training recipe, or one random initialization. We therefore test a range of standard variations and interventions, including additional runs with different random seeds, different positional encodings, larger models, pretraining, scratchpad-based reasoning.

\subsection{Alternative setups: input format, positional encoding, output organization, decoding, and random seeds}

\label{sec:robustness-alternative-setups}

The main arithmetic learning patterns are not fragile artifacts of a single implementation choice. For addition, the reverse-learning pattern is consistent across several alternative setups, including different positional encodings, input sequence formats, output digit permutations, greedy decoding, and different random seeds; see Appendix~\ref{appendix-addition}. For multiplication, the same qualitative behavior also holds under alternative output digit permutations and alternative number lengths; see Appendix~\ref{appendix_simple_mul}. For comparison, the contrast-pair interference pattern also persists across different random seeds; see Appendix~\ref{appendix-comparison}. Together, these checks suggest that shattered compositionality is not a brittle consequence of one particular token order, positional scheme, decoding rule, or random initialization.

% Together, these checks suggest that shattered compositionality is not caused by one particular token order, positional scheme, or decoding rule.

% \subsection{Alternative positional encodings and output organizations}

% The reverse-learning pattern in addition is not tied to one positional encoding. Figure~\ref{fig:addition_postional_encoding} shows that when we replace absolute positional embeddings with RoPE or T5-style relative biases, the model still learns higher-place digits before lower-place digits under reverse-format addition. \PX{If there is not enough space, we could also  move Figure 7 to Appendix.} Additional appendix results show the same qualitative behavior under alternative output organizations for addition and multiplication; see Appendix~\ref{sec:append-alt-1} and Appendix~\ref{appendix_simple_mul}. These checks suggest that the phenomenon is not a fragile artifact of one specific token order or positional scheme.

% \input{Figures/addition_positional_encoding_main}

\subsection{Does model scaling fix it?}

The non-human acquisition pattern persists under model scaling. We scale NanoGPT from roughly 10M parameters to 20M and 100M parameters, and we also fine-tune a pretrained Pythia-1B model on the same addition task. Figure~\ref{fig:scaling_results} shows that the larger models and the pretrained model still display the same qualitative reverse-learning pattern observed in the base experiment. In other words, more parameters and prior exposure to broad text corpora do not automatically induce the human procedural decomposition for addition.

% In other words, more parameters and prior exposure to broad text corpora do not automatically induce the human procedural decomposition for addition.

% \begin{figure}[t]
%     \centering
%     \begin{subfigure}{0.99\textwidth} % Adjust width as needed
%                 \centering
%         \includegraphics[width=\linewidth]{Figures/Images/digitwise_bars_subplot.pdf}
%     \end{subfigure}%
%     \hfill % Adds horizontal space between subfigures
%     \caption{
% \textbf{Effects of model scale and scratchpad on digit-wise error in addition.}
% Each panel reports digit-wise error rates (units, tens, hundreds, thousands) at a fixed training step.
% Panels (a)(b): pure model scaling when training NanoGPT from scratch. Panel (c): fine-tuning a pretrained Pythia-1B model.
% Panels (d)(e): training with different scratchpad (reasoning) formats. \PX{Do we want a bigger font size for the labels/ticks? I know it's a bit crowded to put all the five subfigures in a single line. But perhaps make them just a little bigger?}
% }
%     \label{fig:scaling-experiments}
% \end{figure}

\begin{figure*}[ht]
  %\vskip 0.2in
  \begin{center}
    \centerline{\includegraphics[width=0.8\textwidth]{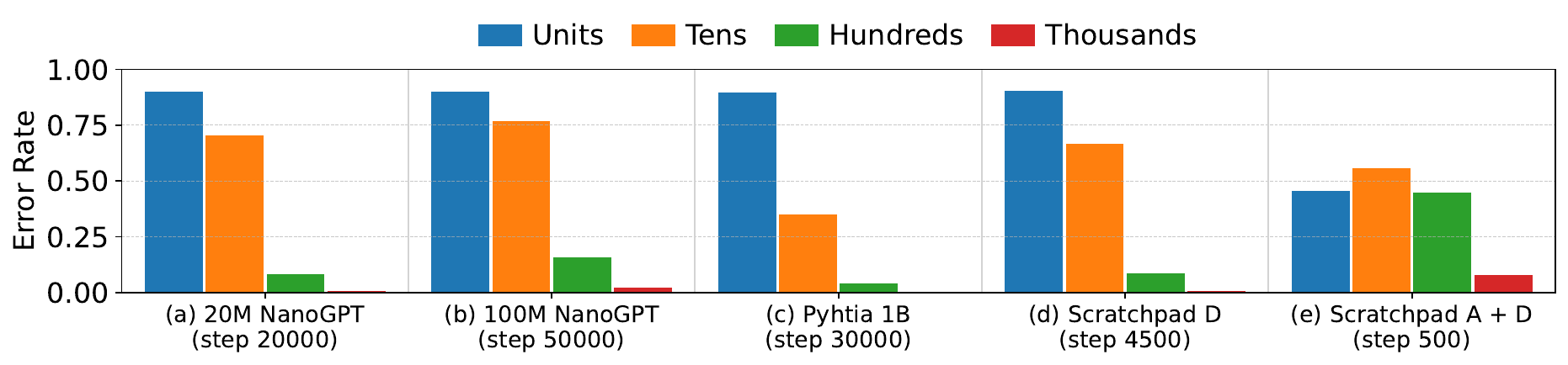}}
    \caption{
      \textbf{Effects of model scale and scratchpad on digit-wise error in addition}.
      Each panel reports digit-wise error rates (units, tens, hundreds, thousands) at a fixed training step.
Panels (a) and (b): pure model scaling when training NanoGPT from scratch. Panel (c): fine-tuning a pretrained Pythia-1B model.
Panels (d) and (e): training with different scratchpad (reasoning) formats. 
% \PX{Do we want a bigger font size for the labels/ticks? I know it's a bit crowded to put all the five subfigures in a single line. But perhaps make them just a little bigger?}
% \YZ{NanoGPT instead of nanogpt. D Scratchpad, D+A Scratchpad instead of Reasoning A, Reasoning A+D.}
    }
    \label{fig:scaling_results}
    \vskip -0.2in
  \end{center}
\end{figure*}

\subsection{Does scratchpad fix it?}

Reasoning traces change the training distribution, so they are a natural candidate intervention. We therefore train on two scratchpad formats that explicitly decompose the addends by place value:
\begin{align*}
        811& +856+239+313 = 100(8+8+2+3) +  10(1+5 \\&+3+1) +   1(1+6+9+3) = 2219
        \tag{D scratchpad} \\
        811&+856+239+313 =  100(8+8+2+3) +  10(1+5\\&+3+1) +  1(1+6+9+3) \notag
    =100(21)+10(10)+1(19) = 2219 \tag{D+A scratchpad}
\end{align*}

Scratchpad helps neither by default nor uniformly. As shown in Figure~\ref{fig:scaling_results} panels (d) and (e), the exact dynamics depend strongly on the scratchpad format, but the resulting trajectories still fail to recover a clean human-like learning order. Scratchpad therefore reshapes the route the model takes through training, yet it does not by itself eliminate shattered compositionality.

\vspace{-0.1cm}
%\section{External Robustness Test: Implications for Modern LLMs}
%\section{Arithmetic brittleness: Implications for Modern LLMs}
\section{External Validation of Arithmetic Brittleness for Modern LLMs}
\label{sec:external-robustness}

The synthetic tasks above let us isolate mechanism and consequences. We now ask whether a qualitatively similar brittleness appears in modern LLMs under a mild distribution shift.
%We now evaluate LLMs suffer from a qualitatively similar consequence of brittleness under a mild distribution shift.

\textbf{Defining a distribution shift for evaluation.} In Section~\ref{sec:parallel-competition}, 
%contrast pairs are sampled from a different distribution from training data, which is a type of distribution shift. Since LLMs are not based on digit-wise tokenization as in our synthetic setting, we consider a different distribution shift: increasing the number of operands in arithmetic. We will directly evaluate a variety of LLMs on such distribution shift instead of training them.
contrast pairs are sampled from a different distribution than training data, which is one form of distribution shift. Since modern LLMs are not evaluated through digit-wise tokenization as in our synthetic setting, we instead consider a prompt-level compositional shift: increasing the number of operands in arithmetic word problems. We directly evaluate pretrained LLMs under that shift rather than retraining them.

\subsection{Clause-extended GSM8K}

We curate 50 GSM8K templates by selecting questions whose solution involves multi-operand addition and then extending each question with additional clauses. For example, a template may begin with ``Billy sells DVDs. He has 8 customers on Tuesday. His first 3 customers buy one DVD each,'' and we append $1$--$6$ extra clauses of the form ``his next $m$ customers buy $k$ DVDs each.'' These edits preserve the general narrative style of GSM8K while increasing the number of operands in the final summation. Details of the template construction are given in Appendix~\ref{sec:append-clauses}.

% \begin{figure}[t]
%     \centering
%     \begin{subfigure}{0.99\textwidth} % Adjust width as needed
%                 \centering
%         \includegraphics[width=\linewidth]{Figures/Images/LLM_50_questions.pdf}
%     \end{subfigure}%
%     \hfill % Adds horizontal space between subfigures
%     \caption{
% \textbf{LLMs are susceptible to distribution shifts (extra clauses) of GSM8K data.}
% We evaluate 10 open-source LLMs on 50 curated templates built on selected questions from GSM8K datasets. Each template admits $k$ new clauses ($k=1,\ldots,6$) as addends in a reasoning step. We find that both instruct-finetuned and reasoning-finetuned models have decreasing accuracy as we introduce more clauses. %There is a clear trend that as we increase the number of added clauses, the performance worsens for each model.
% }
%     \label{fig:LLM_50_questions}
% \end{figure}

\begin{figure*}[ht]
  %\vskip -0.2in
  \begin{center}
    \centerline{\includegraphics[width=0.95\textwidth]{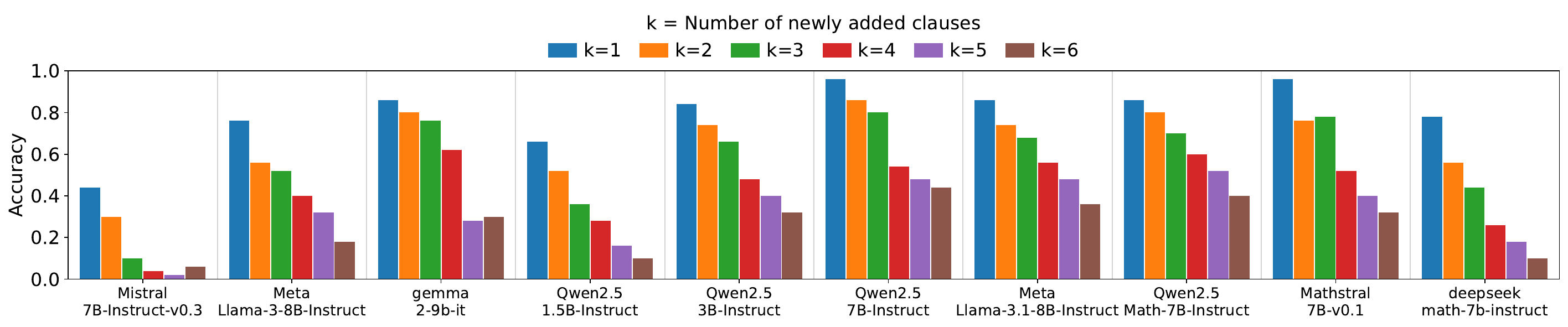}}
    \caption{
      \textbf{LLMs are susceptible to distribution shifts (extra clauses) of GSM8K data}.
      %\RU{This figure is eating up a lot of vertical space. Could we try text-wrapping the X axis text into 1-2 horizontal lines, instead of the long diagonal line?} \PX{Good idea. I've updated.}
      We evaluate 10 LLMs on 50 curated templates built on selected questions from GSM8K datasets. Each template admits $k$ new clauses ($k=1,\ldots,6$) as operands. %in a reasoning step. 
      We find that both instruct-finetuned and reasoning-finetuned models have decreasing accuracy as we introduce more clauses. 
    } 
    \vskip -0.2in
    \label{fig:LLM_50_questions}
  \end{center}
\end{figure*}

\subsection{Connection to shattered compositionality}

Figure~\ref{fig:LLM_50_questions} shows that all 10 evaluated open-source LLMs deteriorate as the number of added clauses increases, including instruction-tuned and math-tuned models. A common error is omitting or repeating a term in the final summation (Appendix~\ref{gsm-error-type}), resembling the mixing errors in Section~\ref{sec:parallel-competition}. While this does not prove that LLMs implement the exact same mechanisms as our small transformers, it supports the broader picture: models may rely on brittle, non-human compositions that work well for in-distribution data but fail when the compositional demand is mildly perturbed.

% Figure~\ref{fig:LLM_50_questions} shows that all 10 evaluated open-source LLMs deteriorate as the number of added clauses increases, including instruction-tuned and math-tuned models. One common error type is a missing or repeated term in the final summation (Appendix~\ref{gsm-error-type}), which bears similarity to the mixing errors we identified in Section~\ref{sec:parallel-competition}. This does not prove that LLMs implement the exact same mechanisms as our small transformers. However, it is qualitatively consistent with the broader picture: models may rely on brittle, non-human compositions that work well for in-distribution data but fail when the compositional demand is mildly perturbed. We therefore present the GSM8K results as an external robustness test of the same \emph{pattern}, rather than as a one-to-one mechanistic identification.

\vspace{-0.2cm}
\section{Related Work}
\label{sec:related-work}

\textbf{Static analyses of arithmetic and reasoning.} Mechanistic interpretability identifies task-relevant components in trained neural networks~\citep{elhage2021mathematical, olsson2022context, hendel2023context,  doi:10.1073/pnas.2417182122}.
Studies have shown that specific attention heads, MLPs, or low-dimensional activation subspaces coordinate computation~\citep{stolfo2023mechanistic, zhang2024interpreting, zhou2024pretrained, hu2025understanding}. 
Notably,~\cite{lindsey2025biology} identify complementary mechanisms for addition---one encoding coarse magnitude and another handling unit-place computation.~\cite{nikankin2025arithmetic} show that arithmetic is implemented via a ``bag of heuristics'', with constituent skills represented in parallel rather than as a unified algorithm.
%These results suggest that arithmetic behavior arises from multiple interacting mechanisms rather than a single procedural rule, consistent with non-human internal representations.
However, nearly all such analyses operate on \emph{static, trained models}, so they cannot explain how mechanisms emerge, what determines learning order, or how partially learned skills cause downstream failures.
% However, nearly all such analyses operate on \emph{static, trained models}. Thus, they cannot explain how these mechanisms emerge during training, how learning order is determined, or how interactions between partially learned skills cause downstream failures.

% \paragraph{Mechanistic interpretability (MI).} A growing literature \citep{elhage2021mathematical, olsson2022context, hendel2023context,  doi:10.1073/pnas.2417182122} on MI aims to uncover the internal mechanism of neural networks by identifying task-relevant components of a model. For basic arithmetic tasks, \cite{stolfo2023mechanistic, zhang2024interpreting, zhou2024pretrained} showed that specific attention heads and MLPs coordinate to carry out the calculation, likely through a low-dimensional subspace \cite{hu2025understanding}. Pertinent to our findings, \cite{lindsey2025biology} discovered two complementary features for addition: one feature determines the rough range of the solution, while the other handles unit-place addition similar to a lookup table, which suggests the learned mechanism is different from human's rules. Further, \cite{nikankin2025arithmetic} showed that LLMs use a ``bag of heuristics'' to solve arithmetic, meaning that constituent skills are represented in a parallel fashion, which is consistent with our findings about non-human learning order. However, most MI analysis focuses on static pre-trained LLMs due to prohibitive training costs, whereas our results shed light on a model's learning dynamics.

\textbf{Learning dynamics on arithmetic without decomposition.}
Synthetic arithmetic tasks have long been used to probe learning dynamics in neural networks.
Grokking is identified in \cite{power2022grokking}, where generalization emerges abruptly after extended overfitting, inspiring follow-up work on representation dynamics and weight evolution~\citep{nandaprogress, liu2022towards, mallinar2024emergence}. 
Other studies examine how input format, curriculum design, or recursive training affect arithmetic generalization~\citep{lee2024teaching, lee2025selfimproving}.
While these works characterize \emph{when} models generalize or \emph{whether} specific training strategies help, they typically treat arithmetic as a monolithic task. %They do not decompose problems into constituent subskills or track how those subskills are acquired over time.
Consequently, existing training-dynamics analyses cannot explain learning order or interference effects—phenomena central to shattered compositionality.

% \paragraph{Training dynamics for arithmetic.} \cite{power2022grokking} identified an intriguing phenomenon known as \textit{grokking}, where a transformer trained on modular arithmetic initially overfits to the training data before suddenly learns to generalize on test data. This phenomenon inspired a line of papers on understanding the dynamics of model's weights \citep{nandaprogress, liu2022towards, mallinar2024emergence}. Synthetic arithmetic tasks are also used to study the impact of input format \citep{lee2024teaching}, recursive training \citep{lee2025selfimproving}, etc. Compared with existing literature, we decomposed arithmetic tasks into base skills and showed that reverse/parallel skill acquisition is an intrinsic learning characteristic of transformers.

\textbf{Skill composition and its limits in language models.}
Recent theoretical and empirical work suggests that LLMs solve complex tasks by composing simpler skills.~\cite{arora2023theory} propose skill graphs whose nodes are reusable subroutines, while~\cite{lindsey2025biology} present attribution-graph analyses showing that learned compositions can involve cooperative or competing mechanisms. At the same time, several studies argue that statistical matching fundamentally limits LLM compositional generalization~\cite{dziri2023faith, abbe2024far, abedsoltan2025task, wang2025learning}.
However, this line of work %primarily characterizes \emph{what} skill compositions look like after training, or \emph{why} certain compositions may fail in principle. It 
does not explain how non-human compositions arise from gradient-based training, nor how competition between partially learned skills leads to brittleness under distribution shift. Our work addresses these by directly analyzing the emergence and interaction of subskills during training.

% \paragraph{Skill compositions of LLMs.} 
% LLMs are believed to learn skill graphs consisting of base skills in order to solve compositional tasks \cite{arora2023theory}. \cite{lindsey2025biology} presented a diverse set of case studies where attribution graphs extracted from trained LLMs. show that learned skill compositions often involve qualitatively different (cooperative/competing) mechanisms. Regarding the limit of compositionality, \cite{dziri2023faith} suggested that statistical matching fundamentally limits LLMs to learn complicated compositions such as multiplication (without CoT), and \cite{abbe2024far} introduced a complexity notion to characterize such compositional limitations, among others \cite{abedsoltan2025task, wang2025learning}. In this work, we enriched the understanding of skill compositions through the emergence of skill acquisition and skill competition from training. Our analysis also suggests that the non-human skill compositions may be the cause of brittleness of LLMs under distribution shifts.

%In summary, prior work studies mechanisms, learning dynamics, and skill composition largely in isolation, whereas we connect all three by analyzing how compositional subskills \emph{emerge and compete} during training.

% \vspace{-0.2cm}
\section{Limitations and Future Directions}
\label{sec:conclusion}

In this work, we study a failure mode of compositionality through the training dynamics of small transformers on synthetic tasks, using carefully designed test data and model-agnostic metrics. 
Important next steps are to develop practical LLM diagnostics and test interventions, such as curriculum design, that reduce shattered-composition brittleness. 
Our analysis is intentionally black-box: it reveals non-human skill acquisition and its consequences, but not the circuits implementing them. A useful future direction is to combine white-box mechanistic analysis with the training-dynamics view, so that model operations, data distribution, and learning trajectories can be studied together.

\begin{ack}
Y.Z.~is partially supported by NSF-DMS grant 2412052 and by a Coefficient Giving (formerly Open Philanthropy) grant. We also thank Zaid Harchaoui and Abhishek Panigrahi for helpful discussions.
\end{ack}

\bibliographystyle{unsrt}
\bibliography{refs}

%%%%%%%%%%%%%%%%%%%%%%%%%%%%%%%%%%%%%%%%%%%%%%%%%%%%%%%%%%%%%%%%%%%%%%%%%%%%%%%
%%%%%%%%%%%%%%%%%%%%%%%%%%%%%%%%%%%%%%%%%%%%%%%%%%%%%%%%%%%%%%%%%%%%%%%%%%%%%%%
% APPENDIX
%%%%%%%%%%%%%%%%%%%%%%%%%%%%%%%%%%%%%%%%%%%%%%%%%%%%%%%%%%%%%%%%%%%%%%%%%%%%%%%
%%%%%%%%%%%%%%%%%%%%%%%%%%%%%%%%%%%%%%%%%%%%%%%%%%%%%%%%%%%%%%%%%%%%%%%%%%%%%%%
\newpage
\appendix
\onecolumn

\section{Experiment Settings}

\subsection{Models and training}

We performed our primary experiments using a Transformer based on the NanoGPT architecture \citep{nanogpt}. Our main model consists of 6 layers and 6 attention heads, with an embedding dimension of 384.
We use cosine learning rate scheduler with a linear warmup. We used the same training hyperparameters for all NanoGPT and scaled NanoGPT experiments. Table \ref{tab:hyperparams_comparison} details these values.

% We performed our experiments on transformers based on the NanoGPT architecture \citep{nanogpt}. Unless specified otherwise, our model has 6 layers, 6 head, using embedding dimension of 384. 
% For all the experiments done on NanoGPT and scaled NanoGPT model we used same training hyperparameters. Table \ref{tab:hyperparameters} contains detailed values of hyperparameters. 
% are detailed in Table \ref{tab:hyperparameters}during training  than fine-tuning on Pythia-1B we used the same hyper
% During training, we use a batch size of 512, a learning rate of 1e-3, beta1 = 0.9, beta2 = 0.98. See Table~\ref{tab:hyperparameters} for the list of key hyperparameters.

%\subsection{Training}
%\DS{Do we need to add this detail?}
%The block size is subject to specific tasks, varying with the length of one single example for that task. For example, we used a block size of 32 for 4 operand addition.

\begin{table}[h]
    \centering
    \caption{Architectural specifications for the scaled NanoGPT models. Training hyperparameters remained consistent across all sizes.}
    \label{tab:model_scaling}
    \begin{tabular}{l c c c}
        \toprule
        \textbf{Parameter} & \textbf{10M (Baseline)} & \textbf{20M} & \textbf{100M} \\
        \midrule
        Layers ($n_{layer}$) & 6 & 12 & 14 \\
        Attention Heads ($n_{head}$) & 6 & 12 & 12 \\
        Embedding Dim ($d_{model}$) & 384 & 384 & 768 \\
        Total Parameters & $\sim$10M & $\sim$20M & $\sim$100M \\
        \bottomrule
    \end{tabular}
\end{table}

\begin{table}[h]
    \centering
    \caption{Hyperparameters for the NanoGPT (training from scratch) and Pythia-1B (fine-tuning) experiments.}
    \label{tab:hyperparams_comparison}
    \begin{tabular}{l c c}
        \toprule
        \textbf{Hyperparameter} & \textbf{NanoGPT (Default)} & \textbf{Pythia-1B (Fine-tune)} \\
        \midrule
        Optimizer & AdamW & AdamW \\
        LR Schedule & Cosine & Cosine \\
        Learning Rate & $1 \times 10^{-3}$ & $1 \times 10^{-4}$ \\
        Min Learning Rate & $1 \times 10^{-4}$ & $1 \times 10^{-5}$ \\
        Weight Decay & 0.1 & 0.1 \\
        Betas ($\beta_1, \beta_2$) & (0.9, 0.98) & (0.9, 0.98) \\
        Warmup Iterations & 100 & 500 \\
        LR Decay Iterations & Max Iters & Max Iters \\
        Gradient Clipping & 1.0 & 1.0 \\
        Batch Size & 512 & 512 \\
        Dropout & 0.2 & 0.1 \\
        \bottomrule
    \end{tabular}
\end{table}

\subsection{Data generation for comparison task}\label{sec:append-data-gen-comparison}

Let $a$ and $b$ be two 4-digit integers with decimal representations $a = a_1 a_2 a_3 a_4$ and $b = b_1 b_2 b_3 b_4$. We partition the training data by the \textit{Number of Controlled Identical Digits} (NCID) into 5 equiprobable groups indexed by $k \in \{0, 1, 2, 3, 4\}$.

For a given group $k$, the data generation process is defined as follows:

\begin{enumerate}
    \item \textbf{Prefix Generation ($1 \le j \le k$):} The first $k$ digits are forced to be identical. 
    To ensure that $a$ and $b$ are valid 4-digit integers, the leading controlled digit is sampled from 
    $\{1,\dots,9\}$ whenever $k \ge 1$, while the remaining controlled digits are sampled from 
    $\{0,\dots,9\}$. That is, we sample
    \[
    p_1 \sim \mathcal{U}\{1,\dots,9\}, 
    \qquad 
    p_j \sim \mathcal{U}\{0,\dots,9\} \quad \text{for } j=2,\dots,k,
    \]
    and assign
    \[
    a_j = b_j = p_j \quad \text{for } j = 1,\dots,k.
    \]
    
    \item \textbf{Suffix Generation ($j > k$):} The remaining $4-k$ digits are sampled uniformly and independently for both numbers. 
    For every unconstrained digit position $j>k$, we sample
    \[
    a_j,b_j \sim \mathcal{U}\{0,\dots,9\},
    \]
    except when $j=1$, where the sampling range is $\{1,\dots,9\}$.
\end{enumerate}

This sampling strategy ensures that the dataset contains a uniform distribution of ``generative constraints," preventing the model from encountering only trivial comparisons (e.g., where the first digits differ) or only identical pairs.

Table~\ref{tab:ncid_examples} illustrates examples of pairs $(a, b)$ generated from each NCID group.

\begin{table}[h]
\centering
\caption{Examples of generated pairs $(a, b)$ for each NCID group $k$. The bolded digits indicate the guaranteed controlled prefix where $a_j = b_j$.}
\label{tab:ncid_examples}
\begin{tabular}{@{}c l l l l@{}}
\toprule
\textbf{NCID ($k$)} & \textbf{Constraint} & \textbf{Generation Logic} & \textbf{Example $a$} & \textbf{Example $b$} \\ \midrule
0 & No guaranteed match & 
\makecell[l]{
$a_1,b_1 \sim \mathcal{U}\{1,\dots,9\}$; \\
$a_{2:4},b_{2:4} \sim \mathcal{U}\{0,\dots,9\}^3$
}
& $7\,2\,9\,1$ & $4\,8\,1\,3$ \\
1 & $a_1 = b_1$ & $a_1=b_1$; others random & $\mathbf{3}\,5\,0\,2$ & $\mathbf{3}\,1\,9\,8$ \\
2 & $a_{1:2} = b_{1:2}$ & $a_{1:2}=b_{1:2}$; others random & $\mathbf{9\,2}\,4\,7$ & $\mathbf{9\,2}\,0\,5$ \\
3 & $a_{1:3} = b_{1:3}$ & $a_{1:3}=b_{1:3}$; last digit random & $\mathbf{6\,1\,8}\,3$ & $\mathbf{6\,1\,8}\,9$ \\
4 & $a = b$ & $a_{1:4}=b_{1:4}$ & $\mathbf{5\,4\,2\,1}$ & $\mathbf{5\,4\,2\,1}$ \\ \bottomrule
\end{tabular}
\end{table}

\subsection{Data generation for sorting task}\label{sec:append-data-gen-sorting}

To enhance the model's ability to sort both numbers of different lengths and close numbers like $6983,6981,6988,6987$, we adopt the "doubly balanced" sampling strategy. Specifically, in each example, each number has 0.5 probability to be 3-digit, 0.5 probability to be 4-digit. Once we have determined the length of all input numbers in that example, we randomly draw the NCID group from $k \in\{0,1,2\}$, each with probability $1/3$. The prefix generation and suffix generation are the same as the procedure described in Section~\ref{sec:append-data-gen-comparison}, except that in suffix generation if a number is 3-digit, we only need to sample the remaining $3-k$ digits, whereas for a 4-digit number, we need to sample the remaining $4-k$ digits.

% \begin{enumerate}
%     \item \textbf{Level 1: No closeness enforcement} Each 3-digit number is uniformly drawn from 100 to 999. Each 4-digit number is uniformly drawn from 1000 to 9999.
%     \item \textbf{Level 2: Same 1st digit} The single common digit is uniformly drawn from 1 to 9. The rest 2 digits for each 3-digit number is uniformly drawn from 00 to 99. The rest 3 digits for each 4-digit number is uniformly drawn from 000 to 999. (e.g. 581,5099,581,5285)
%     \item \textbf{Level 3: Same 1st \& 2nd digit} The two common digits are uniformly drawn from 10 to 99. The only rest digit for each 3-digit number is uniformly drawn from 0 to 9, while the two rest digits for each 4-digit number is uniformly drawn from 00 to 99. (e.g. 255,250,2563,258)
% \end{enumerate}

We list some training examples from our training data in Table~\ref{tab:sorting_doubly_bal_examples}.
\begin{table}[h]
\centering
\caption{Examples of doubly balanced datasets for the sorting task. $l(a,b,c,d)$ denotes the length of the four numbers.}
\label{tab:sorting_doubly_bal_examples}
\begin{tabular}{@{}c c p{4.6cm} l@{}}
\toprule
\textbf{Length} & \textbf{NCID ($k$)} & \textbf{Constraint} & \textbf{Example input} \\
\midrule
$(4,4,4,4)$ & 0 &
$l(a,b,c,d)=(4,4,4,4)$ &
$3888,2374,8914,1858$ \\

$(3,4,3,4)$ & 1 &
\begin{tabular}[c]{@{}l@{}}
$l(a,b,c,d)=(3,4,3,4)$ \& \\
$a_1=b_1=c_1=d_1$
\end{tabular} &
$581,5099,581,5285$ \\

$(3,3,4,3)$ & 2 &
\begin{tabular}[c]{@{}l@{}}
$l(a,b,c,d)=(3,3,4,3)$ \& \\
$a_{1:2}=b_{1:2}=c_{1:2}=d_{1:2}$
\end{tabular} &
$255,250,2563,258$ \\
\bottomrule
\end{tabular}
\end{table}

\subsection{Output generation}\label{sec:append-metrics}

\paragraph{Sampling for output generation.} 
For all the different experiments we used $0.8$ as a default temperature for sampling. Apart from sampling with temperature $0.8$, we also experimented with greedy decoding to validate our findings. Figure~\ref{fig:addition_greedy} reports the results for the addition task with greedy decoding, which are similar to our original results Figure~\ref{fig:1}.

%\section{Details of experiment setup}

%\subsection{Data generation}

%\paragraph{Sorting.} We used uniform sampling in our initial experiment where we uniformly and independently drew integers from $\{100,101,\ldots,9999\}$ to generate $a,b,c,d$ in the sorting task. Results are pretty bad blablabla; see Figure~\ref{}, so we used doubly balanced sampling instead....

\section{Additional details about evaluation metrics}

\subsection{Digit-wise Error}

%\YZ{Why do we have decimal digits? I don't understand this part.}

%\PX{Previously, the description is a bit confusing. I've revised this part. Here, \(W\) (for the convenience of writing the math formula for \(E_i(t)\)) is meant to take the longest length among all actuals. In our tests, the actual addition results of some test examples may be 3-digit while others are 4-digit. We take \(W\) to be 4.}

Let a collection of $N$ test examples be indexed by $k = 1, \dots, N$.
For each example $k$, let $a^{(k)}$ denote the gold (actual) value
and $p^{(k)}_{t}$ the model’s prediction at training step $t$.

Define the digit width $W$ to be the maximal digit length of these actuals:
\[
W = \max_{k} \left| a^{(k)} \right|,
\]
where \(\lvert\cdot\rvert\) denotes string length (number of digits). In particular, \(W\) equals 4 in our addition task.

For each \(k\), let \(\tilde a^{(k)}\) be the width-\(W\) representation of the actual obtained by left zero-padding the actual.  Before applying the same truncation-and-padding rule to model outputs, we first express each prediction in the standard integer order.  Thus, for plain-format outputs we set \(q_t^{(k)} = p_t^{(k)}\), while for reverse-format outputs we set \(q_t^{(k)} = \operatorname{rev}(p_t^{(k)})\), after removing any end-of-sequence marker.  We then truncate this standard-order string \(q_t^{(k)}\) to its rightmost \(W\) characters and left zero-pad it to width \(W\):
\[
\tilde a^{(k)} \;=\; \mathrm{zfill}_W\bigl(a^{(k)}\bigr), \qquad
\tilde p_t^{(k)} \;=\; \mathrm{zfill}_W\bigl(\operatorname{rightmost}_W(q_t^{(k)})\bigr).
\]
In particular, for reverse-format predictions the raw generated string is \emph{not} truncated from the right directly; it is first reversed back to standard order, so truncation removes over-generated high-order digits rather than the units digit.

Let \(d_i(x)\) denote the \(i\)-th digit from the right (e.g., \(i=0\) for units, \(i=1\) for tens, etc.) in the width-\(W\) string \(\tilde x\).  The digit-wise error rate at place \(i\) for training step \(t\) is 
\[
E_i(t) \;=\; \frac{1}{N}\sum_{k=1}^N \mathbf{1}\!\bigl( d_i(\tilde a^{(k)}) \neq d_i(\tilde p_t^{(k)}) \bigr), 
\qquad i=0,\ldots,W-1,
\]
where \(\mathbf{1}[\cdot]\) is the indicator function.  Thus \(E_i(t)\) measures the fraction of examples whose predicted \(i\)-th-from-right digit, after standard-order conversion, truncation, and padding, differs from the gold \(i\)-th-from-right digit.

\subsection{Mutual Information}
\label{appendix-MI-metric}
While the digit-wise error tells us where the model makes mistakes, it does not capture the statistical dependencies the model may exploit between digit-positions in training examples. To probe this, we compute dataset-based mutual information (MI) as benchmarks and track the MI from model's prediction distribution: %\YZ{Re-work the MI metrics.} \PX{Updated.}

\paragraph{Digits and carries.}
Each example is a 4-operand addition
\[
a_1a_2a_3 + b_1b_2b_3 + c_1c_2c_3 + d_1d_2d_3 \;=\; e_0e_1e_2e_3,
\]
where $a_1,a_2,a_3$ are the (hundreds, tens, units) digits of the first addend (and similarly for $b,c,d$), and
$e_0,e_1,e_2,e_3$ are the (thousands, hundreds, tens, units) digits of the sum.
For each column $i\in\{1,2,3\}$ we define the \emph{local carry-out} by
\[
K_{i-1} \;=\;\Big\lfloor \tfrac{a_i+b_i+c_i+d_i}{10}\Big\rfloor,
\]
i.e.\ the carry generated by the digits in column $i$.

\paragraph{Dataset-based MI.}
Given a dataset $\mathcal D=\{(x^{(n)},y^{(n)},z^{(n)})\}_{n=1}^N$ extracted from the generated 1M addition examples,
we use the empirical estimators
\[
\hat p(x,y) \;=\;\frac1N\sum_{n=1}^N \mathbf 1[x^{(n)}=x,\,y^{(n)}=y],
\qquad
\hat p(x,y,z) \;=\;\frac1N\sum_{n=1}^N \mathbf 1[x^{(n)}=x,\,y^{(n)}=y,\,z^{(n)}=z].
\]
Mutual information and conditional mutual information are then
\[
\widehat I(X;Y)
=\sum_{x,y}\hat p(x,y)\log\frac{\hat p(x,y)}{\hat p(x)\hat p(y)},
\qquad
\widehat I(X;Y\mid Z)
=\sum_{z}\hat p(z)\sum_{x,y}\hat p(x,y\mid z)\log\frac{\hat p(x,y\mid z)}{\hat p(x\mid z)\hat p(y\mid z)}.
\]

The reported \emph{dataset} probes are:
\[
\textbf{Thousands:}\quad \widehat I(a_1; e_0),\qquad
\]
\[
\textbf{Carries: }
\widehat I(a_i; e_i \mid K_{i-1}) \quad \text{for } i\in\{1,2,3\}
\]
\[
\textbf{Immediate higher digit :}\quad \widehat I(a_i; e_i \mid e_{i-1}) \quad \text{for } i\in\{1,2,3\}.
\]

\paragraph{MI from model prediction distributions.}
For each example $n$ and each output position (digit) $i$, let
\[
\hat p^{(n)}_i(y) \;=\; p_\theta(E_i=y\mid \text{input}^{(n)})
\]
be the model's softmax distribution over the output vocabulary at that position.
To estimate MI between an input digit $X$ and the model's predicted output distribution at position $i$,
we first aggregate predictions by the value of $X$:
\[
\bar p_i(y\mid x)
\;=\;\frac{1}{|\mathcal D_x|}\sum_{n:\,x^{(n)}=x}\hat p^{(n)}_i(y),
\qquad
\hat p(x)=\frac{|\mathcal D_x|}{N},
\qquad
\bar p_i(y)=\sum_x \hat p(x)\,\bar p_i(y\mid x).
\]
We use the expected KL divergence as the estimator
\[
\widehat I_\theta(X;\widehat P_i)
\;=\;\sum_x \hat p(x)\,\mathrm{KL}\!\big(\bar p_i(\cdot\mid x)\,\|\,\bar p_i(\cdot)\big),
\quad
\mathrm{KL}(p\|q)=\sum_y p(y)\log\frac{p(y)}{q(y)}.
\]
For conditioning on a discrete variable $Z$ (either $Z=e_{i-1}$ or $Z=K_{i-1}$), we aggregate by $(x,z)$:
\begin{align*}
\bar p_i(y\mid x,z)
&=\frac{1}{|\mathcal D_{x,z}|}\sum_{n:\,x^{(n)}=x,\,z^{(n)}=z}\hat p^{(n)}_i(y), \\
\hat p(x\mid z)
&=\frac{|\mathcal D_{x,z}|}{|\mathcal D_z|}, \\
\bar p_i(y\mid z)
&=\sum_x \hat p(x\mid z)\,\bar p_i(y\mid x,z).
\end{align*}
and compute
\begin{align*}
\widehat I_\theta(X;\widehat P_i\mid Z)
&=\sum_z \hat p(z)\sum_x \hat p(x\mid z)\,
\mathrm{KL}\!\big(\bar p_i(\cdot\mid x,z)\,\|\,\bar p_i(\cdot\mid z)\big).
\end{align*}
The reported \emph{model} probes are:
\begin{align*}
\textbf{Thousands:}\quad
&\widehat I_\theta(a_1;\widehat P_0), \\
\textbf{Carries:}\quad
&\widehat I_\theta(a_i;\widehat P_i\mid K_{i-1}), \\
\textbf{Immediate higher digit:}\quad
&\widehat I_\theta(a_i;\widehat P_i\mid e_{i-1}).
\end{align*}
% For conditioning on a discrete variable $Z$ (either $Z=e_{i-1}$ or $Z=K_{i-1}$), we aggregate by $(x,z)$:
% \[
% \bar p_i(y\mid x,z)
% =\frac{1}{|\mathcal D_{x,z}|}\sum_{n:\,x^{(n)}=x,\,z^{(n)}=z}\hat p^{(n)}_i(y),
% \quad
% \hat p(x\mid z)=\frac{|\mathcal D_{x,z}|}{|\mathcal D_z|},
% \quad
% \bar p_i(y\mid z)=\sum_x \hat p(x\mid z)\,\bar p_i(y\mid x,z),
% \]
% and compute 
% \[
% \widehat I_\theta(X;\widehat P_i\mid Z)
% =\sum_z \hat p(z)\sum_x \hat p(x\mid z)\,
% \mathrm{KL}\!\big(\bar p_i(\cdot\mid x,z)\,\|\,\bar p_i(\cdot\mid z)\big).
% \]
% The reported \emph{model} probes are:
% \[
% \textbf{Thousands:}\quad \widehat I_\theta(a_1;\widehat P_0),\qquad
% \textbf{Carries:}\quad \widehat I_\theta(a_i;\widehat P_i\mid K_{i-1}),\qquad
% \textbf{Immediate higher digit:}\quad \widehat I_\theta(a_i;\widehat P_i\mid e_{i-1}).
% \]

\section{Details and additional results for synthetic experiments}\label{sec:append-synthetic-details}
\subsection{Addition task.}
\label{appendix-addition}
\subsubsection{Alternative format of input sequence}\label{sec:append-alt-1}
\begin{figure}[t] 
    \centering 
    \begin{subfigure}{0.5\textwidth} 
        \centering 
        % Replace with your actual image file
        \includegraphics[width=\linewidth]{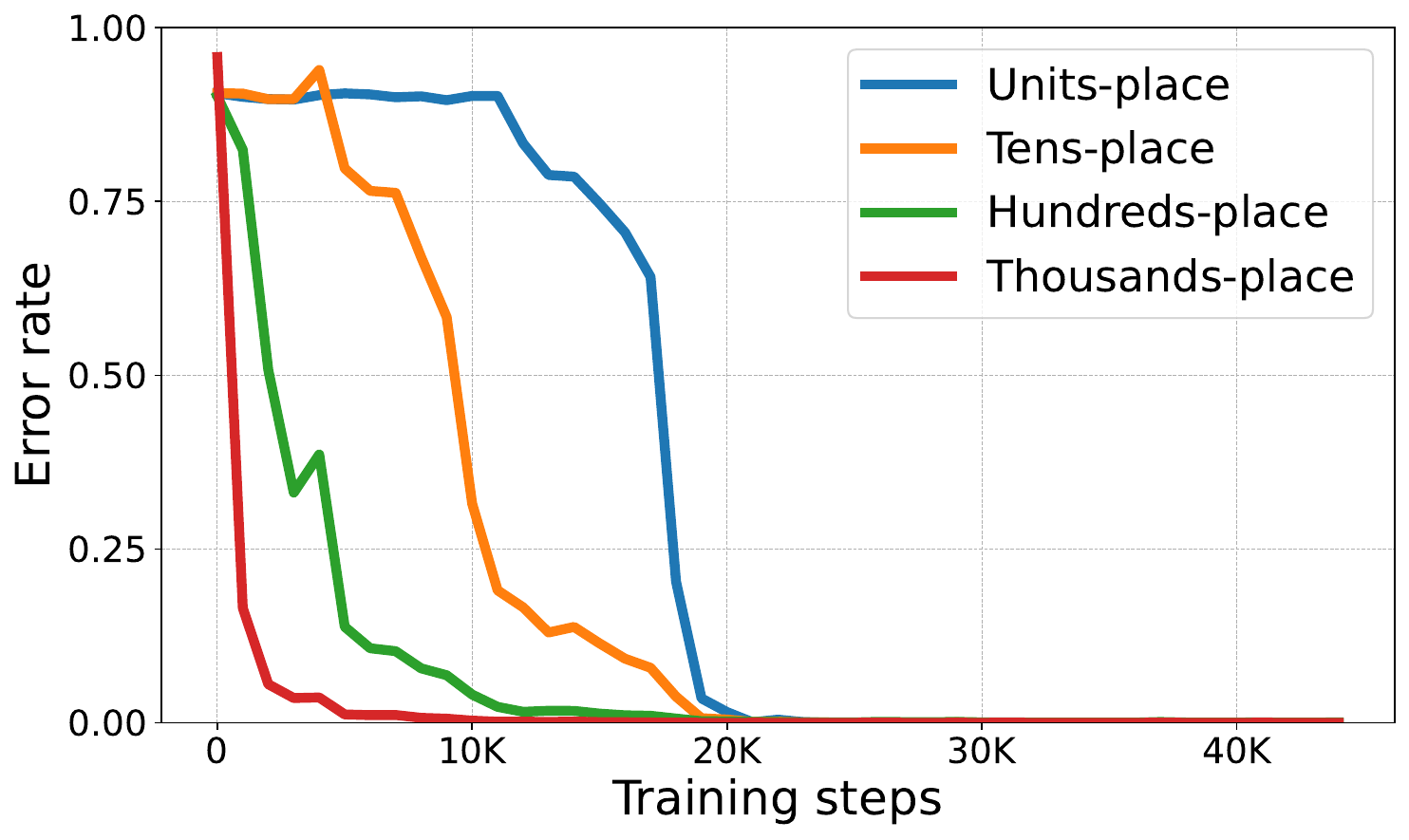} 
        \caption{Digit-wise error in plain output format} 
    \end{subfigure}% 
    \hfill 
    \begin{subfigure}{0.5\textwidth} 
        \centering 
        % Replace with your actual image file
        \includegraphics[width=\linewidth]{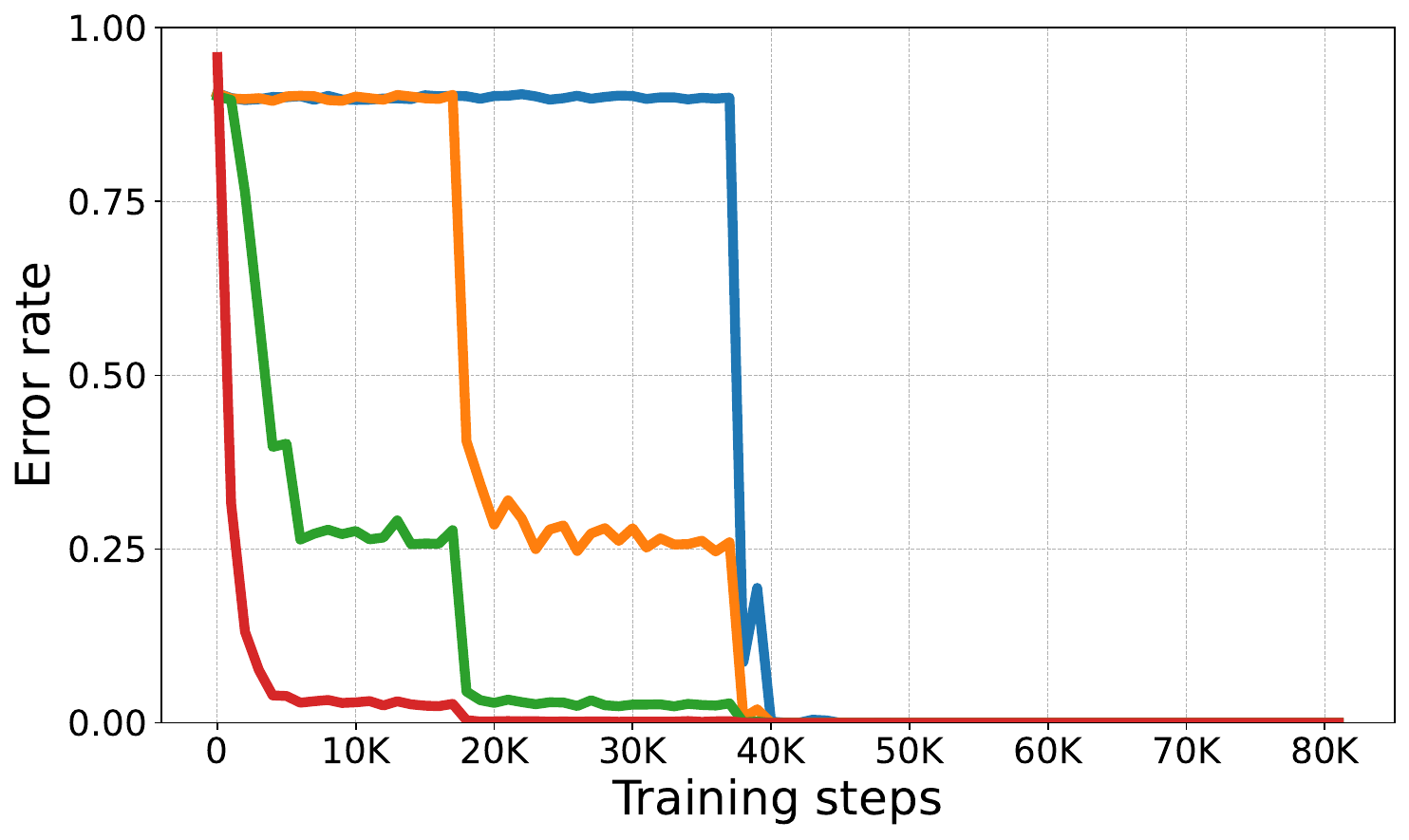} 
        \caption{Digit-wise error in reverse output format} 
    \end{subfigure} 
    
    \caption{\textbf{Using randomly chunked input sequence, transformers still learn digits in non-human order for addition.} We train transformers with randomly chunked blocks as input sequences and evaluate the digit-wise error rates. Same as single-example input sequence format, models learn digits from the most significant digit to the least significant digit.} 
    
    % The label MUST be outside the comments and after the caption
    \label{fig:addition_slicing} 
\end{figure}
As an alternative to the single-example input sequence format adopted in the main paper, we consider the randomly chunked input sequence format--concatenating all training examples and randomly chunking a fixed-size window as an input sequence. As shown in Figure~\ref{fig:addition_slicing}, models still learn from the most significant digit to the least significant one.

\subsubsection{Output digit permutations}
\label{addition_perm}
\begin{figure}[t] 
    \centering 
    \begin{subfigure}{0.5\textwidth} 
        \centering 
        % Replace with your actual image file
        \includegraphics[width=\linewidth]{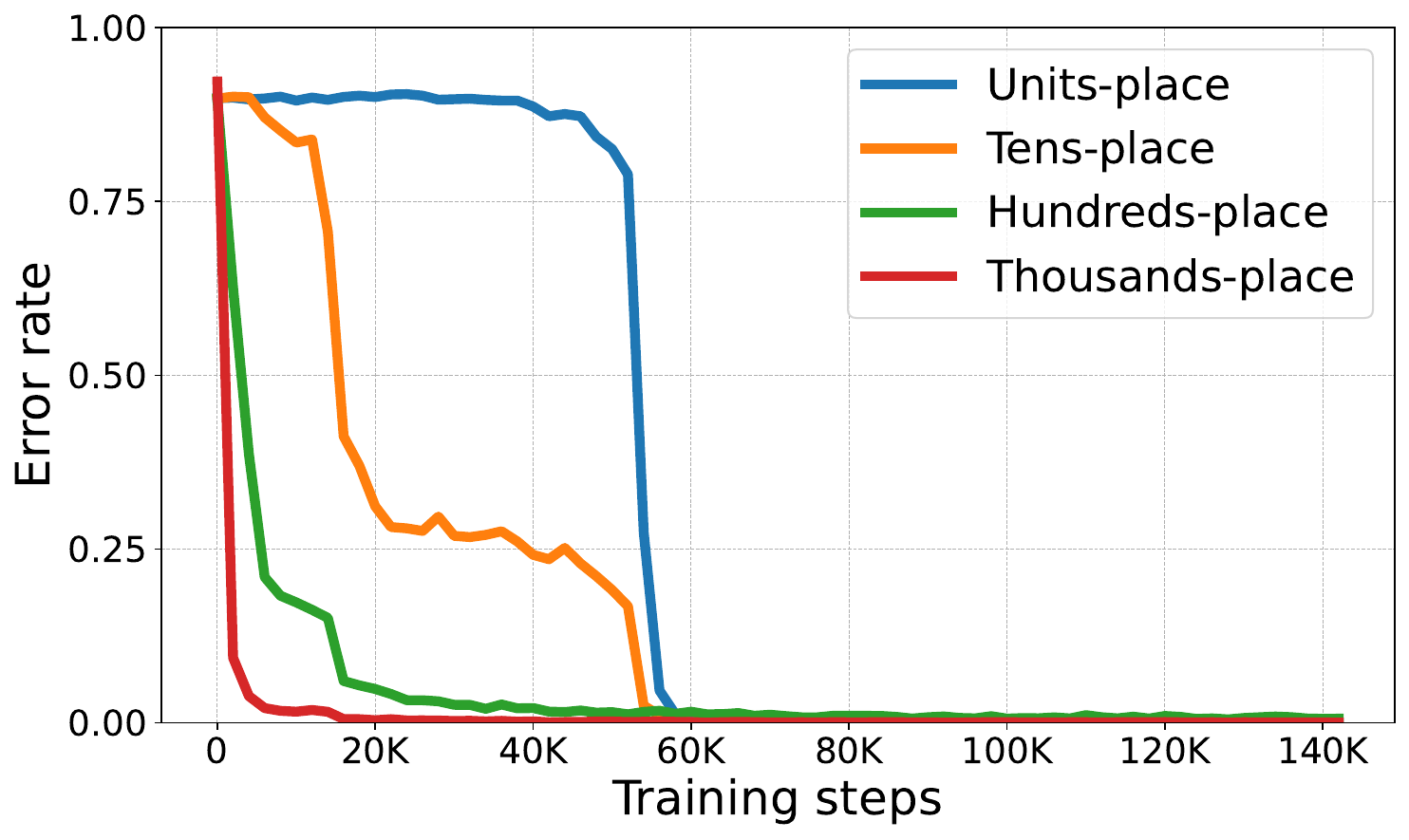} 
        \caption{Digit-wise error in permutation ``2143" output format} 
    \end{subfigure}% 
    \hfill 
    \begin{subfigure}{0.5\textwidth} 
        \centering 
        % Replace with your actual image file
        \includegraphics[width=\linewidth]{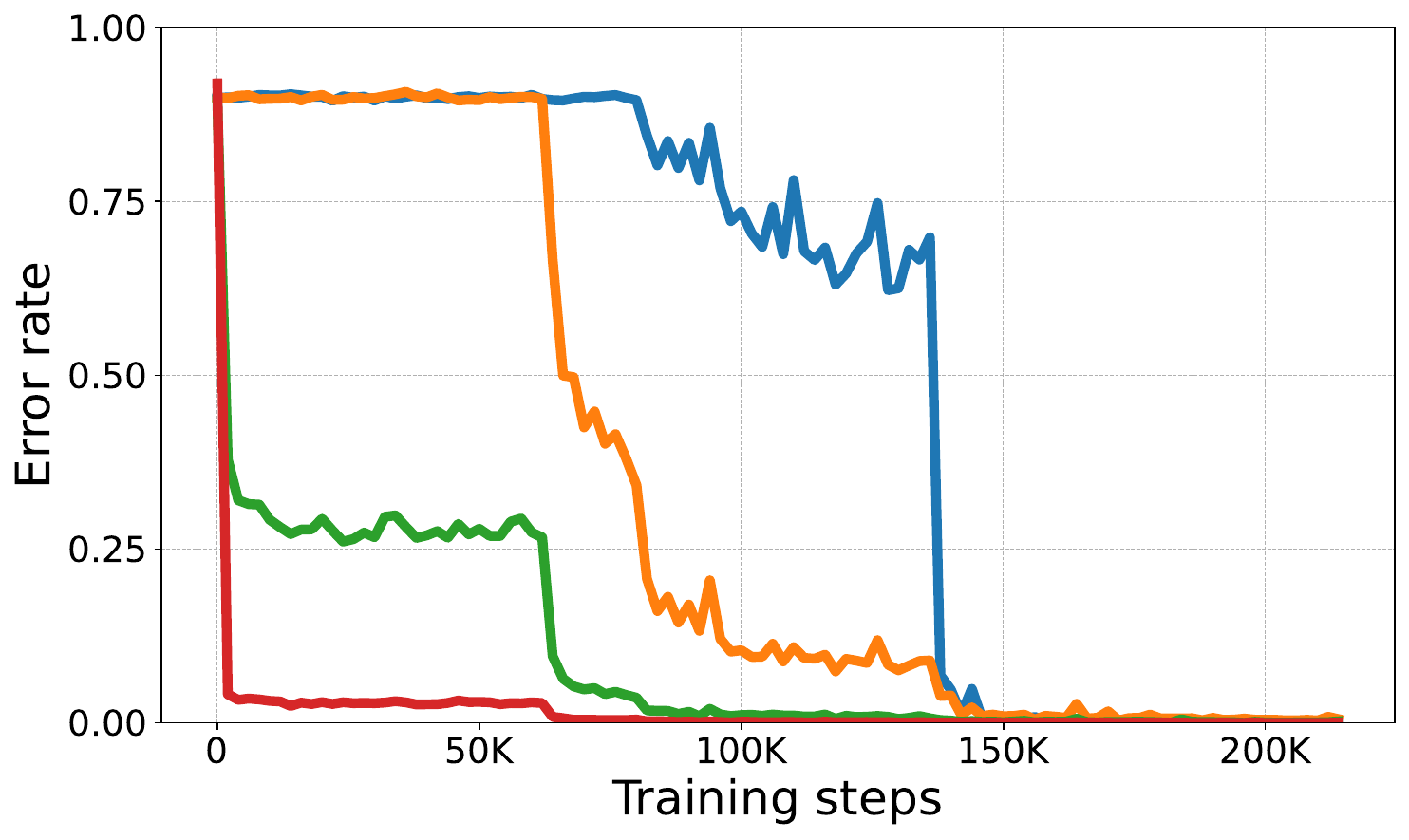} 
        \caption{Digit-wise error in permutation ``3412" output format} 
    \end{subfigure} 
    
    \caption{\textbf{Regardless of permutations of the output order, transformers learn digits in non-human order for addition.} We train transformers on addition of the format $a+b+c+d=o$ with the digits of $o$ written in different permutation orders. In permutation "2143", $o$ is written as $o_2o_1o_4o_3$. In permutation "3412", $o$ is written as $o_3o_4o_1o_2$. Same as plain and reverse output format, models learn digits from the most significant digit ($o_4$) to the least significant digit ($o_1$).} 
    
    \label{fig:addition_permutation} 
\end{figure}
In the main paper, we train Transformer models on 4 operand addition using both plain and reverse order as the output format. Here we consider two other permutations of output digits order. Concretely, let $o_1o_2o_3o_4$ be the groundtruth label for a 4 operand addition instance $a_1a_2a_3+b_1b_2b_3+c_1c_2c_3+d_1d_2d_3$, with digits ordered from the most significant to the least significant. We consider permutation "2413" and "3412".

For permutation "2413", a training example takes the form:

\begin{equation}
    a_1a_2a_3 + b_1b_2b_3 + c_1c_2c_3 + d_1d_2d_3 = o_2o_4o_1o_3
\end{equation}

For permutation ``3412", a training example takes the form:

\begin{equation}
    a_1a_2a_3 + b_1b_2b_3 + c_1c_2c_3 + d_1d_2d_3 = o_3o_4o_1o_2
\end{equation}

As shown in Figure~\ref{fig:addition_permutation}, regardless of the permutation of the output orders, models learn digits from the most significant digit ($o_1$) to the least significant digit ($o_4$).

\subsubsection{Positional-encoding variants: RoPE and T5 relative}
\label{app:positional_variants}
\begin{figure}[t] 
    \centering 
    \begin{subfigure}{0.5\textwidth} 
        \centering 
        % Replace with your actual image file
        \includegraphics[width=\linewidth]{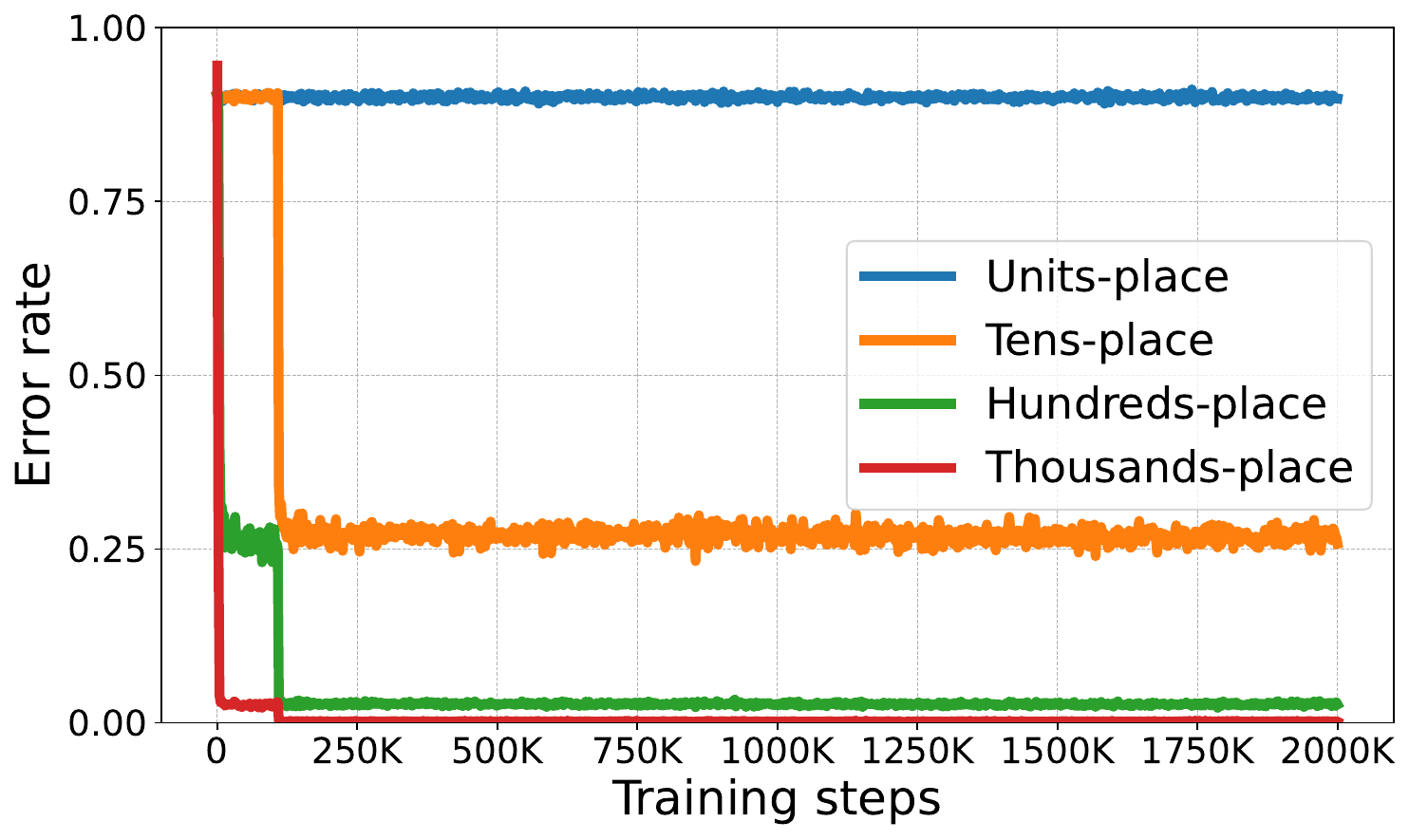} 
        \caption{Digit-wise error with RoPE} 
    \end{subfigure}% 
    \hfill 
    \begin{subfigure}{0.5\textwidth} 
        \centering 
        % Replace with your actual image file
        \includegraphics[width=\linewidth]{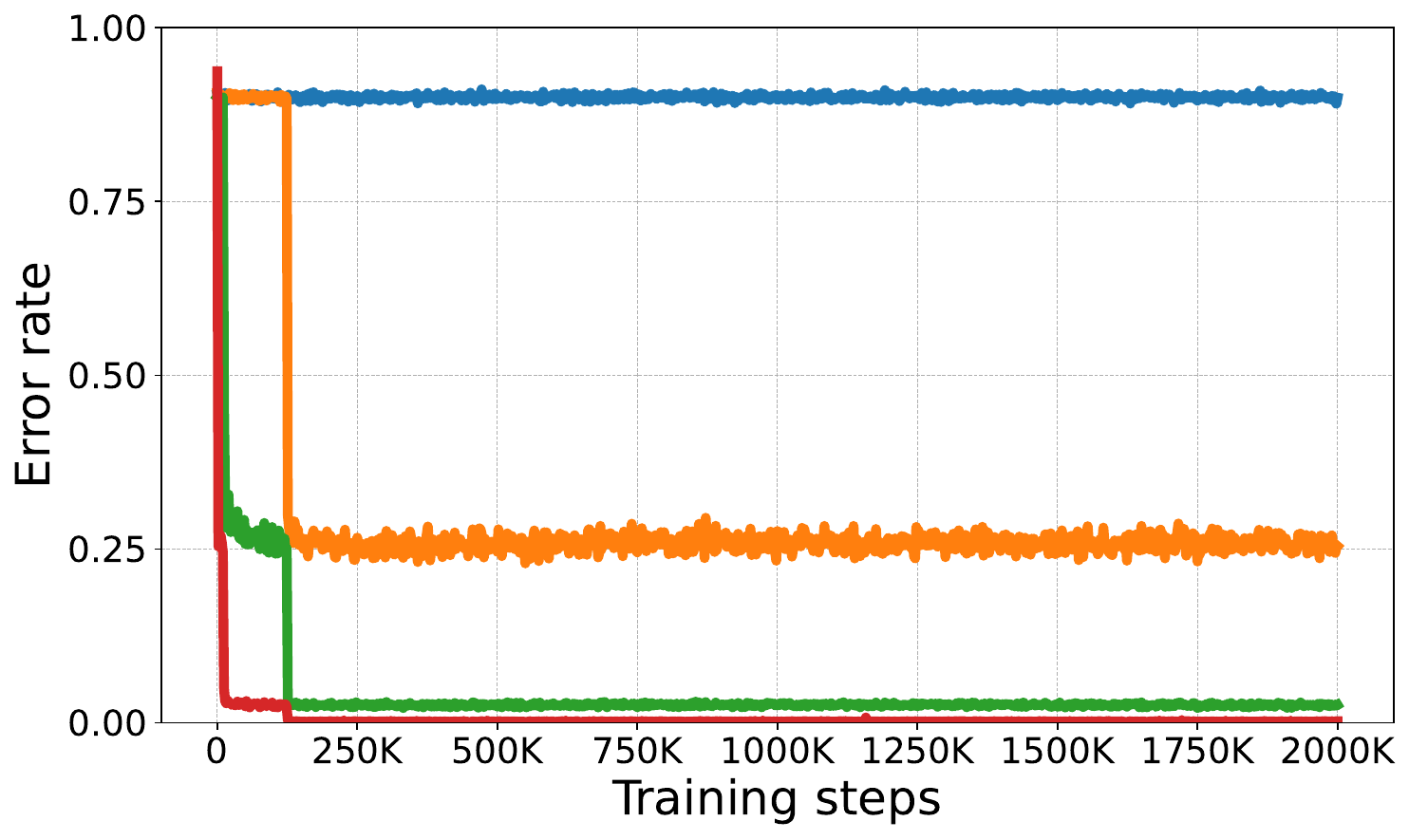} 
        \caption{Digit-wise error with T5-style relative biases} 
    \end{subfigure} 
    
    \caption{\textbf{Additional result for the addition task under two different positional encoding schemes.} We train transformers on 4-operand addition in reverse output format, using RoPE / T5-style relative biases. The units-place error remains high throughout training while tens/hundreds/thousands are learned in reverse order.} 
    \label{fig:addition_postional_encoding} 
\end{figure}

In the main experiments we used absolute positional encodings. To test robustness to positional-encoding choice, we re-ran the 4-operand addition in reverse output format experiment with two alternative encodings: rotary positional embeddings and T5-style relative position biases. Implementation changes were limited to replacing the absolute embedding layer with the alternate encoding described below.

\begin{itemize}
  \item \textbf{RoPE (rotary positional embeddings)}: We applied RoPE to the query/key projections following standard practice: a rotary transformation is applied to query and key vectors before computing attention scores.
  \item \textbf{T5-style relative biases}: We implemented learned relative-position bias terms added to the attention logits. The number of relative buckets and max distance were set to 32 and 128 respectively, as are used in the canonical T5 implementation.
\end{itemize}

Figure~\ref{fig:addition_postional_encoding} shows the digit-wise error rates (units, tens, hundreds, thousands) as training progresses. With both RoPE and T5 relative-position biases the model learns the higher-place digits (tens, hundreds, thousands) in reverse order (i.e., it learns thousands first), but it fails to learn the units-place even after more than 2M training steps.

\subsubsection{Greedy decoding}
\begin{figure}[t] 
    \centering 
    \begin{subfigure}{0.49\textwidth} 
        \centering 
        % Replace with your actual image file
        \includegraphics[width=\linewidth]{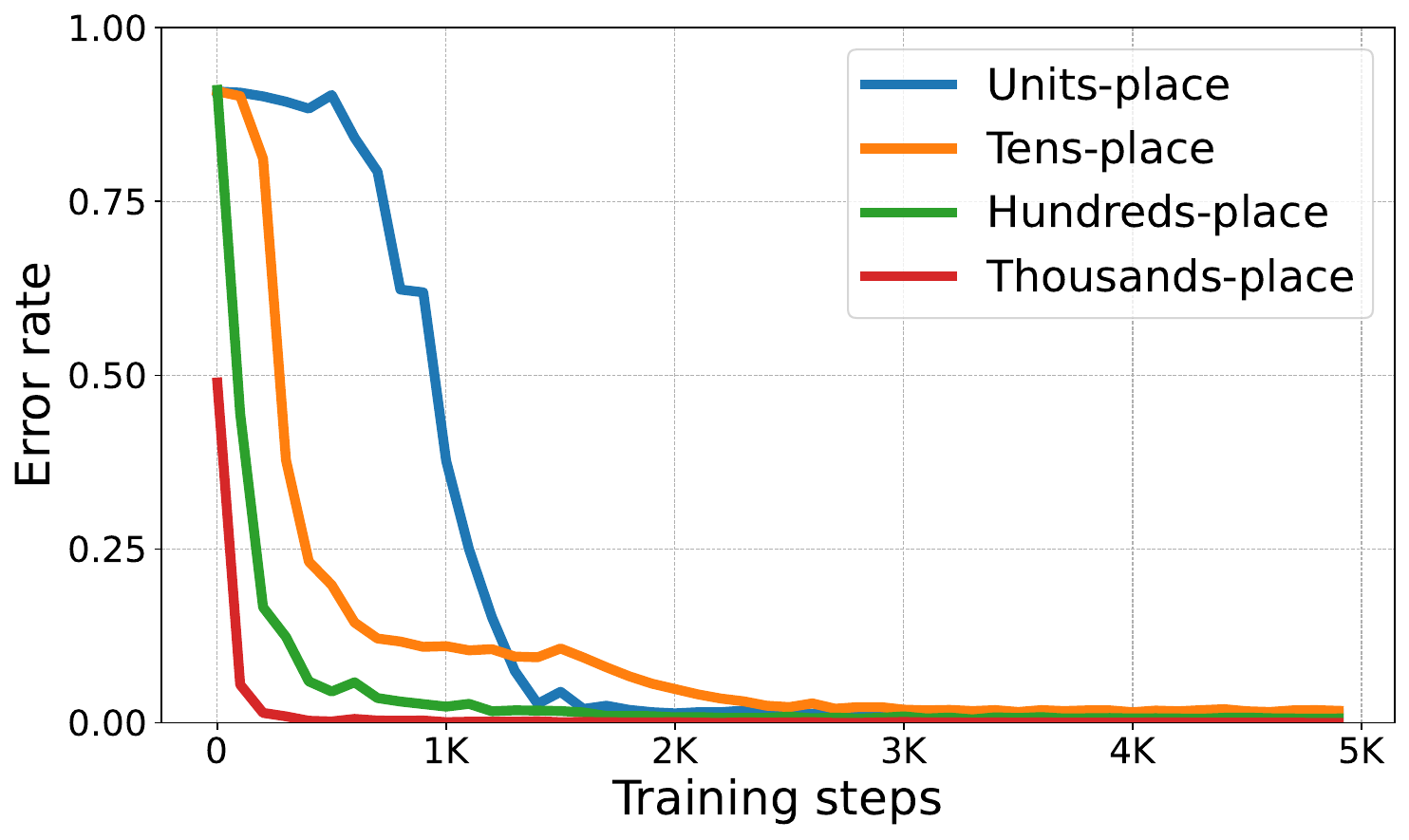} 
        \caption{2-operands addition plain output format with greedy decoding} 
    \end{subfigure}% 
    \hfill 
    \begin{subfigure}{0.49\textwidth} 
        \centering 
        % Replace with your actual image file
        \includegraphics[width=\linewidth]{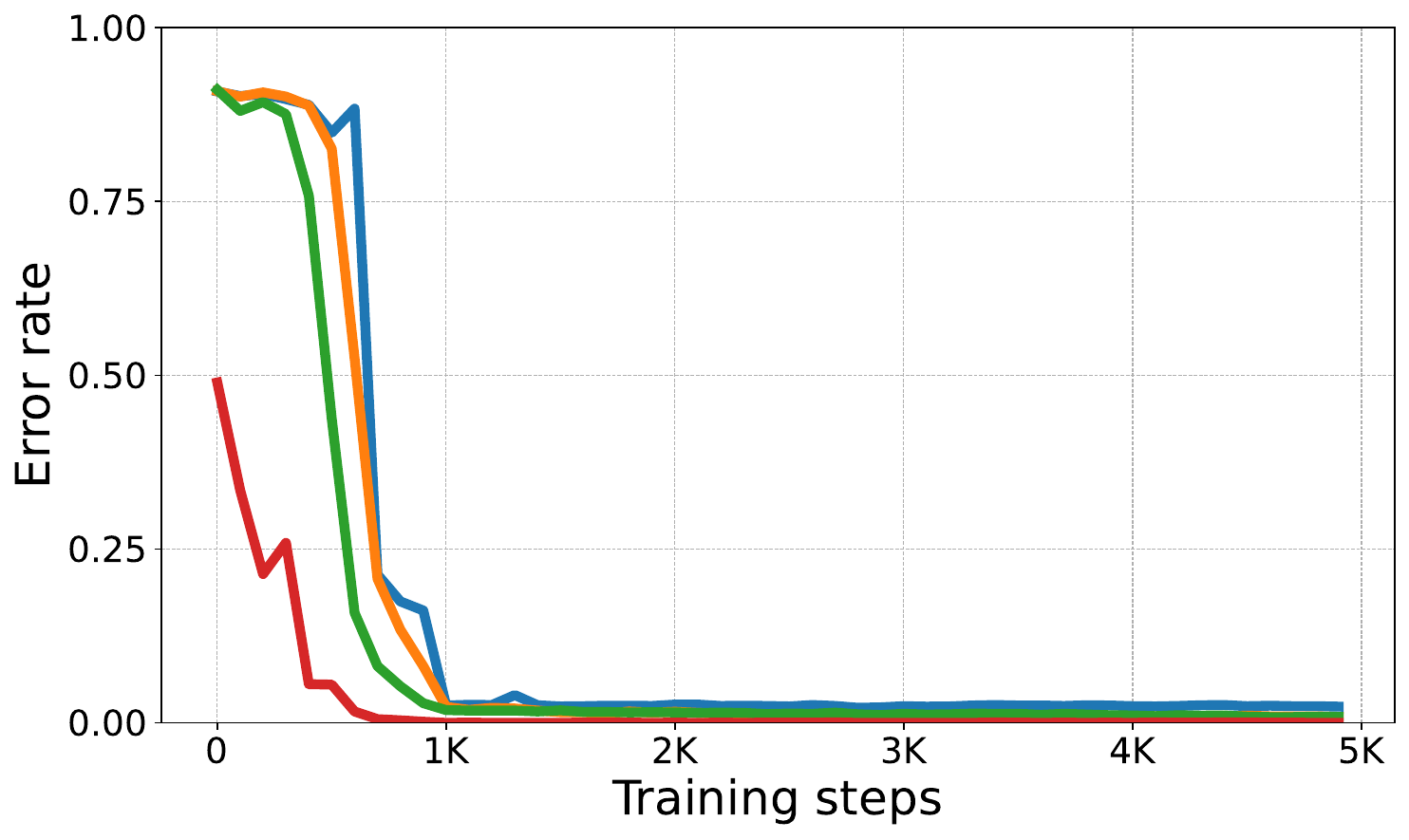} 
        \caption{2-operands addition reverse output format with greedy decoding} 
    \end{subfigure} 
    \hfill
    \begin{subfigure}{0.49\textwidth} 
        \centering 
        % Replace with your actual image file
        \includegraphics[width=\linewidth]{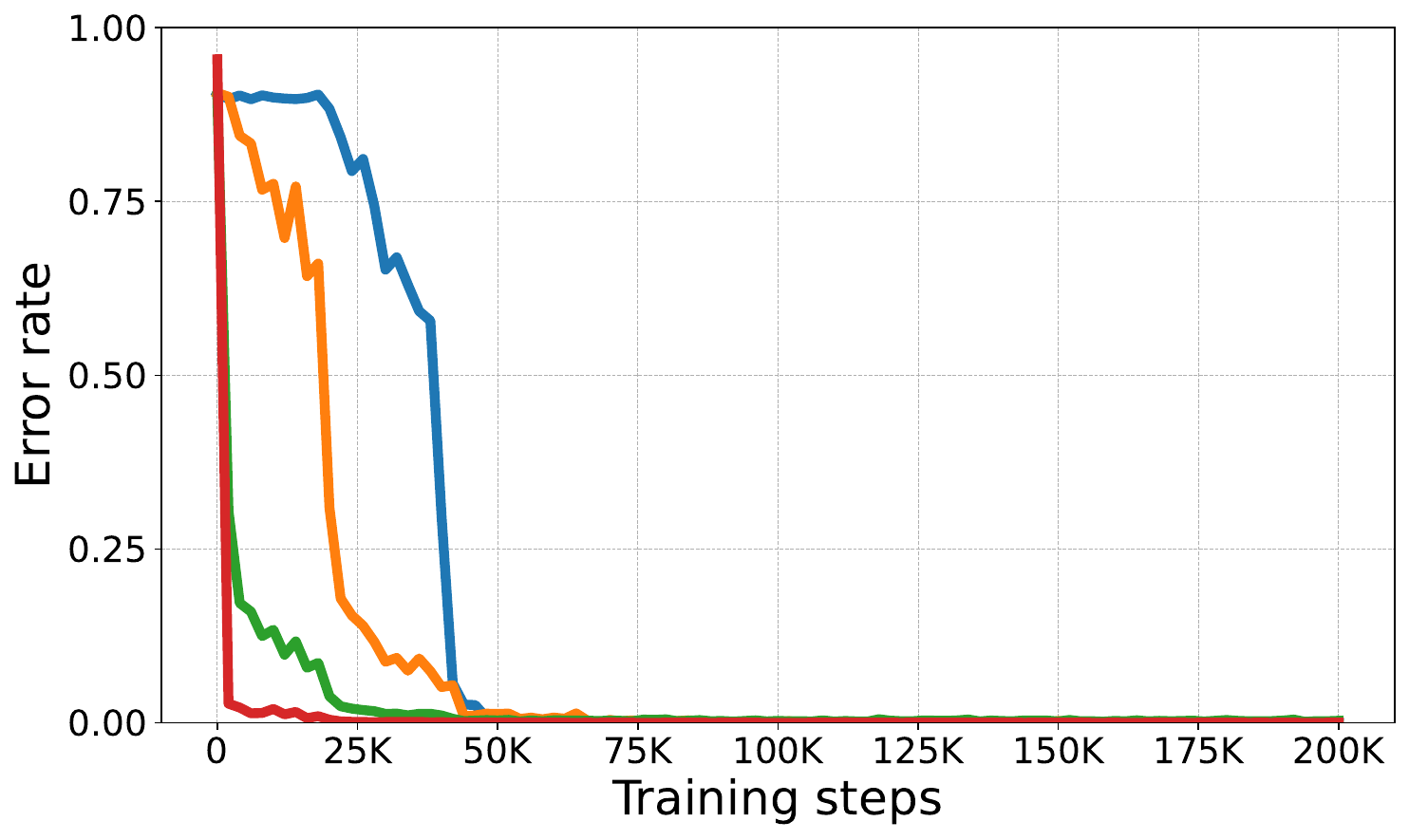} 
        \caption{4-operands addition plain output format with greedy decoding} 
    \end{subfigure} 
    \begin{subfigure}{0.49\textwidth} 
        \centering 
        % Replace with your actual image file
        \includegraphics[width=\linewidth]{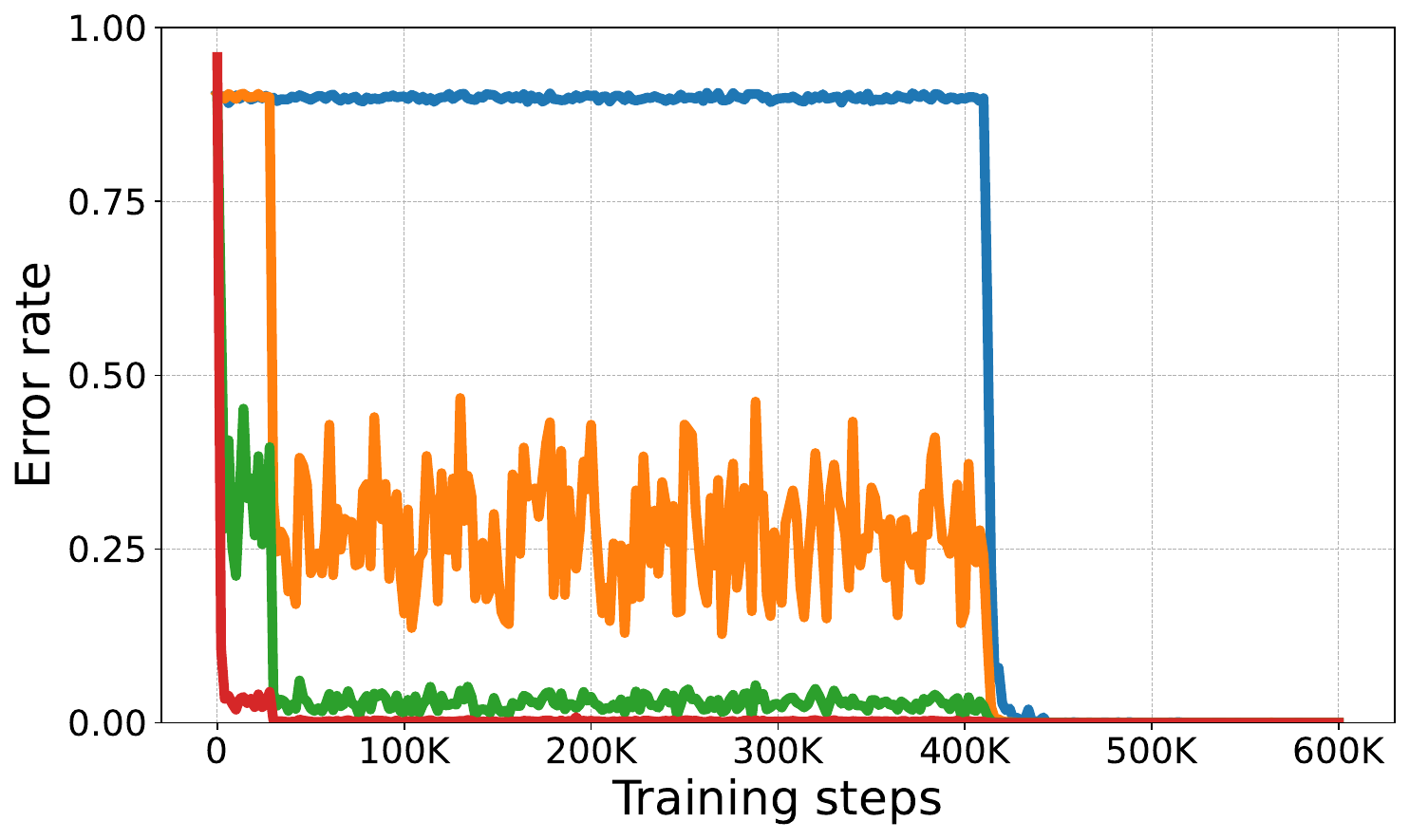} 
        \caption{4-operands addition reverse output format with greedy decoding} 
    \end{subfigure} 
    \caption{\textbf{Results for the addition task under the alternative setup and decoding scheme}. Using greedy decoding, we train transformers on 2-operands and 4-operands addition, in both plain and reverse output format, and evaluate the digit-wise error rate as training progresses. Same as the results obtained under the default setting (with temperature 0.8), models learn digits from the most significant digit to the least significant digit.} 
    
    % The label MUST be outside the comments and after the caption
    \label{fig:addition_greedy} 
\end{figure}
To verify that the choice of generation procedure does not alter the digit-wise learning dynamics we re-ran a suite of experiments using greedy decoding (implemented as top-k decoding with $k = 1$) at evaluation time.

Figure~\ref{fig:addition_greedy} shows that, as in the default sampling setup (temperature 0.8), models evaluated with greedy decoding continue to learn in the non-human order, from the most significant to the least significant digit.

\subsubsection{Different random seeds}
\label{app:addition-random-seeds}

\begin{figure}[t] 
    \centering 
    \begin{subfigure}{0.5\textwidth} 
        \centering 
        % Replace with your actual image file
        \includegraphics[width=\linewidth]{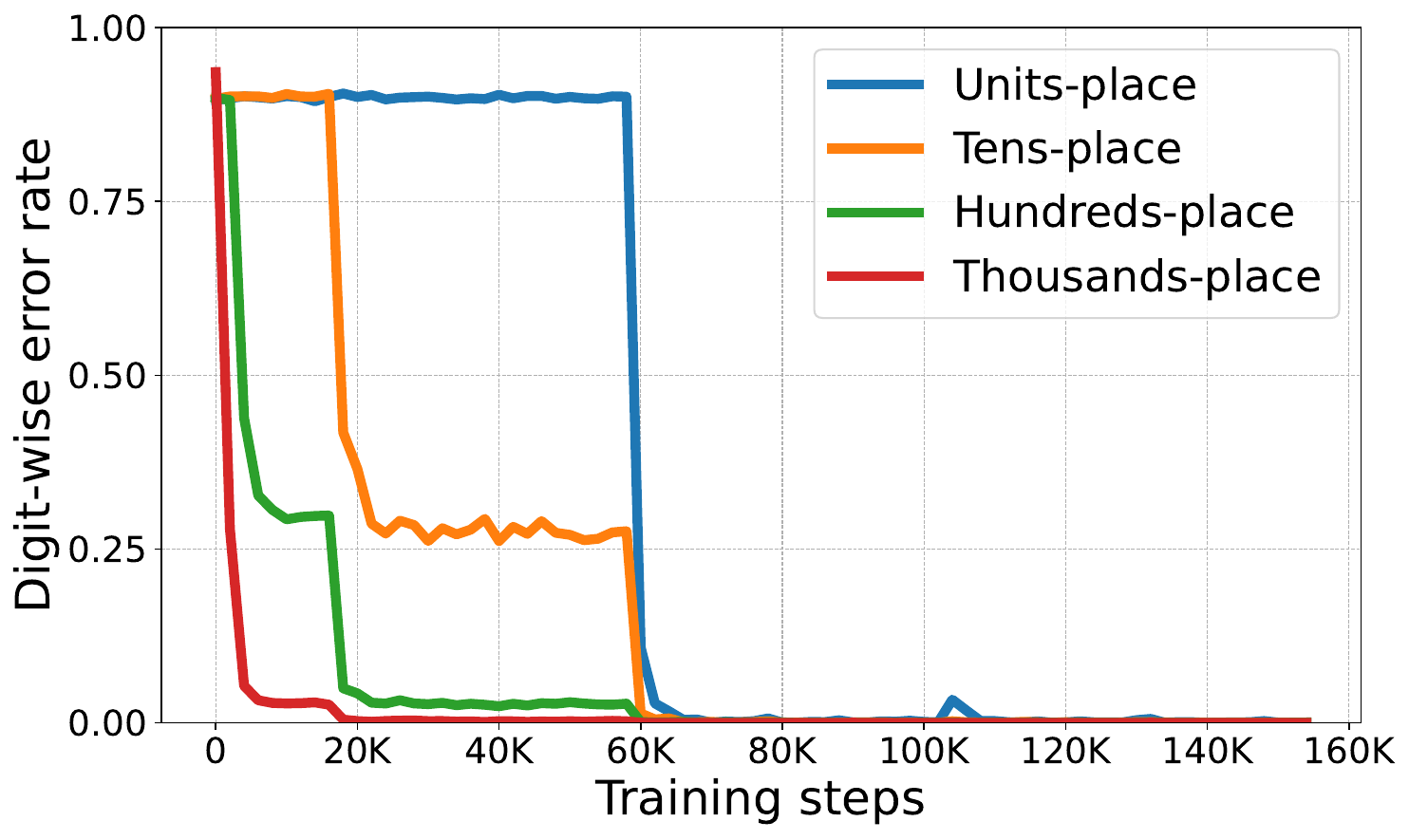} 
        \caption{Using random seed 786631946} 
    \end{subfigure}% 
    \hfill 
    \begin{subfigure}{0.5\textwidth} 
        \centering 
        % Replace with your actual image file
        \includegraphics[width=\linewidth]{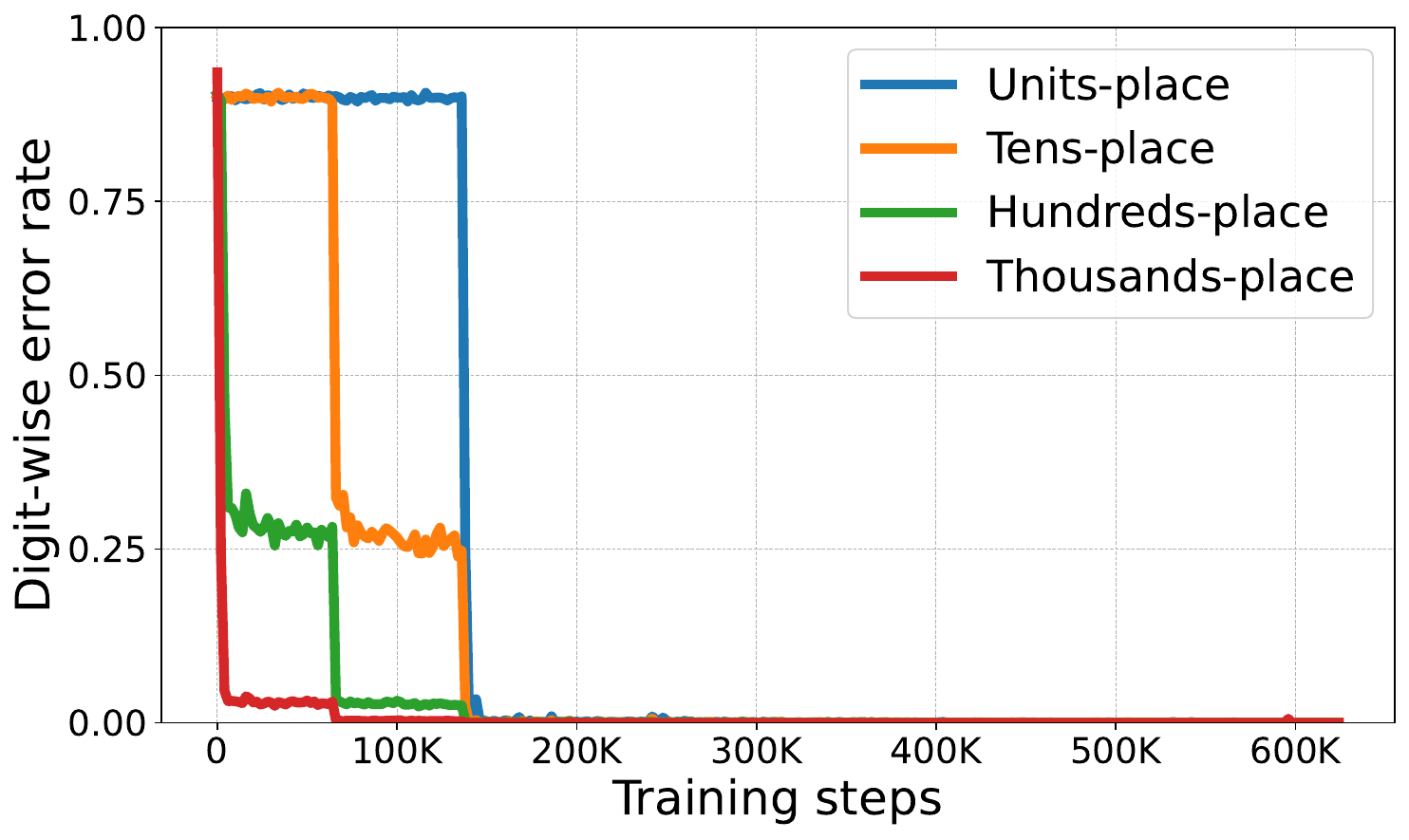} 
        \caption{Using random seed 927048795} 
    \end{subfigure} 
    
    \caption{\textbf{Additional result for the addition task using two other random seeds.} We train transformers on 4-operand addition in reverse output format, using different random seeds than the one used in Figure~\ref{fig:1}. Models always learn digits from the most significant digit to the least significant digit.} 
    \label{fig:addition_random_seeds} 
\end{figure}

We also reran the 4-operand addition experiment in the reverse output format using different random seeds, while keeping the data distribution, model architecture, and training setup fixed. As shown in Figure~\ref{fig:addition_random_seeds}, the results are qualitatively similar across seeds: despite small variation in the timing of error drops, models consistently learn digits in the reversed, non-human order, from higher-place digits to lower-place digits.

\subsubsection{Additional ablation results: randomizing digits}

\label{appendix-ablation}
\begin{figure}[t]
    \centering
    \begin{subfigure}[b]{0.49\textwidth}
        \centering
        \includegraphics[width=\linewidth]{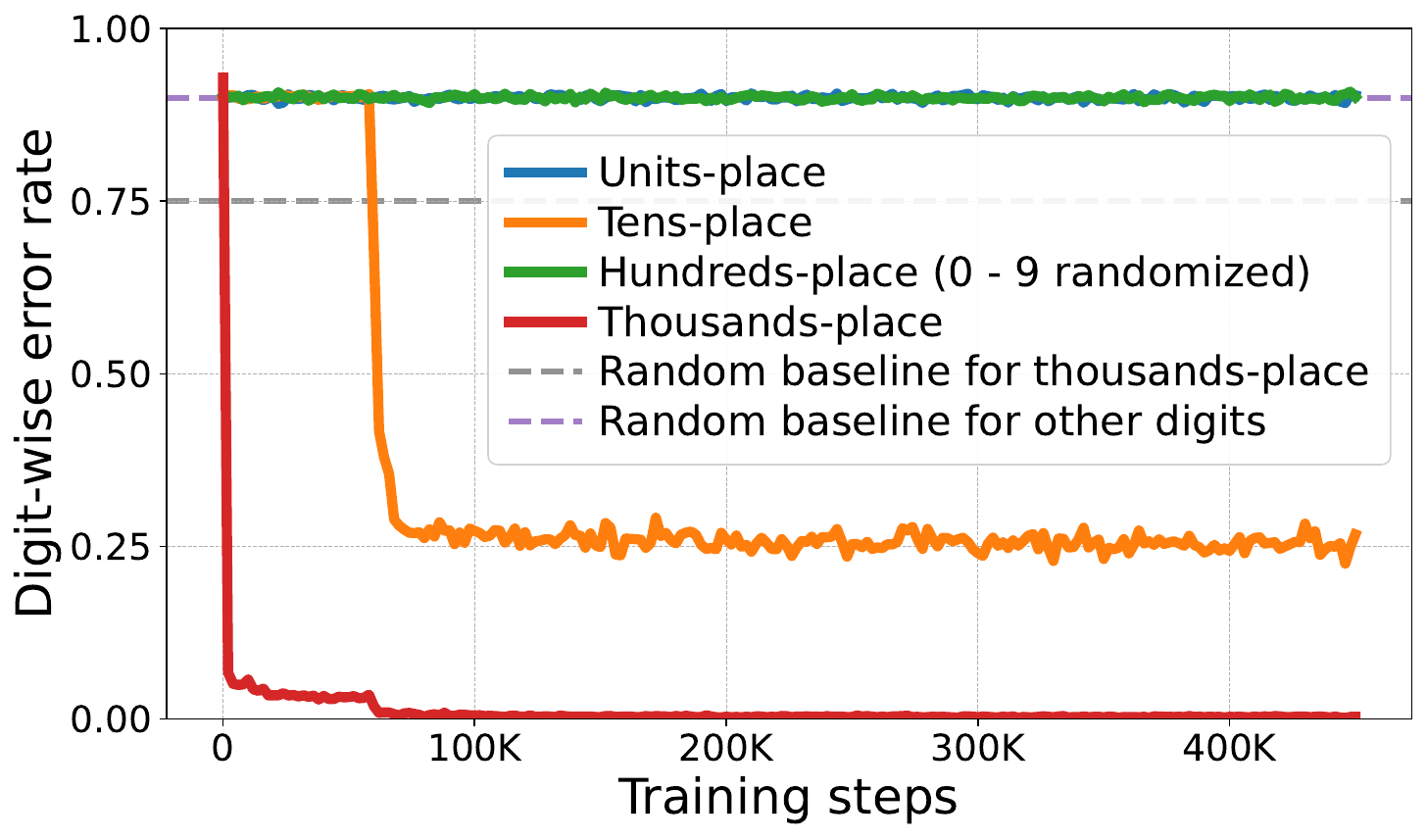}
        \caption{Randomize hundreds-place}
        \label{fig:hundreds-randomized-reverse}
    \end{subfigure}\hfill
    \begin{subfigure}[b]{0.49\textwidth}
        \centering
        \includegraphics[width=\linewidth]{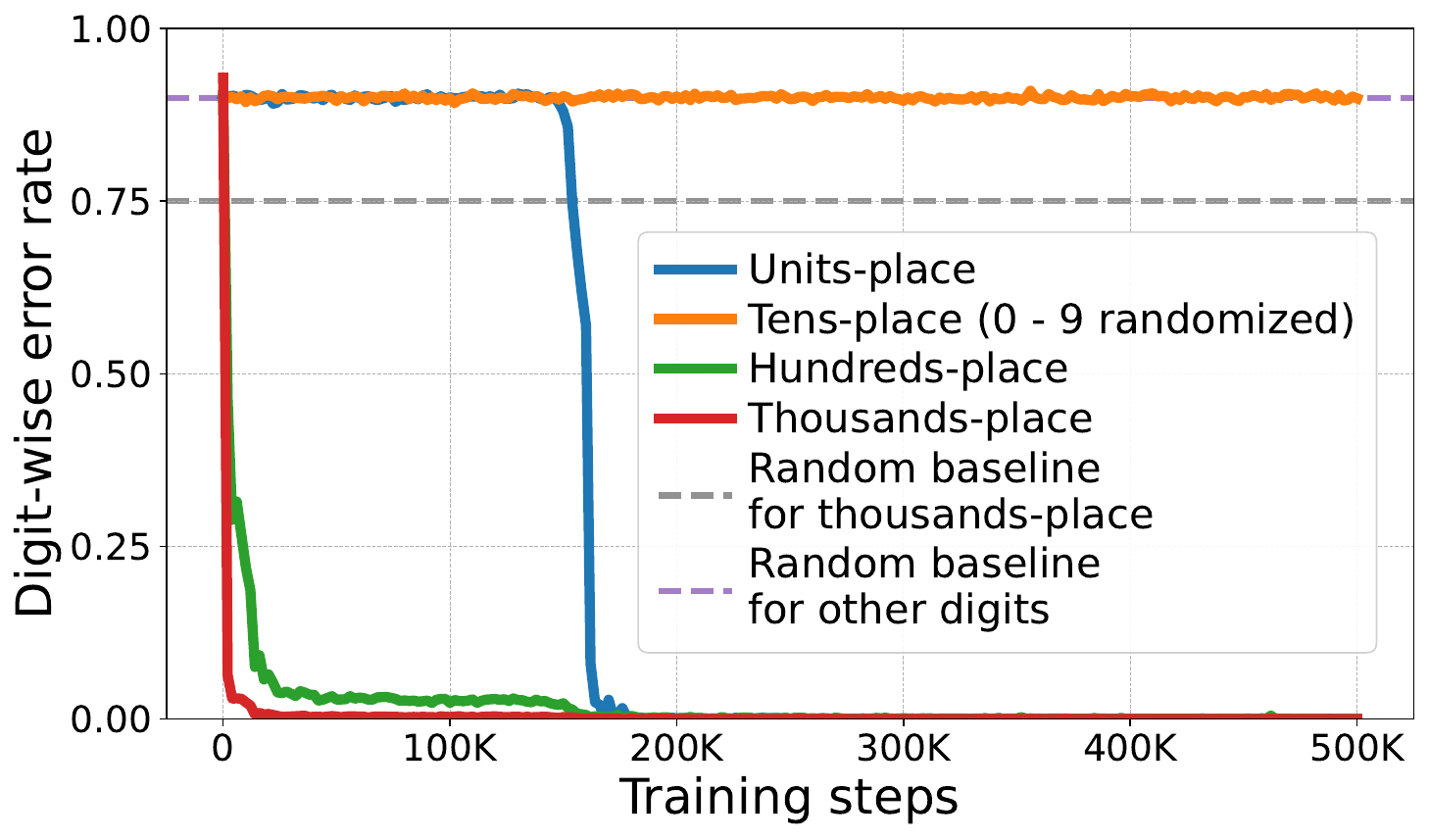}
        \caption{Randomize tens-place}
        \label{fig:tens-randomized-reverse}
    \end{subfigure}\hfill
    \begin{subfigure}[b]{0.49\textwidth}
        \centering
        \includegraphics[width=\linewidth]{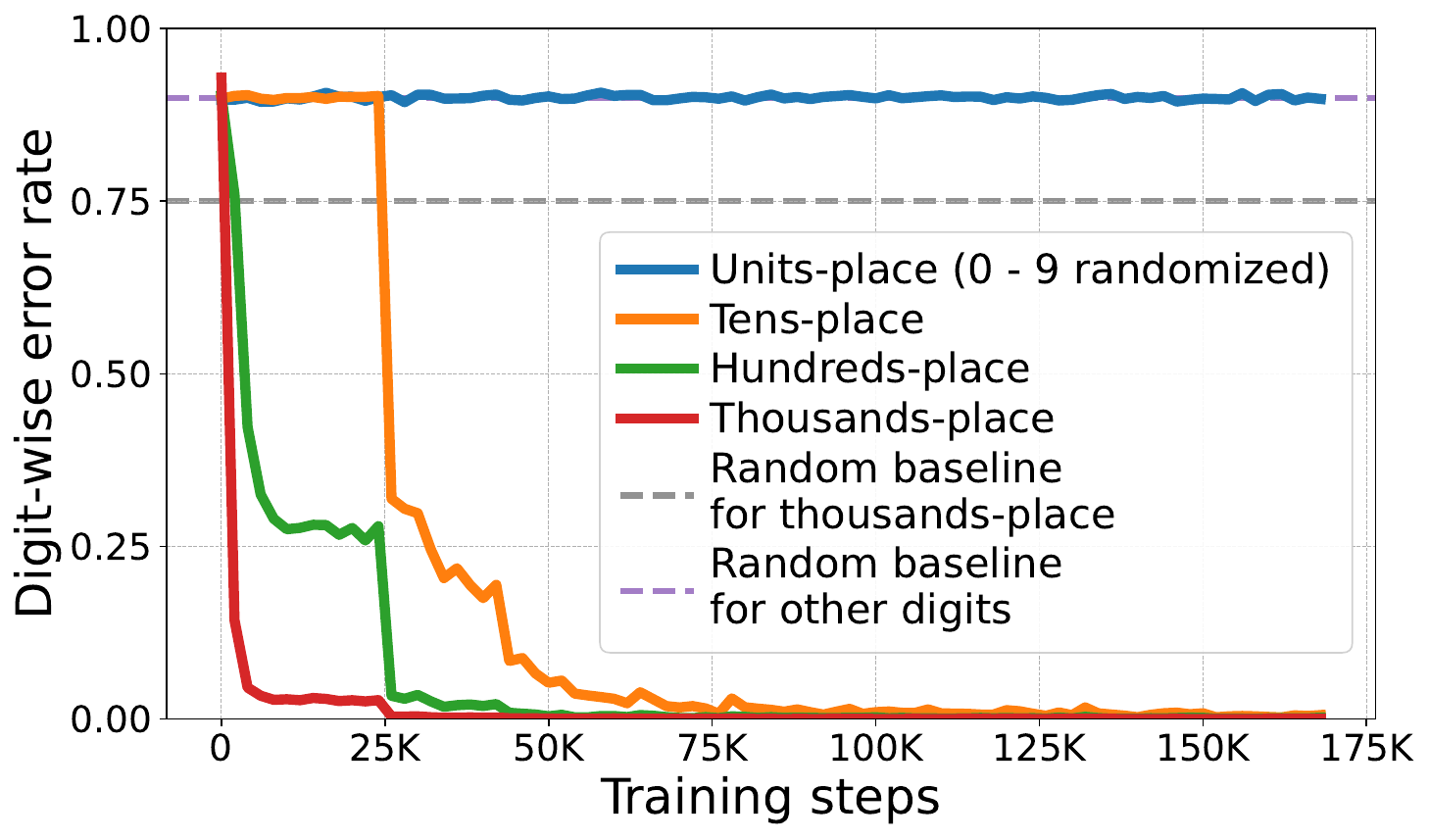}
        \caption{Randomize units-place}
        \label{fig:units-randomized-reverse}
    \end{subfigure}\hfill
   
    \caption{\textbf{Ablation experiment results on hundreds-place, tens-place and units-place with the reverse output format.} We train transformers with modified training data for addition with the reverse format, where the hundreds-place digits (a), tens-place digits (b) and units-place digits (c) are replaced by uniform digits in $\{0,1,\ldots,9\}$. Plot (a) shows that the model struggles to learn tens-place digits (error rates above 0.2) and fails to learn the unit-place digits; Plot (b) shows that the model is able to learn the units-place, hundreds- and thousands-place. Plot (c) shows that the model is able to learn all the higher digits without difficulty.  
    }
    \label{fig:addition_randomized_tens_units_reverse}
\end{figure}

\begin{figure}[t]
    \centering
    \begin{subfigure}[b]{0.49\textwidth}
        \centering
        \includegraphics[width=\linewidth]{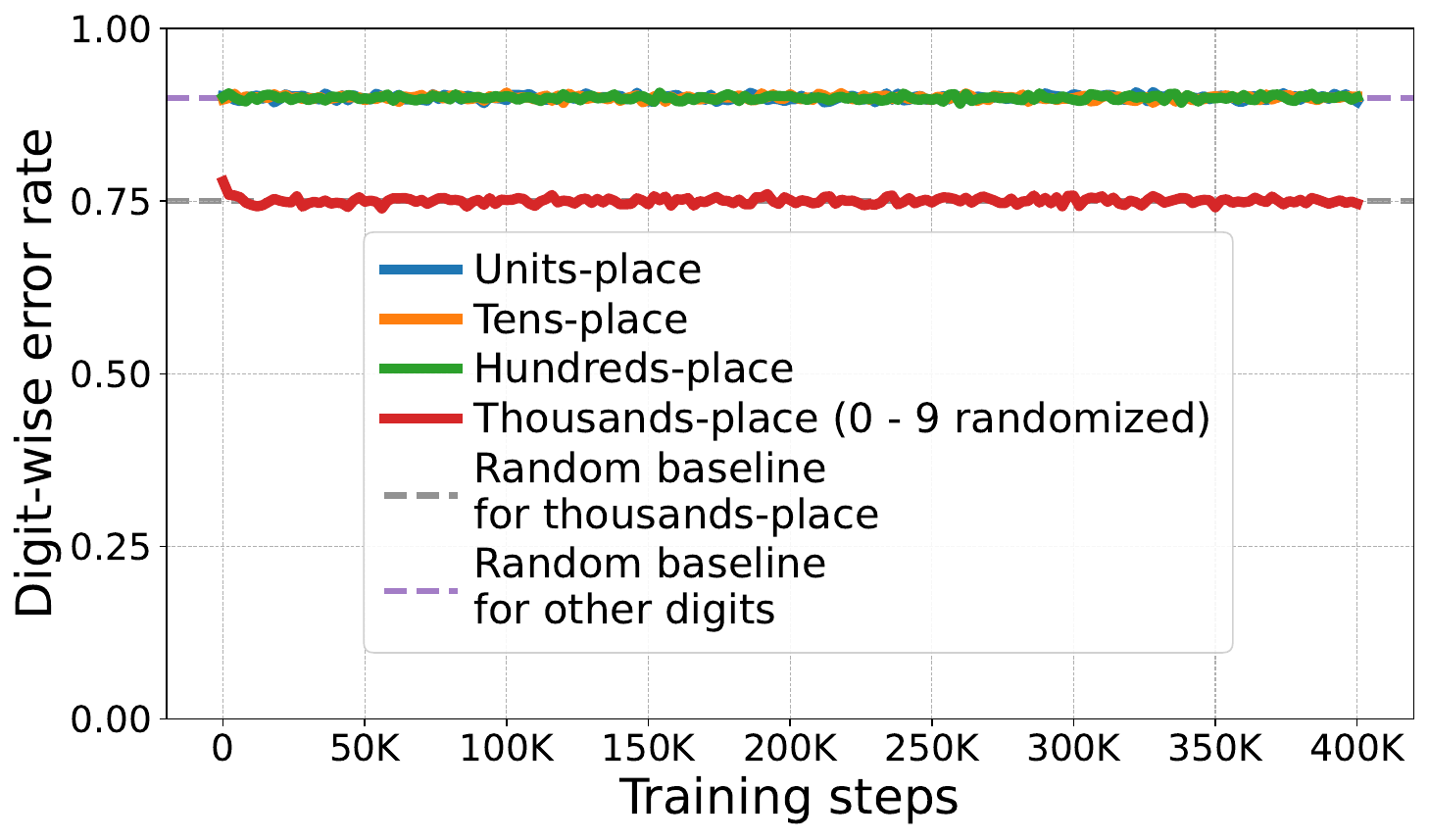}
        \caption{Randomize thousands-place}
        \label{fig:addition_randomized_plain_a}
    \end{subfigure}\hfill
    \begin{subfigure}[b]{0.49\textwidth}
        \centering
        \includegraphics[width=\linewidth]{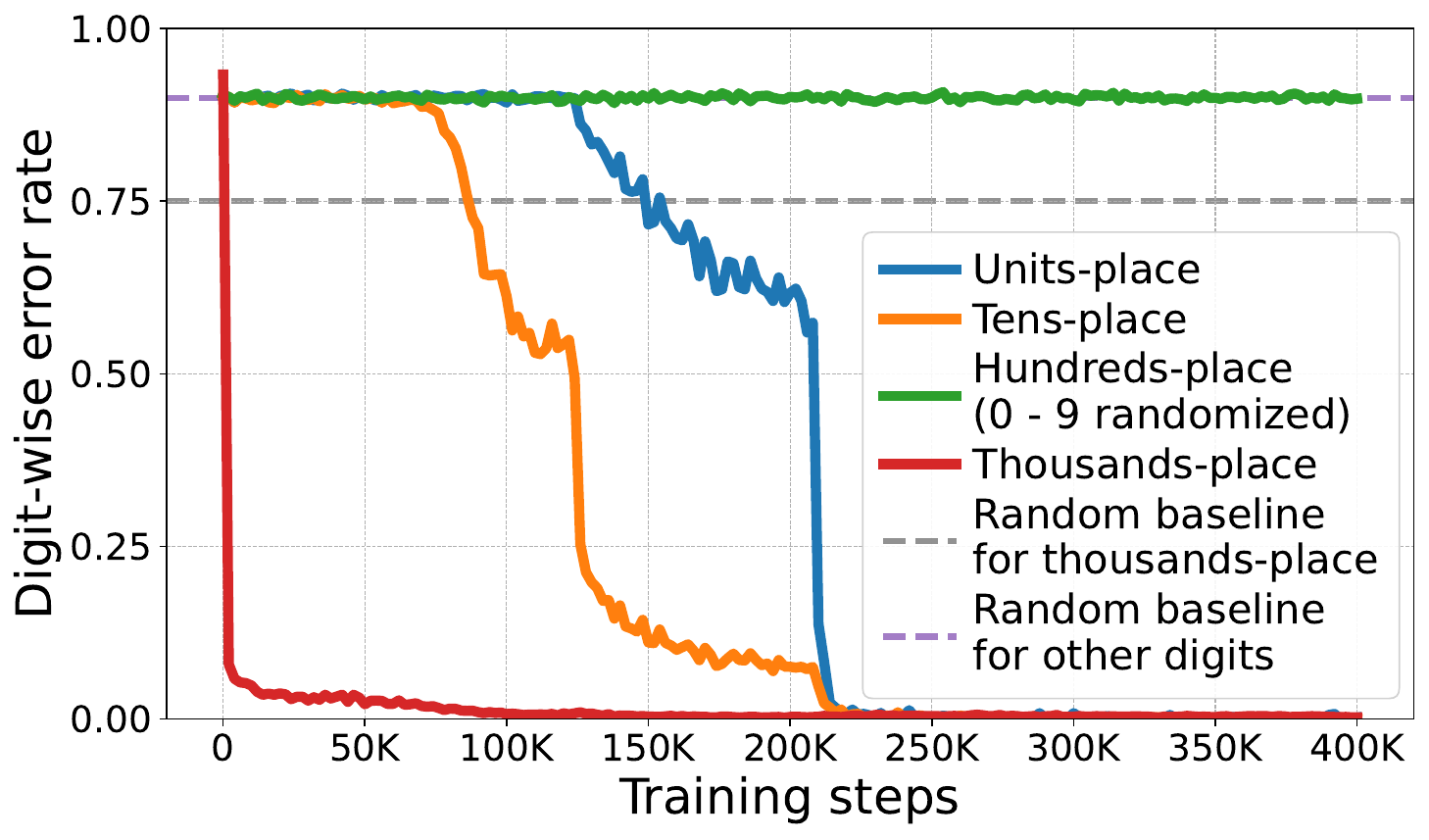}
        \caption{Randomize hundreds-place}
        \label{fig:addition_randomized_plain_b}
    \end{subfigure}\hfill
   
    \caption{\textbf{Ablation experiment results for plain output format addition.} We train transformers with modified training data for addition with the plain format, where the thousands-place digits (left) and hundreds-place digits (right) %\YZ{Is the subcaptions correct?} \PX{ They were blatantly wrong. Updated.}
    are replaced by uniform digits in $\{0,1,2,3\}$ and $\{0,1,\ldots,9\}$ respectively. Left plot shows that the model fails to learn any lower digit, whose digit-wise error rates are not better than random baselines. Right plot shows that when randomizing the hundreds-place, the model is still able to learn the tens-place and units-place, but taking about 5 times longer than without randomization.
    }
    \label{fig:addition_randomized_plain}
\end{figure}
In the main paper we showed the results for randomizing the thousands-place of the output when training addition in the reverse output format. We find similar results when randomizing the hundreds-place of the output. As shown in Figure~\ref{fig:addition_randomized_tens_units_reverse}, the model struggles to learn lower digits (tens-place and units-place). When randomizing tens-place in the reverse output format, we find the model is still able to learn the units-place. If randomizing the units-place, the model can learn all the higher digits without difficulty.

We also experimented with ablation when training in the plain format. As shown in Figure~\ref{fig:addition_randomized_plain}, when the thousands-place is randomized, the model fails to learn any lower digits. However, when we randomize the hundreds-place, the model still manages to learn the tens-place and units-place, but taking about 5 times longer than without randomization. As a comparison, with the hundreds-place randomized, the model takes around 125K and 200K to learn tens-place and units-place respectively, while without randomization, the model takes only around 25K and 40K steps to learn tens-place and units-place respectively, as shown in the left subplot of Figure~\ref{fig:1}.

\subsubsection{Prediction error distribution and a numerical approximator}
\label{appendix-approximator}

\begin{figure*}[ht]
  \begin{center}
    \begin{subfigure}[b]{0.34\textwidth}
        \centering
        \includegraphics[width=0.9\linewidth]{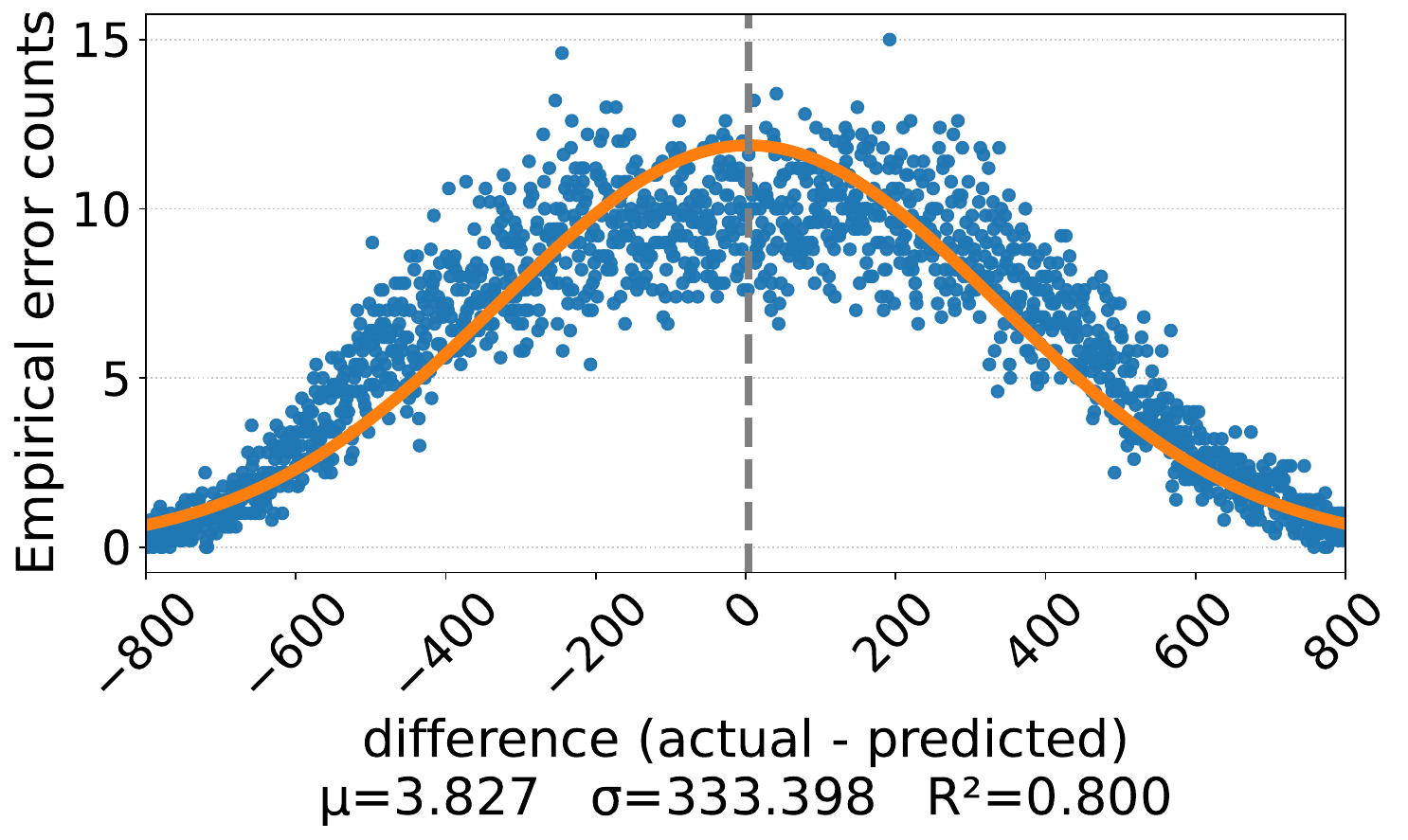}
        \caption{Training step 1,000 to 1,800}
        \label{fig:2a}
    \end{subfigure}\hfill
    \begin{subfigure}[b]{0.32\textwidth}
        \centering
        \includegraphics[width=0.95\linewidth]{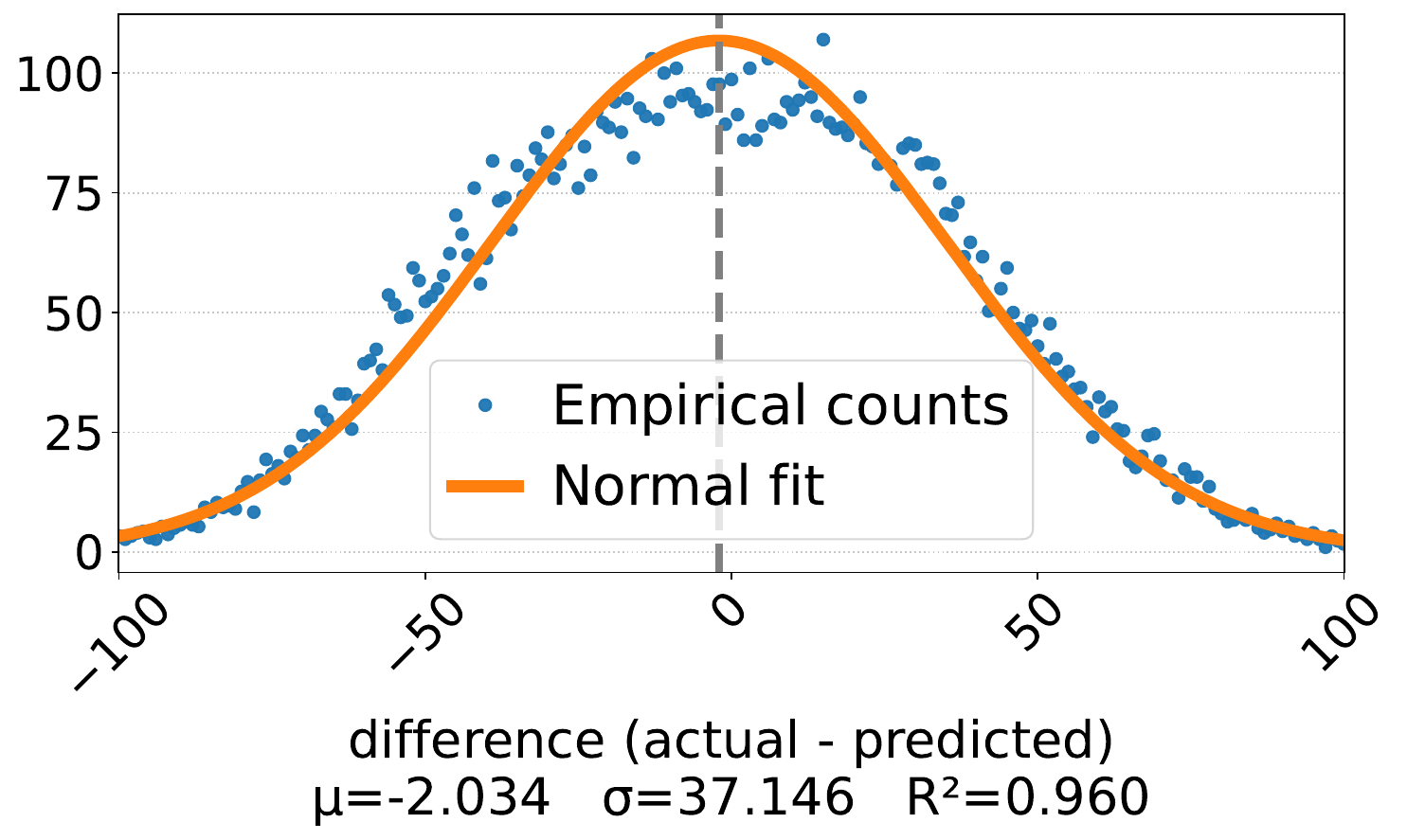}
        \caption{Training step 8,000 to 12,000}
        \label{fig:2b}
    \end{subfigure}\hfill
    \begin{subfigure}[b]{0.32\textwidth}
        \centering
        \includegraphics[width=0.95\linewidth]{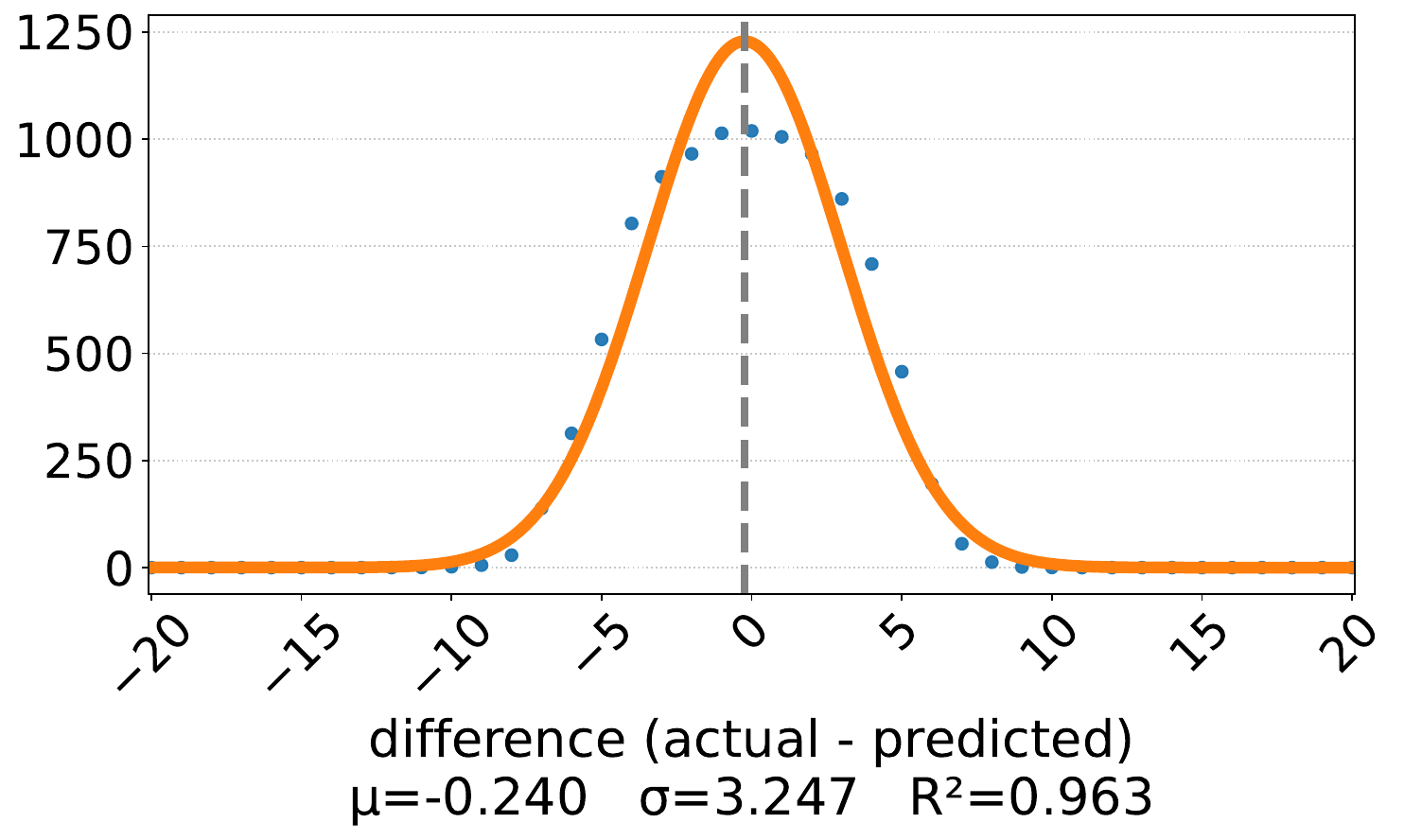}
        \caption{Training step 60,000 to 64,000}
        \label{fig:2c}
    \end{subfigure}
    
    \caption{
\textbf{Transformers learn addition similar to an improving approximation algorithm}, narrowing the error spread as training continues. When training the model for addition in reverse format, we compare the model's predicted integer $\widehat e$ and the groundtruth integer $e'$. The scatterplots are instances of the difference $e' - \widehat e$ collected over three intervals. We then fit a normal distribution for each scatterplot. Despite trained on raw digits, the model learns to approximate the correct solution with increasingly smaller Gaussian error. 
    }
    \label{fig:2}
  \end{center}
  \vskip -0.2in
\end{figure*}

This analysis complements the digit-wise results with an integer-level view. The reverse learning order already suggests that the typical magnitude of prediction error should shrink as the model learns higher-order digits first. Figure~\ref{fig:2} shows an additional and stronger pattern: even when lower digits are still frequently wrong, the predicted integers are already concentrated around the correct value, rather than being dispersed as if they arose from unrelated token-level mistakes.

For each test example, we compute the signed integer residual $e' - \widehat e$ between the groundtruth integer $e'$ and the model prediction $\widehat e$. To reduce checkpoint-to-checkpoint noise, we aggregate residuals across three evaluation windows corresponding to different stages of training. The resulting empirical residual distributions are well approximated by Gaussians whose variance decreases over training.

This suggests that the model behaves like an increasingly refined approximate predictor at the integer level. In other words, before it masters all output digits exactly, it already tends to produce integers that are numerically close to the target, and training progressively tightens this residual around zero. This observation is complementary to the digit-wise learning order: digit-wise error tells us \emph{which} positions are learned first, whereas the residual distribution shows that intermediate predictions remain globally tied to the correct integer value.

\paragraph{Error counts.}
In Figure~\ref{fig:2}, we aggregate residuals over the following windows: (a) left: iterations $1$K, $1.2$K, $1.4$K, $1.6$K, and $1.8$K; (b) middle: $8$K, $10$K, and $12$K; (c) right: $60$K, $62$K, and $64$K. We use windows rather than single checkpoints to smooth the bumpiness of the training dynamics, since at an individual checkpoint the empirical mean of the residual may fluctuate around $0$. Note also that the x-axis range changes from $[-800,800]$ in the left panel to $[-100,100]$ in the middle panel, and to $[-20,20]$ in the right panel.

\paragraph{Coefficient of determination ($R^2$).}
In Figure~\ref{fig:2}, we report the $R^2$ score for each Gaussian fit. We adopt the standard definition:
\[
SS_{\mathrm{res}}=\sum_{i=1}^n (y_i-\hat y_i)^2,\qquad
SS_{\mathrm{tot}}=\sum_{i=1}^n (y_i-\bar y)^2,
\]
where $\bar y=\frac{1}{n}\sum_{i=1}^n y_i$, and
\[
R^2 = 1-\frac{SS_{\mathrm{res}}}{SS_{\mathrm{tot}}}.
\]

% \subsubsection{Prediction error distribution and a numerical approximator}
% \label{appendix-approximator}

% In the main paper, we aggregated the model's prediction error counts among the evaluated steps in three chosen windows, which correspond to three stages in learning addition. The reason why we choose to aggregate among a window, instead of just reporting the counts at a single evaluated step, is to smooth out the bumpiness of training dynamics, as we find that for a single training step the mean of the prediction error may swing around 0.

% \paragraph{Error counts.} In Figure~\ref{fig:2}, we gather the errors on test data from evaluation in three windows, in order to smooth out ``bumpiness'' of a single checkpoint: (a) left plot: iteration $1$K, $1.2$K, $1.4$K, $1.6$K, $1.8$K; (b) middle plot: $8$K, $10$K \& $12$K. (Note that the x axis range has changed from $[-800,+800]$ to $[-100,+100]$); (c) right plot: iteration $60$K, $62$K \& $64$K. 

% \paragraph{Coefficient of determination ($R^2$).}
% In Figure~\ref{fig:2}, we reported $R^2$ scores for each normal fitting. We adopt the following definition of $R^2$.

% Given observed responses $\{y_i\}_{i=1}^n$ and corresponding model predictions $\{\hat y_i\}_{i=1}^n$, we define the residual sum of squares
% \[
% SS_{\mathrm{res}}=\sum_{i=1}^n (y_i-\hat y_i)^2
% \]
% and the total sum of squares about the sample mean $\bar y=\frac{1}{n}\sum_{i=1}^n y_i$,
% \[
% SS_{\mathrm{tot}}=\sum_{i=1}^n (y_i-\bar y)^2.
% \]
% The coefficient of determination is
% \[
% R^2 = 1-\frac{SS_{\mathrm{res}}}{SS_{\mathrm{tot}}}.
% \]

\subsubsection{Additional mutual information metric results}
\begin{figure}[t] 
    \centering 
    \begin{subfigure}{0.5\textwidth} 
        \centering 
        % Replace with your actual image file
        \includegraphics[width=0.9\linewidth]{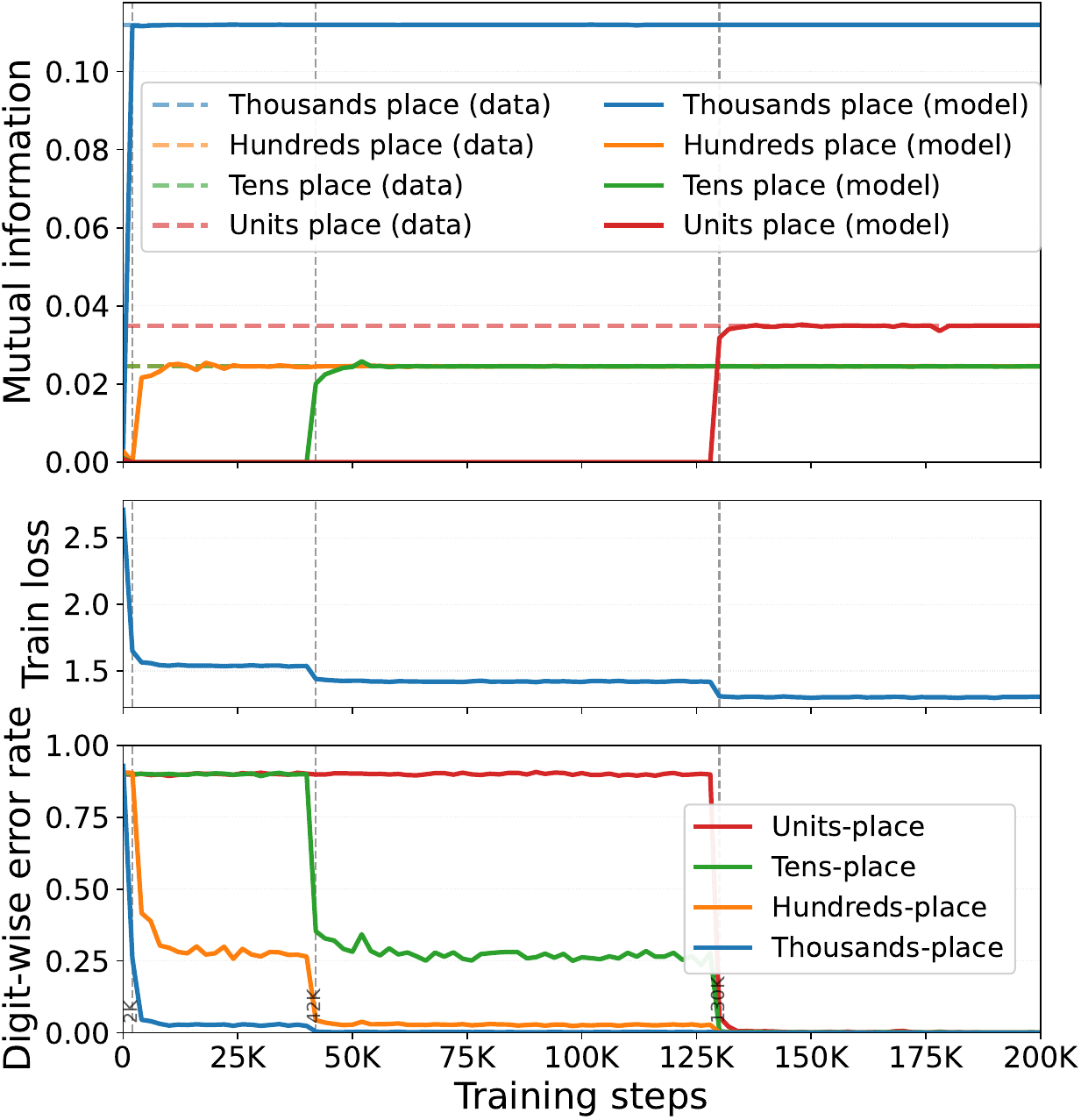} 
        \caption{MI conditioned on carries} 
    \end{subfigure}% 
    \hfill 
    \begin{subfigure}{0.5\textwidth} 
        \centering 
        % Replace with your actual image file
        \includegraphics[width=0.9\linewidth]{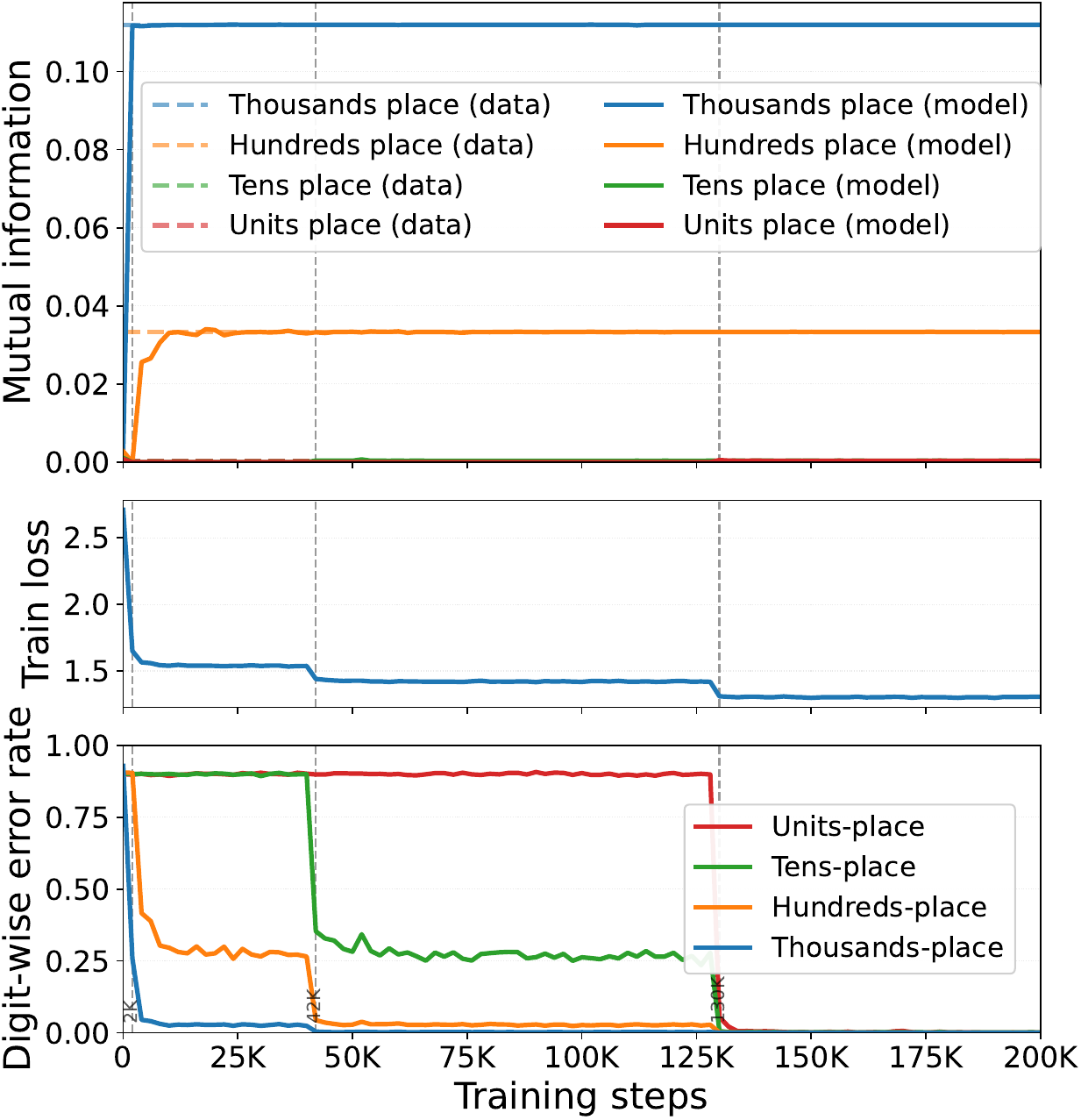} 
        \caption{MI conditioned on immediate higher digit} 
    \end{subfigure} 
    
    \caption{\textbf{Left: }In the left top subfigure, we track the MI metrics $I(a_1,\widehat p_0)$ (highest digit) and $I(a_i,\widehat p_i|c_{i-1})$ (other digits) using the model's prediction probabilities across training (solid curves) , which are compared against the corresponding $I(a_1,e_0)$ and $I(a_i,e_i|c_{i-1})$ based on the training data (dashed line). \textbf{Right: }In the right top subfigure, we track the MI metrics $I(a_1,\widehat p_0)$ (highest digit) and $I(a_i,\widehat p_i|e_{i-1})$ (other digits) using the model's prediction probabilities across training (solid curves), which are compared against the corresponding $I(a_1, e_0)$ and $I(a_i,e_i|e_{i-1})$ based on the training data (dashed lines), with $I(a_2,e_2|e_1) = 0, I(a_3,e_3|e_2) = 0$. The sharp descent of the training loss (middle) and digit-wise error rates (bottom) correspond to the model’s sudden matching with MI metrics at each output digit, indicating skill acquisition. %\PX{Updated to test on MI data of 1M lines, vertical lines adjusted}
    } 

    \label{fig:addition_mi_wth_digitwise_error} 
\end{figure}
In the main paper, we measured the MI $I(a_1,\widehat p_0)$ (highest digit) and MI conditioned on the carry to its immediate higher digit $I(a_i,\widehat p_i|c_{i-1})$ (other digits), which are compared against the corresponding $I(a_1,e_0)$ and $I(a_i,e_i|c_{i-1})$ based on the training data. We showed that the sharp decline of training loss corresponds to the model's sudden matching with MI metrics at each output digit. 

In Figure~\ref{fig:addition_mi_wth_digitwise_error} (right), we also track the MI conditioned on the immediate higher digit $I(a_i,\widehat p_i|e_{i-1})$ (for digits other than the highest one), which are compared against the corresponding $I(a_i,e_i|e_{i-1})$ based on the training data. ($I(a_2,e_2|e_1) = 0, I(a_3,e_3|e_2) = 0$) The figure shows that these MI matchings correspond to the sudden drops of the digit-wise error rates as well. 

\subsubsection{Distribution-shift tests}
\label{app:distribution-shift-tests}

We further evaluate whether the trained models remain accurate on structured out-of-distribution addition problems. 
For each operand count $n \in \{2,3,4\}$, we train a NanoGPT model on $n$-operand addition, where each operand is uniformly sampled from $\{0,\ldots,999\}$ and the result is written in the reverse format described in Section~\ref{sec:experimental-setup}. 
We evaluate the checkpoints whose final in-distribution test accuracy exceeds $97\%$.

Table~\ref{tab:shifted-tests} summarizes the three shifted test sets. 
Each test targets a different kind of structure that is underrepresented by uniform random sampling: unusually difficult carry propagation, highly correlated operands, and sums near digit-boundary edge cases.

\begin{table}[t]
\caption{
Summary of the distribution-shifted test sets. 
Examples are shown in standard human-readable order for clarity.
}
\label{tab:shifted-tests}
\centering
\small
\begin{tabular}{p{0.24\linewidth} p{0.68\linewidth}}
\toprule
\textbf{Test set} & \textbf{Construction} \\
\midrule
Hard carry &
All operands are three-digit numbers, and the carry at each digit position is maximal. 
For $n$ operands, the carry in each column is required to be $n-1$. 
For example, in the two-operand case, $999+999=1998$ has a carry of $1$ in each of the ones, tens, and hundreds columns. \\
\midrule
Close numbers with same prefix &
All operands are three-digit numbers sharing the same hundreds and tens digits; only the ones digit varies. 
For example, $314+316+311+319=1260$. \\
\midrule
Edge-result sums &
The target sum is constrained to lie in $[a-2,a+2]$, where $a$ is a multiple of $1000$. 
For example, $890+109=999$, $928+724+349=2001$, and $456+982+931+631=3000$. \\
\bottomrule
\end{tabular}
\end{table}

Table~\ref{tab:shifted-test-accuracy} reports the resulting test accuracies. 
Although all models remain generally accurate on these shifted evaluations, the accuracy decreases as the number of operands increases,  especially on hard-carry and edge-result examples.

\begin{table}[t]
\caption{
Accuracy on distribution-shifted addition tests for models trained on 2-, 3-, and 4-operand addition. 
Each model is selected from checkpoints whose final in-distribution test accuracy is above $97\%$.
}
\label{tab:shifted-test-accuracy}
\centering
\small
\begin{tabular}{lccc}
\toprule
\textbf{Distribution-shifted test set} 
& \textbf{2 operands} 
& \textbf{3 operands} 
& \textbf{4 operands} \\
\midrule
Hard carry & $99.99\%$ & $99.48\%$ & $95.17\%$ \\
Close numbers with same prefix & $99.98\%$ & $99.15\%$ & $97.07\%$ \\
Edge-result sums & $98.03\%$ & $97.22\%$ & $95.25\%$ \\
\bottomrule
\end{tabular}
\end{table}

% \subsubsection{Distribution shift tests}

% We train transformers on addition of different number of operands, and use different set of tests to evaluate their robustness to distribution shifts. Specifically, we train three NanoGPT models on 2/3/4-operand addition, with each operand uniformly sampled from 0 to 999 and the result written in the reverse format as described in Section \ref{sec:experimental-setup}. In each setting, we evaluate the trained model whose final in-distribution test accuracy is greater than 97\%.
\subsection{Simple Multiplication Task}
\label{appendix_simple_mul}
% \subsubsection{Alternative format}
% %\YZ{explain.} \PX{Updated.}
% \input{Figures/simple_mul_plain}
% In the main paper, we considered training transformers to do 40-digit times 1-digit multiplication in the reverse output format. We also trained models to do the same task but in the plain output format order. As shown in Figure~\ref{fig:simple-mul-plain}, we observed the same learning dynamics as in the reverse output format case--models simultaneously learn the reverse order and normal order.
\subsubsection{Output digit permutations}
\begin{figure}[t]
    \centering
    \begin{subfigure}{0.49\textwidth} % Adjust width as needed
        \centering
        \includegraphics[width=\linewidth]{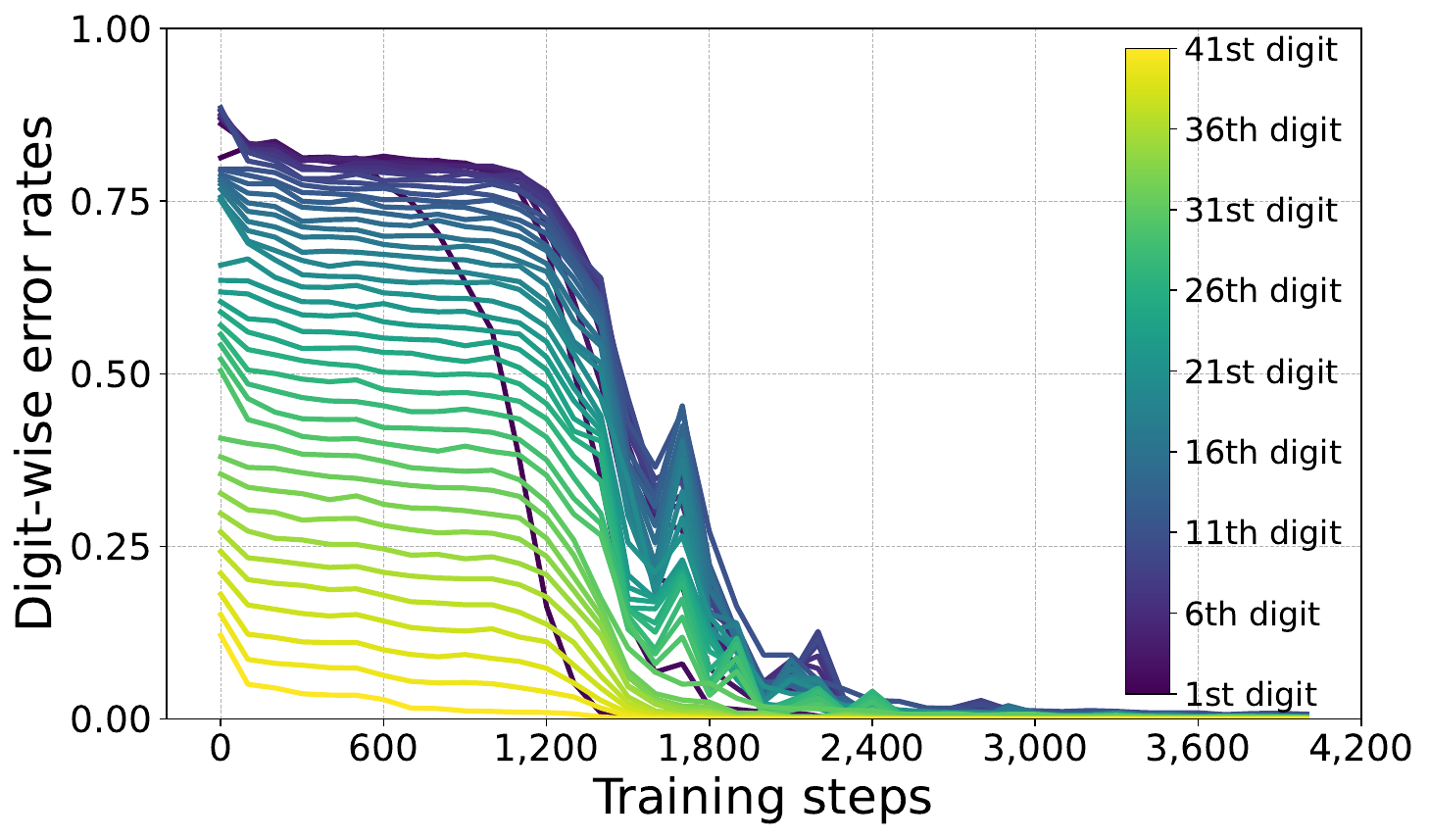} 
        \caption{Permutation 1}
    \end{subfigure}%
    \hfill %
    \begin{subfigure}{0.49\textwidth} % Adjust width as needed
        \centering
        \includegraphics[width=\linewidth]{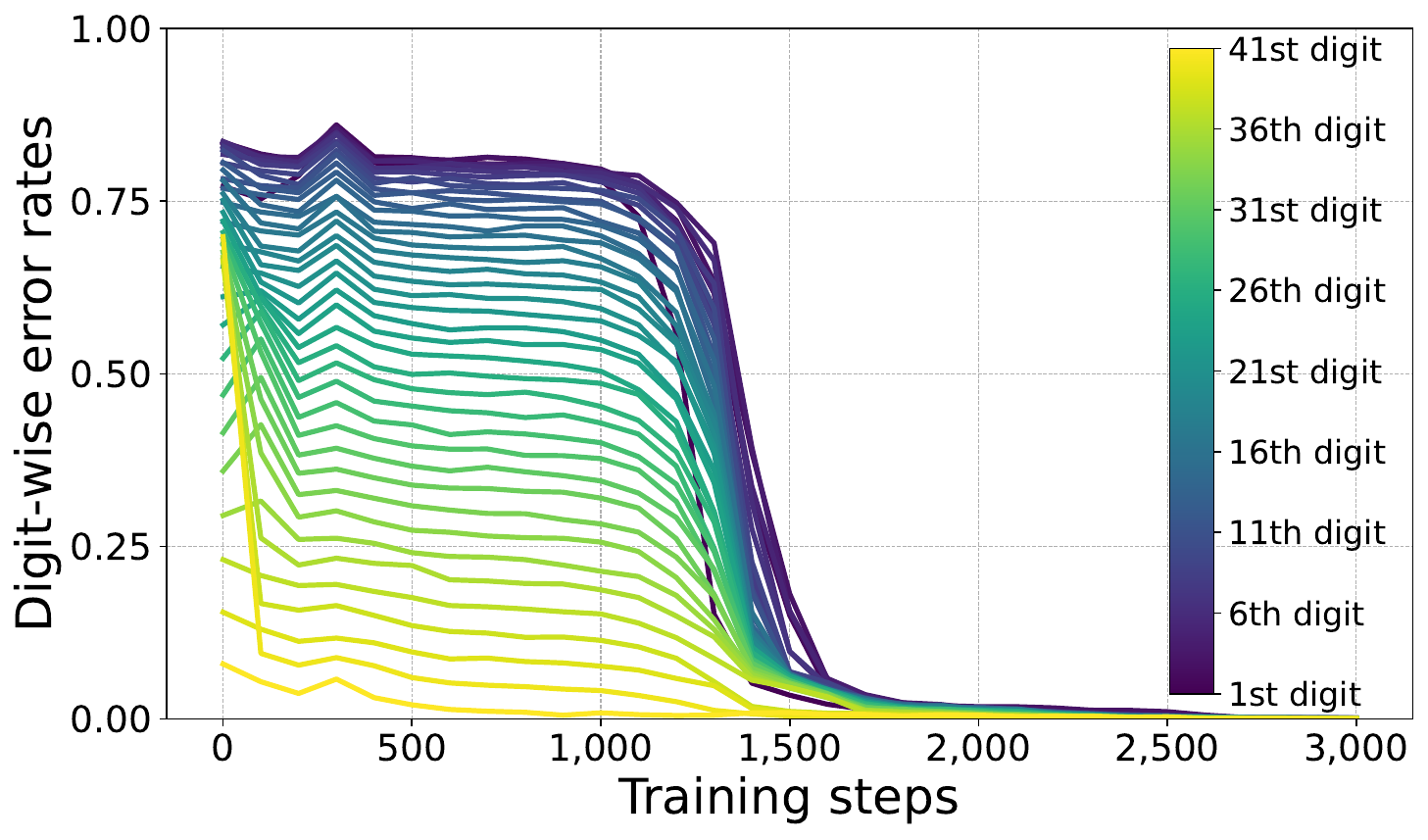} 
        \caption{Permutation 2}
    \end{subfigure}
    \caption{\textbf{Regardless of permutations of the output order, transformers learn digits in two-way order for simple multiplication}. We train transformers on simple multiplication of the format $a \times b = c$ with the digits of $c$ written in different permutation orders. Following Section~\ref{addition_perm}, we consider two permutation orders rather than plain and reverse format. In the left subfigure, we use the permutation "11 10 9 8 7 6 5 4 3 2 1 21 20 19 18 17 16 15 14 13 12 31 30 29 28 27 26 25 24 23 22 41 40 39 38 37 36 35 34 33 32" (block-reversal with rotation). In the right subfigure, we use the permutation "1 3 5 7 9 11 13 15 17 19 21 23 25 27 29 31 33 35 37 39 41 40 38 36 34 32 30 28 26 24 22 20 18 16 14 12 10 8 6 4 2" (odd positions then even positions, reversed evens).
    As in plain and reverse output format, we find two opposite sequential orders are learned simultaneously. (i) Reverse order: starting from the 41st (highest) digit to the 3rd digit; (ii) Normal order: starting from the 1st (lowest) digit. 
    } 
    \label{fig:simple_mul_perm}
\end{figure}

In the main paper, we presented the 40-digit times 1-digit training dynamics results using plain and reverse output format. Following Section~\ref{addition_perm}, we also consider two other permutations of output digits order for our simple multiplication task. One is "block-reversal with rotation", with the permutation being "11 10 9 8 7 6 5 4 3 2 1 21 20 19 18 17 16 15 14 13 12 31 30 29 28 27 26 25 24 23 22 41 40 39 38 37 36 35 34 33 32". Another is "odd positions then even positions, reversed evens", with the permutation being "1 3 5 7 9 11 13 15 17 19 21 23 25 27 29 31 33 35 37 39 41 40 38 36 34 32 30 28 26 24 22 20 18 16 14 12 10 8 6 4 2". As shown in Figure~\ref{fig:simple_mul_perm}, regardless of the permutation of the output digits, models learn digits in two opposite sequential orders, reverse order and normal order, simultaneously.

\subsubsection{Alternative lengths}
\begin{figure}[t]
    \centering
    \begin{subfigure}{0.49\textwidth} % Adjust width as needed
        \centering
        \includegraphics[width=\linewidth]{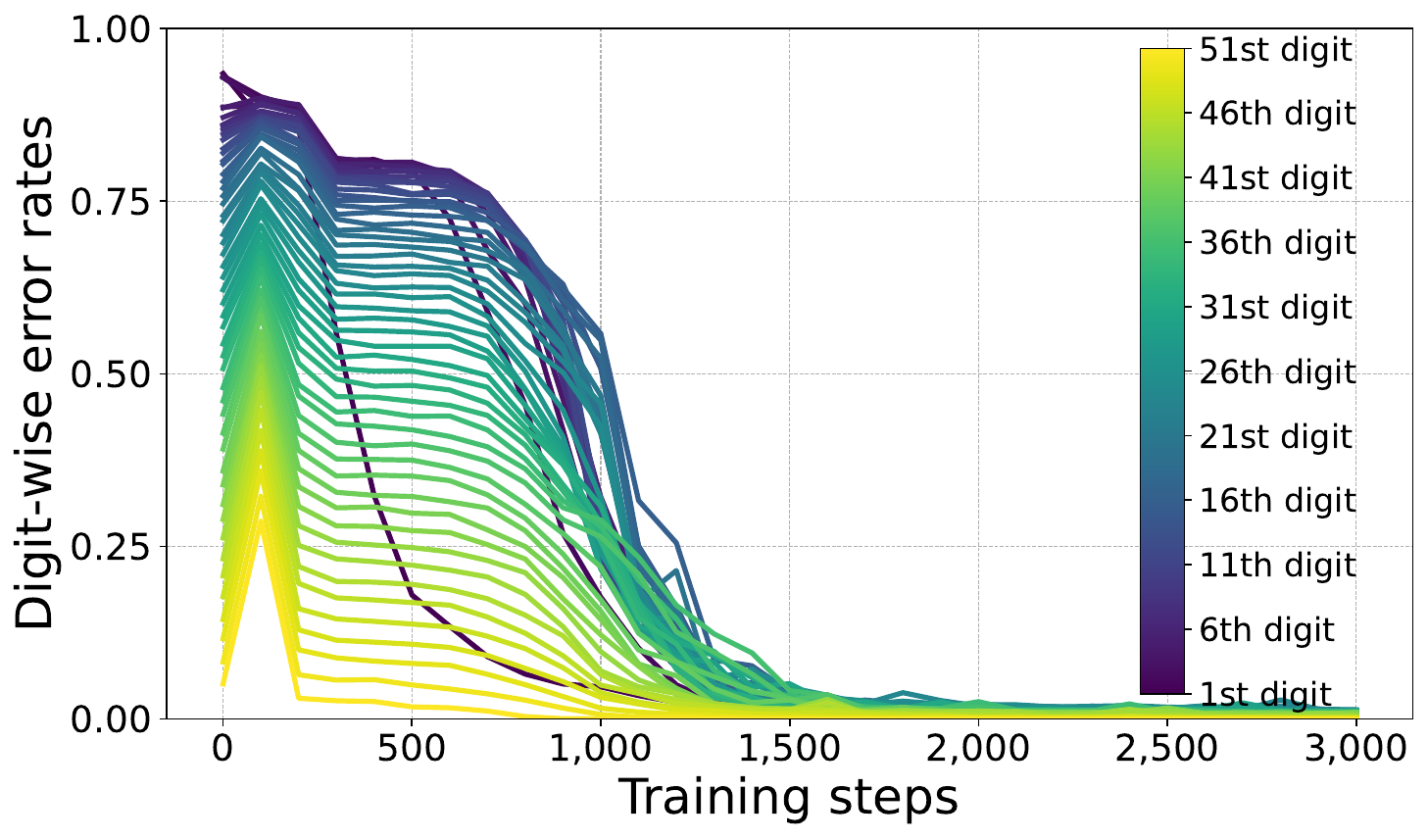} 
        \caption{Plain output format}
    \end{subfigure}%
    \hfill %
    \begin{subfigure}{0.49\textwidth} % Adjust width as needed
        \centering
        \includegraphics[width=\linewidth]{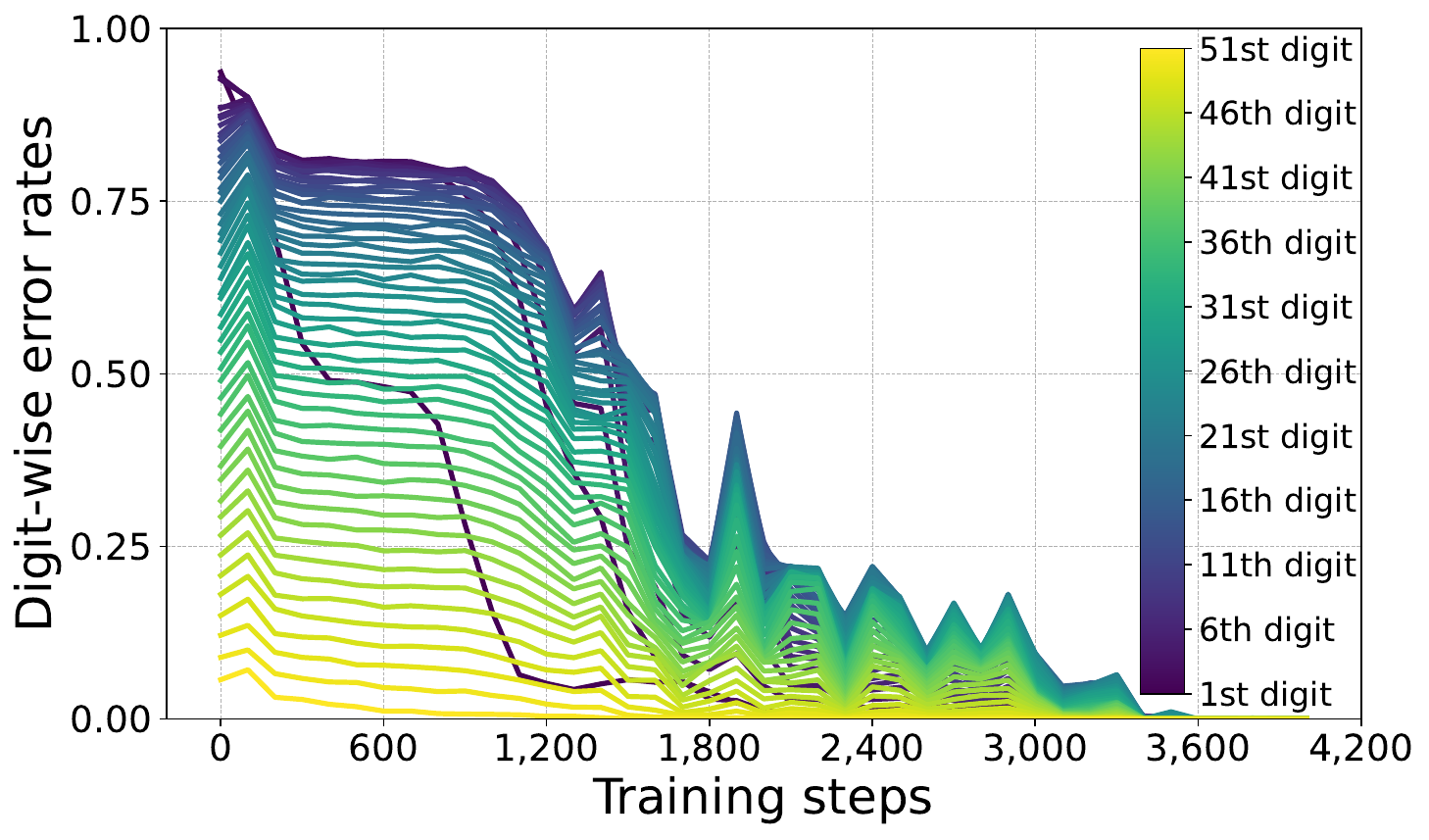} 
        \caption{Reverse output format}
    \end{subfigure}
    \caption{\textbf{Simple multiplication: 50-digit times 1-digit results}. Training transformers with 50-digit times 1-digit multiplication, we also find that reverse order and normal order are learned simultaneously.
    } 
    \label{fig:simple_mul_50_digit}
\end{figure}

We also train transformers with 50-digit times 1-digit addition. As in the 40-digit times 1-digit case, transformers learn the reverse order and normal order simultaneously, as shown in Figure~\ref{fig:simple_mul_50_digit}.

\subsubsection{Mutual information metric results}
\label{MI_mul}
\begin{figure}[t] 
    \centering 
    \begin{subfigure}{0.5\textwidth} 
        \centering 
        % Replace with your actual image file
        \includegraphics[width=0.9\linewidth]{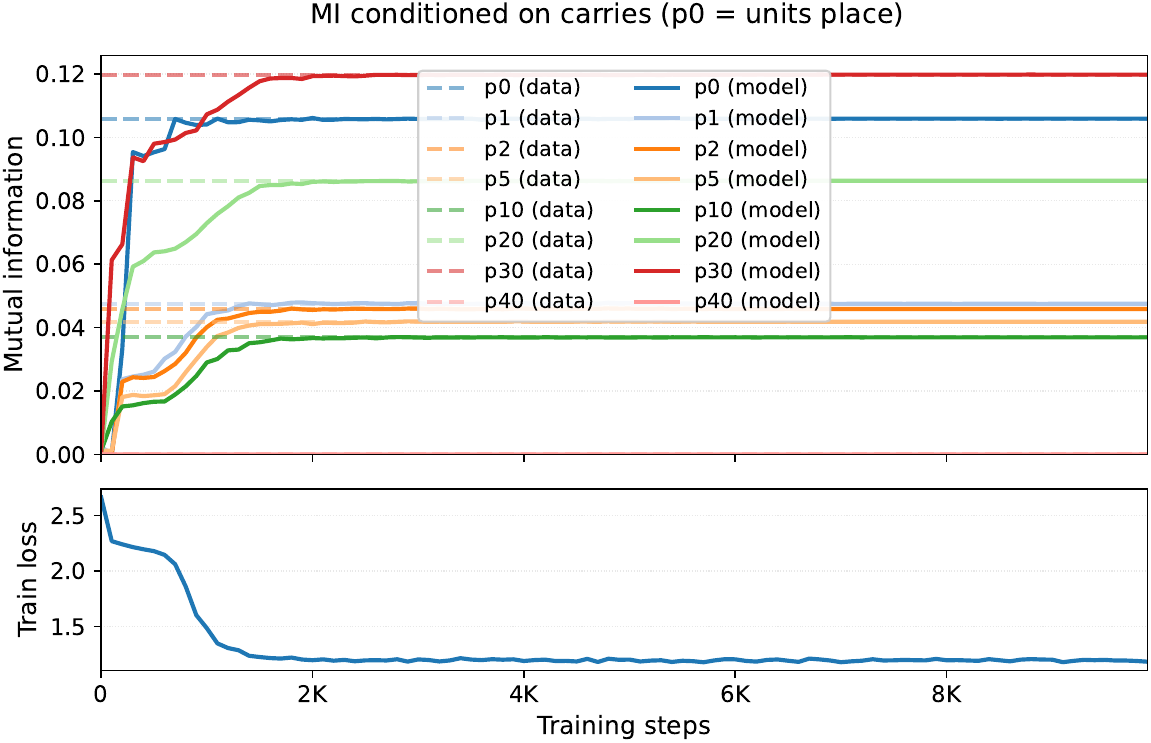}
        \caption{MI conditioned on carries for plain output format} 
    \end{subfigure}% 
    \hfill 
    \begin{subfigure}{0.5\textwidth} 
        \centering 
        % Replace with your actual image file
        \includegraphics[width=0.9\linewidth]{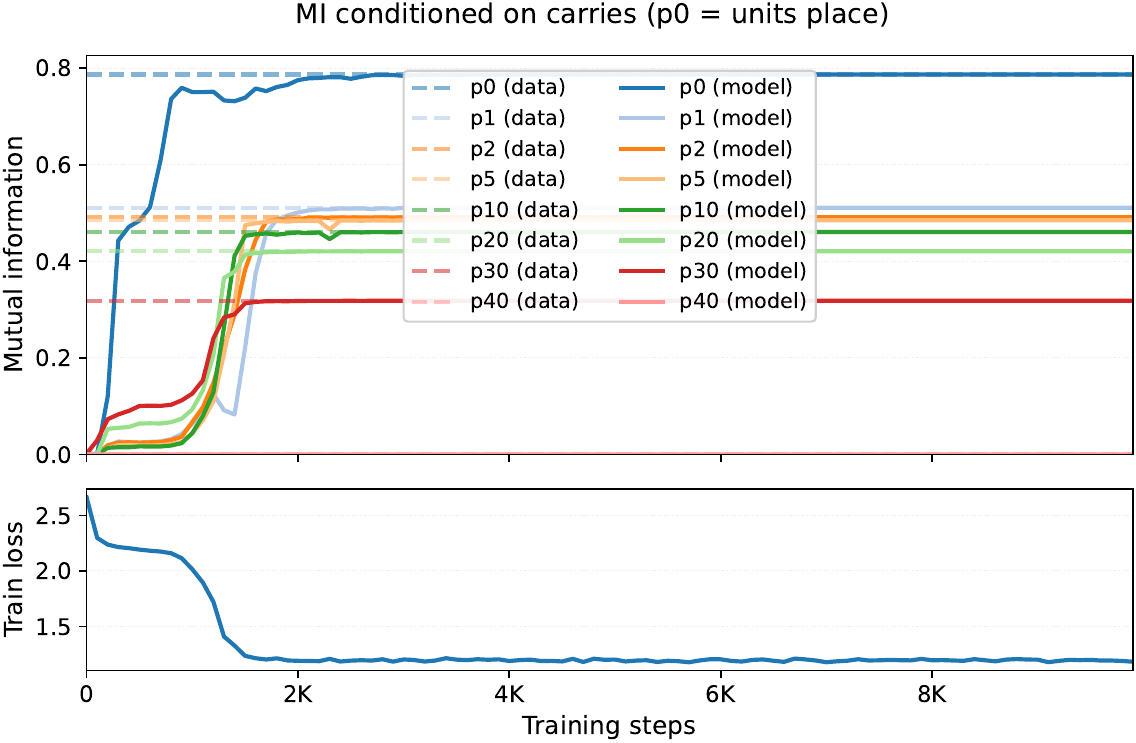} 
        \caption{MI conditioned on carries for reverse output format} 
    \end{subfigure} 
    \caption{
      \textbf{On multiplication, transformer's learning behavior is also captured by mutual information (MI) metrics}. As in addition, we track the MI metrics $I(a_1,\widehat p_0)$ (highest digit) and $I(a_i,\widehat p_i|c_{i-1})$ (other digits) using the model's prediction probabilities across training (solid curves), which are compared against the corresponding $I(a_1,e_0)$ and $I(a_i,e_i|c_{i-1})$ based on the training data (dashed line). Sharp descents of training loss are matched with changes of MI metrics at output digits, indicating skill acquisition.
    }  \label{fig:MI_mul_conditioned_on_carries} 
\end{figure}
In the main paper, we showed transformer's learning dynamics in addition align with the available correlational signals, measured by conditional mutual information. We similarly track the MI conditioned on carries for multiplication task. As shown in Figure~\ref{fig:MI_mul_conditioned_on_carries}, again the descents in training loss occur precisely when the model’s predictions come to match the training-data MI structure.

\subsection{Comparison task.}
\label{appendix-comparison}
\subsubsection{Subskills definition}
Solving the 4-digit number comparison task requires the model to master several subskills, combine and apply them in a logical way. These subskills include comparing the most significant differing digit, deciding the two numbers equal if all their digits equal. A more detailed description of these subskills and their logical orders is presented in Table~\ref{tab:comparison_subskills}. 
\begin{table}[h]
\centering
\caption{Subskills required in 4-digit number comparison.}
\label{tab:comparison_subskills}
\begin{tabular}{@{}p{0.18\linewidth} p{0.72\linewidth}@{}}
\toprule
\textbf{Subskill} & \textbf{Description} \\ 
\midrule
Subskill 1 & Compare the 1st digit \\
Subskill 2 & Compare the 2nd digit (apply only if Subskill 1 fails to decide). \\
Subskill 3 & Compare the 3rd digit (apply only if Subskills 1-2 fail to decide) \\
Subskill 4 & Compare the 4th digit (apply only if Subskills 1-3 fail to decide) \\
Subskill 5 & 	If all digits are identical, output '='
 \\
\bottomrule
\end{tabular}
\end{table}

\subsubsection{Comparison Contrast Pair Dataset}
\label{contrast_dataset}
We generate four sets of contrast pairs to isolate the contribution of specific digit positions (see Table~\ref{tab:contrast_pairs}). The testing pairs $(a, b)$ in set $k \in \{1, \dots, 4\}$ differ \textit{strictly} at position $k$, while all other digits remain identical. To generate a pair, we first sample a number $a$ uniformly from $[1000, 9999]$. We then construct $b$ by sampling a digit $b_k \in \{0, \dots, 9\} \setminus \{a_k\}$ for the $k$-th position, setting $b_i = a_i$ for all $i \neq k$. Formally, the condition for set $k$ is:
\begin{equation*}
    a_k \neq b_k \quad \land \quad a_i = b_i, \; \forall i \neq k
\end{equation*}

\begin{table}[h]
\centering
\caption{Examples of generated pairs $(a, b)$ for Comparison Contrast Pair Dataset.  For each set $k$, the pairs differ \textit{strictly} at the $k$-th position (highlighted in bold), while all other digits remain identical.}
\label{tab:contrast_pairs}
\begin{tabular}{@{}c l l l l c@{}}
\toprule
\textbf{Set ($k$)} & \textbf{Active Position} & \textbf{Constraint} & \textbf{Example $a$} & \textbf{Example $b$} & \textbf{Label} \\ \midrule
1 & Thousands & $a_1 \neq b_1$; $a_{j \neq 1} = b_j$ & $\mathbf{9}\,1\,5\,7$ & $\mathbf{1}\,1\,5\,7$ & $>$ \\
2 & Hundreds & $a_2 \neq b_2$; $a_{j \neq 2} = b_j$ & $3\,\mathbf{5}\,5\,9$ & $3\,\mathbf{9}\,5\,9$ & $<$ \\
3 & Tens & $a_3 \neq b_3$; $a_{j \neq 3} = b_j$ & $7\,3\,\mathbf{8}\,9$ & $7\,3\,\mathbf{3}\,9$ & $>$ \\
4 & Units & $a_4 \neq b_4$; $a_{j \neq 4} = b_j$ & $5\,3\,5\,\mathbf{0}$ & $5\,3\,5\,\mathbf{4}$ & $<$ \\ \bottomrule
\end{tabular}
\end{table}

\subsubsection{Additional subskill test}
%\PX{Updated.}
\begin{figure}[t]
    \centering

        \centering
        \includegraphics[width=0.6\linewidth]{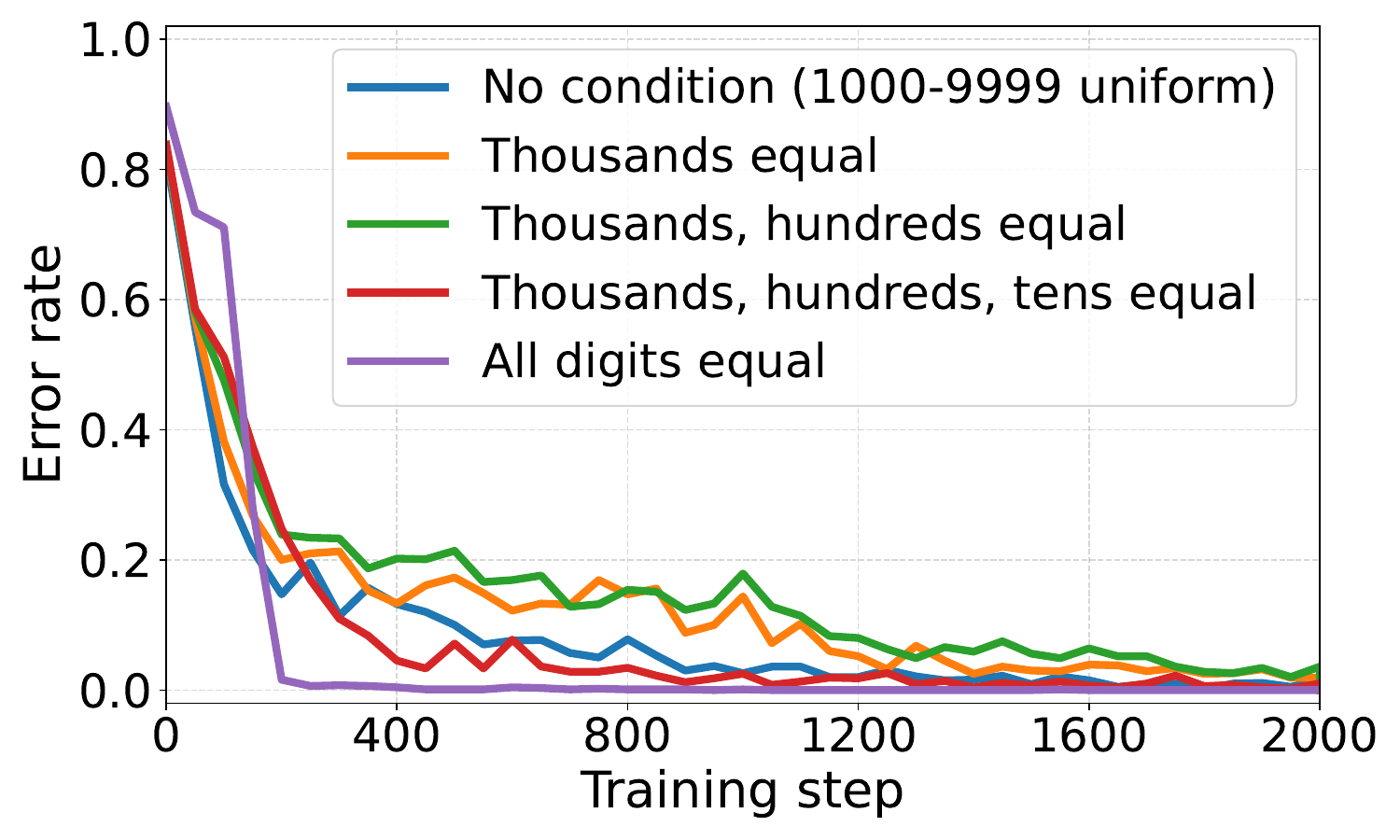} 

    % \begin{subfigure}{0.5\textwidth} % Adjust width as needed
    %     \centering
    %     \includegraphics[width=0.9\linewidth]{Figures/Images/staircase_digitwise_test.pdf} 

    % \end{subfigure}%
    \caption{\textbf{Transformers learn to compare numbers of different closeness levels concurrently.}. When we use test data of the same distribution as training data (NCID Group 0 to 4), we find transformers gain accuracy improvement in all NCID groups simultaneously, which indicates transformers learn to compare numbers of different closeness levels (compare 6183 \& 6189 vs compare 7291 \& 4813) at the same time.}
    \label{fig:comparison_staircase_test}
\end{figure}

Besides the Contrast Pair Dataset we designed for assessing subskill acquisition, we also use test data of the same distribution as training data (NCID Group 0 to 4). Different NCID groups basically contain numbers of different closeness levels (e.g. models compare 6183 \& 6189 in NCID Group 3 while comparing  7291 \& 4813 in NCID Group 0). As shown in Figure~\ref{fig:comparison_staircase_test}, we find that subskills of comparing numbers of different closeness levels are learned simultaneously.

\subsubsection{Different random seeds for contrast pairs}
\label{app:comparison-random-seeds}

\begin{figure}[t]
    \centering
    \begin{subfigure}{0.49\textwidth}
        \centering
        \includegraphics[width=\linewidth]{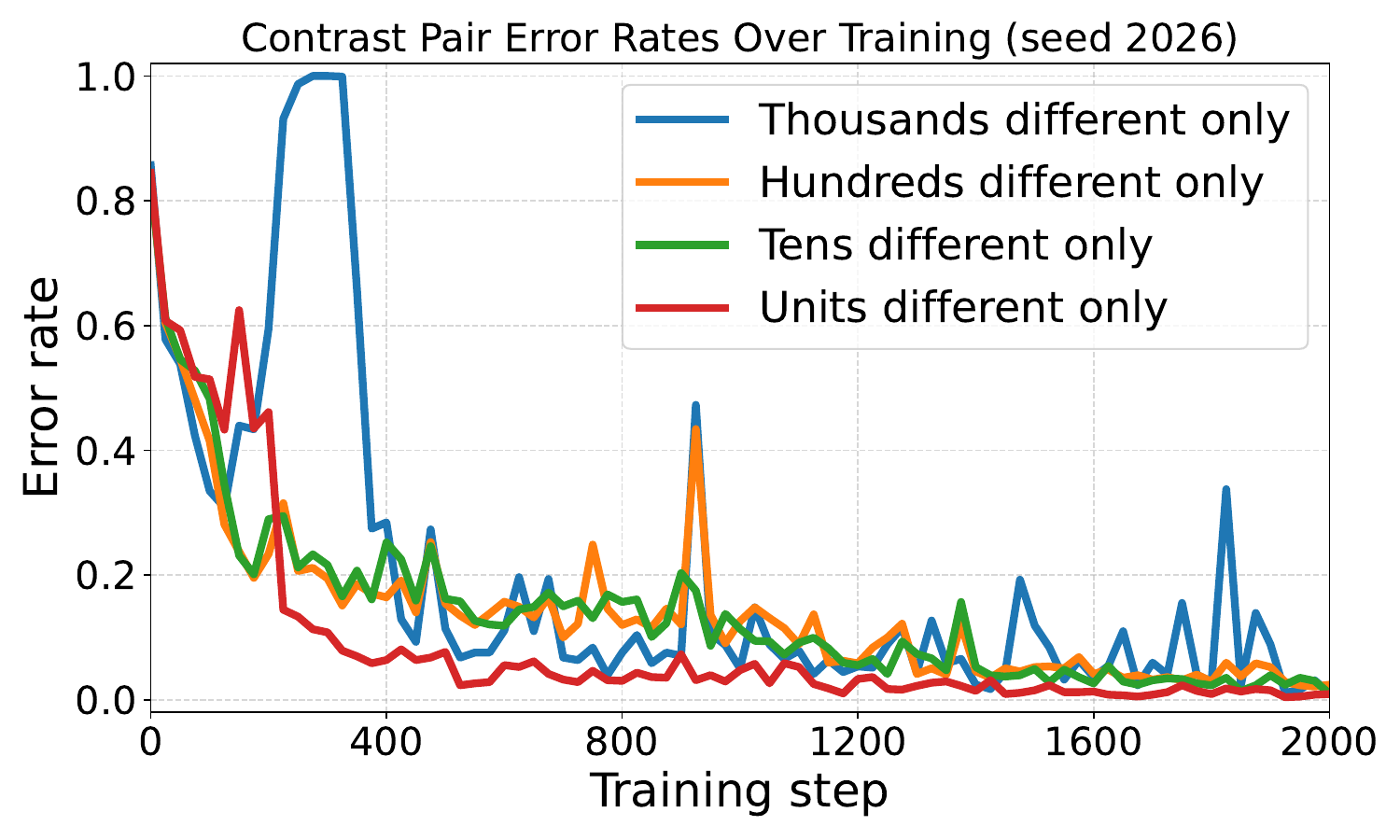}
        \caption{Using random seed 2026}
        \label{fig:comparison_contrast_pair_seed_2026}
    \end{subfigure}
    \hfill
    \begin{subfigure}{0.49\textwidth}
        \centering
        \includegraphics[width=\linewidth]{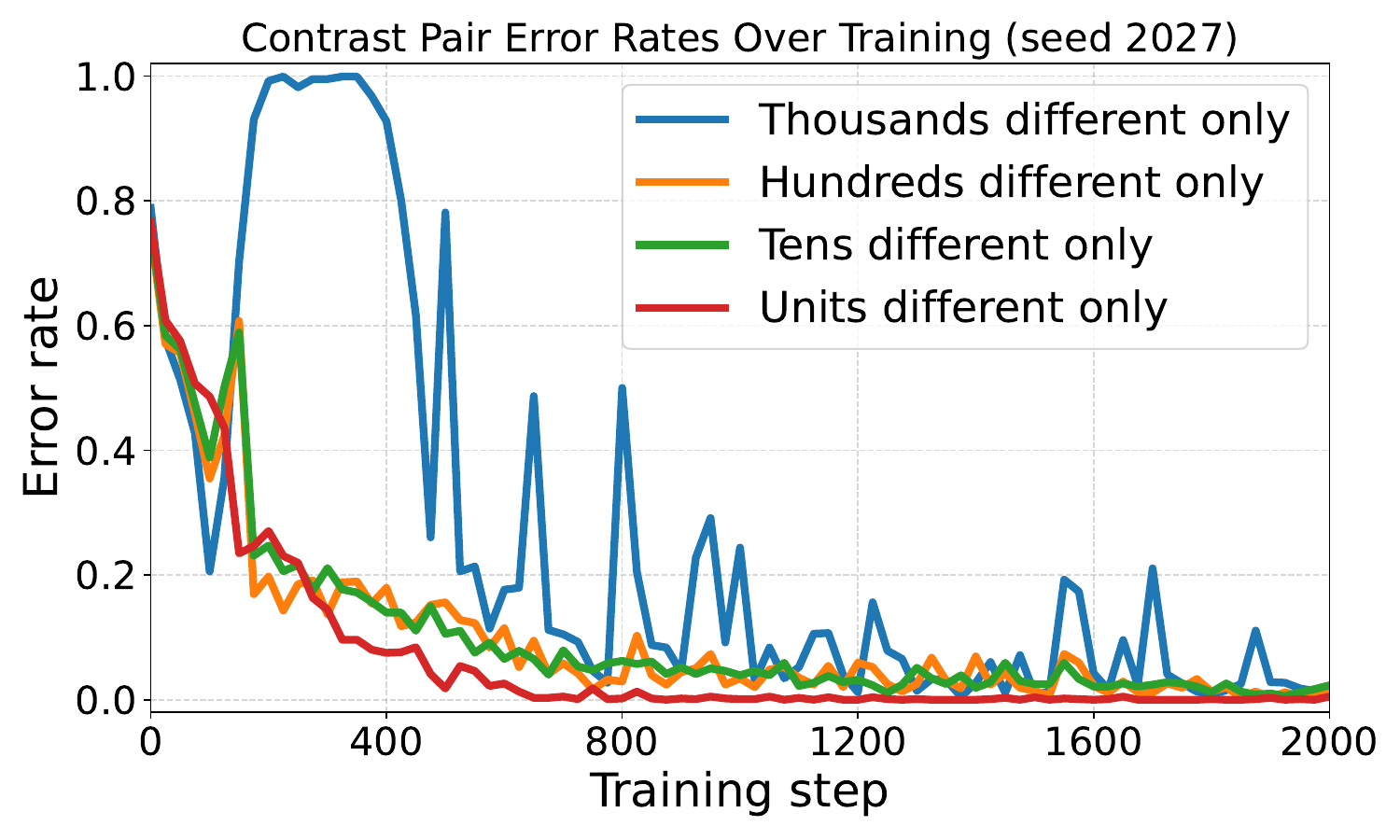}
        \caption{Using random seed 2027}
        \label{fig:comparison_contrast_pair_seed_2027}
    \end{subfigure}

    \caption{
    \textbf{Additional contrast-pair results using different random seeds.}
    We reran the comparison contrast-pair evaluation with different random seeds. 
    Across seeds, the same qualitative pattern persists: error rates on contrast pairs exhibit temporary non-monotonicity, indicating that partially learned comparison subskills can interfere with one another before stabilizing.
    }
    \label{fig:comparison_contrast_pair_random_seeds}
\end{figure}

We further test whether the contrast-pair interference pattern is sensitive to training randomness by rerunning the same comparison experiment with different random seeds. As shown in Figure~\ref{fig:comparison_contrast_pair_random_seeds}, the qualitative behavior is consistent across seeds. The contrast-pair evaluations again show temporary non-monotonicity, supporting that independently useful comparison subskills can transiently compete before the model reaches a stable solution.
\subsection{Sorting task.}

\subsubsection{Uniform sampling}\label{sec:append-sorting-uniform}

%\YZ{Present a few results and discussions here. Why is uniform sampling not an interesting sampling strategy? Provide some rationales.} \PX{Updated.}
\begin{figure}[t]
    \centering

    %{0.8\textwidth} % Adjust width as needed
        \centering
        \includegraphics[width=0.6\linewidth]{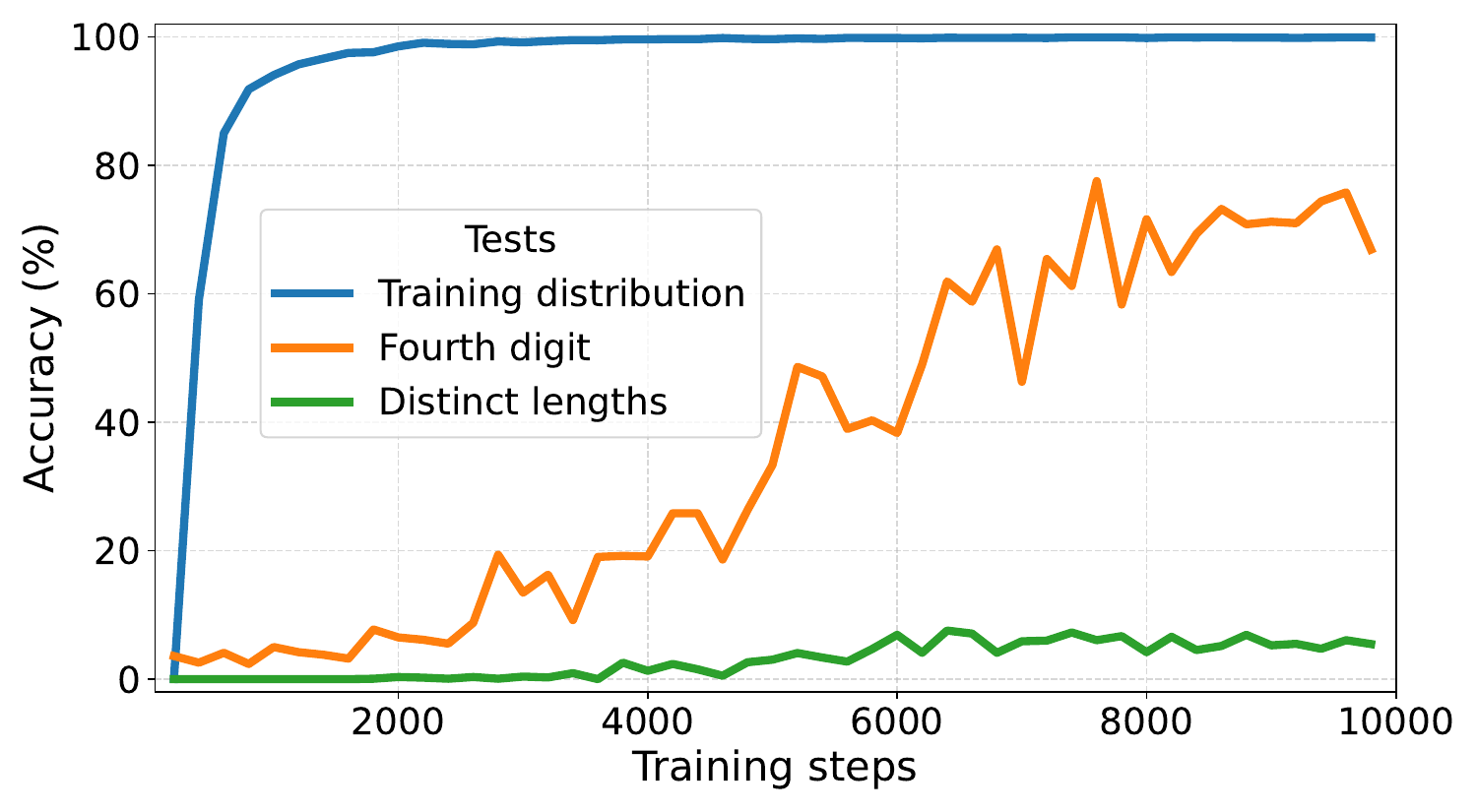}

    \caption{\textbf{Under uniform sampling, transformers fail to effectively learn subskills essential in the sorting task}. We train NanoGPTs on 4 operand sorting task. Under uniform sampling, each input number is uniformly sampled from 0 to 9999. Despite achieving a perfect accuracy on the training distribution early in training, the model struggles to compare numbers of distinct lengths and to acquire the 4th digit comparing subskill.}
    \label{fig:sorting_uniform}
\end{figure}

In the sorting task, all training examples take the following format $a,b,c,d \to \mathrm{sorted}(a,b,c,d) $,
\[
\underbrace{9312}_{a}, \underbrace{4661}_{b}, \underbrace{405}_{c}, \underbrace{6252}_{d} \to \underbrace{405}_{c}, \underbrace{4661}_{b}, \underbrace{6252}_{d}, \underbrace{9312}_{a}. 
\]
We experimented with different sampling strategies for generating the training dataset. One possible approach is uniform sampling, where $a,b,c,d$ are uniformly sampled from 0 to 9999. We find under uniform sampling, despite achieving perfect accuracy on the training data distribution, the model performs poorly on sorting the following two types of numbers.

\begin{enumerate}
    \item \textbf{Close numbers: } As described in Section~\ref{sec:doubly_bal_sorting_details}, we adopt a suite of specially designed test examples to assess models' subskill learning. Particularly, to measure the 4th digit comparing accuracy we use 4-digit numbers as input, where we controlled the first three digits to be equal. As shown in Figure~\ref{fig:sorting_uniform}, we find the model's 4th digit accuracy stagnates at around 70\%, failing to achieve satisfactory subskill accuracy.
    \item \textbf{Numbers of distinct lengths: } Considering that under uniform sampling the input numbers range from 0 to 9999, to evaluate the model's length comparing ability, we design the "distinct lengths" test as follows: The 4 input numbers have distinct lengths, being 1-digit, 2-digit, 3-digit, 4-digit respectively. A test example is $15,4,5303,920$. However, we find the model fails to acquire this length comparison subskill throughout the training with an accuracy below 10\%.
\end{enumerate}

Based on these results, we conclude that uniform sampling is not suitable for models to learn different subskills essential in solving the general sorting task. To expose model more often to training examples of different closeness levels and of varying length we introduced the "doubly balanced" training dataset described in Section~\ref{sec:append-data-gen-sorting}.

\subsubsection{Further details of Section~\ref{sec:sorting}}  \label{sec:doubly_bal_sorting_details}

Solving the 4 operand 3-digit \& 4-digit sorting task requires the model to master several subskills, combine and apply them in a logical way. These subskills include identifying, then comparing the length of each input number and digit-wise comparison. A more detailed description of these subskills and their logical orders is presented in Table~\ref{tab:sorting_subskills}. We note that in sorting these subskills have a hierarchical precedence, meaning that the model should apply one subskill only if all previous subskills fail to resolve the ordering.

% \begin{enumerate}
%     \item Identifying and comparing the length of each input number
%     \item Compare 1st digit
%     \item Compare 2nd digit
%     \item Compare 3rd digit
%     \item Compare 4th digit
% \end{enumerate}

\begin{table}[h]
\centering
\caption{Subskills required in 3-digit \& 4-digit number sorting.}
\label{tab:sorting_subskills}
\begin{tabular}{@{}p{0.18\linewidth} p{0.72\linewidth}@{}}
\toprule
\textbf{Subskill} & \textbf{Description} \\ 
\midrule
Subskill 1 & Identify and compare the length of each number. \\
Subskill 2 & Compare the 1st digit (apply only if Subskill 1 fails to decide). \\
Subskill 3 & Compare the 2nd digit (apply only if Subskills 1--2 fail to decide). \\
Subskill 4 & Compare the 3rd digit (apply only if Subskills 1--3 fail to decide). \\
Subskill 5 & Compare the 4th digit (apply only if the number is 4-digit and Subskills 1--4 fail to decide). \\
\bottomrule
\end{tabular}
\end{table}

\paragraph{Crude length learned first, individual digits learned in parallel} To determine the learning order of these subskills, we use a set of specially targeted tests to assess model's acquisition of each subskill. We summarize each test file and our corresponding evaluation criterion for model's prediction to be counted as correct in Table~\ref{tab:subskill_testfiles}.

\begin{table}[h]
\centering
\caption{Test files we use to assess the model's acquisition of different subskills and the corresponding criterion for a prediction to be counted as correct. $l(a,b,c,d)$ denotes the length of the four numbers.}
\label{tab:subskill_testfiles}
\begin{tabular}{@{}p{5.0cm} p{3.0cm} p{5.2cm}@{}}
\toprule
\textbf{Test file description} & \textbf{Targeted subskill} & \textbf{Criterion} \\ 
\midrule
Doubly balanced test & Length comparison & All output numbers are of correct lengths \\

NCID = 0, $l(a,b,c,d)=(4,4,4,4)$ & 1st digit comparison & Output the 1st digits of the 4 input numbers in correct order \\

NCID = 1, $l(a,b,c,d)=(4,4,4,4)$ & 2nd digit comparison & Output the 2nd digits of the 4 input numbers in correct order \\

NCID = 2, $l(a,b,c,d)=(4,4,4,4)$ & 3rd digit comparison & Output the 3rd digits of the 4 input numbers in correct order \\

NCID = 3, $l(a,b,c,d)=(4,4,4,4)$ & 4th digit comparison & Output the 4th digits of the 4 input numbers in correct order \\
\bottomrule
\end{tabular}
\end{table}

As shown in Figure~\ref{fig:sorting_subskill_order}, the model learns the "crude" length early in training before any digit can be predicted correctly. In other words, at this stage, the model can output the position of the comma, the separating character, correctly. Later, the model learns each individual digit mostly in parallel, with the fourth digit lagging behind.

\paragraph{Learning different subskills creates competitions.} Solving sorting task requires learning distinct subskills, one sometimes even conflicting with another. We hypothesize that learning one subskill poses challenges to learning other subskills. As a control group to the previously adopted "doubly balanced" training dataset, we design the  "closeness-balanced only" training dataset, where all numbers are 4-digit. For each training example, we randomly draw the NCID group from $k \in\{0,1,2\}$, each with probability $1/3$. See Table~\ref{tab:sorting_closeness_bal_only} for details.

\begin{table}[h]
\centering
\caption{"Closeness-balanced only" training dataset for sorting task. Except that all numbers are 4-digit, all other constraints are the same as in the "doubly balanced" dataset. $l(a,b,c,d)$ denotes the length of the four numbers.}
\label{tab:sorting_closeness_bal_only}
\begin{tabular}{@{}c l l l@{}}
\toprule
\textbf{NCID ($k$)}  & \textbf{Constraint} & \textbf{Example input} \\ \midrule
0 & $l(a,b,c,d) = (4,4,4,4) $ & $3621,7916,2040,7304$ \\
1 & $l(a,b,c,d) = (4,4,4,4); a_1=b_1=c_1=d_1 $ & $7370,7224,7141,7521$ \\
2 & $l(a,b,c,d) = (4,4,4,4); a_{1:2}=b_{1:2}=c_{1:2}=d_{1:2} $ & $8168,8133,8195,8174$ \\
 \bottomrule
\end{tabular}
\end{table}

% \begin{enumerate}
%     \item \textbf{NCID = 0: No closeness enforcement.} Each number is uniformly drawn from 1000 to 9999.
%     \item \textbf{NCID = 1: Identical 1st digit}  The single common digit is uniformly drawn from 1 to 9. The rest 3 digits for each 4-digit number is uniformly drawn from 000 to 999. (e.g. 5810,5099,5812,5285)
%     \item \textbf{NCID = 2: Identical 1st \& 2nd digit} The two common digits are uniformly drawn from 10 to 99. The two rest digits for each 4-digit number is uniformly drawn from 00 to 99. (e.g. 2552,2506,2563,2586)
% \end{enumerate}

\begin{figure}[t]
    \centering
    \begin{subfigure}{0.6\textwidth} % Adjust width as needed
        \centering
        \includegraphics[width=\linewidth]{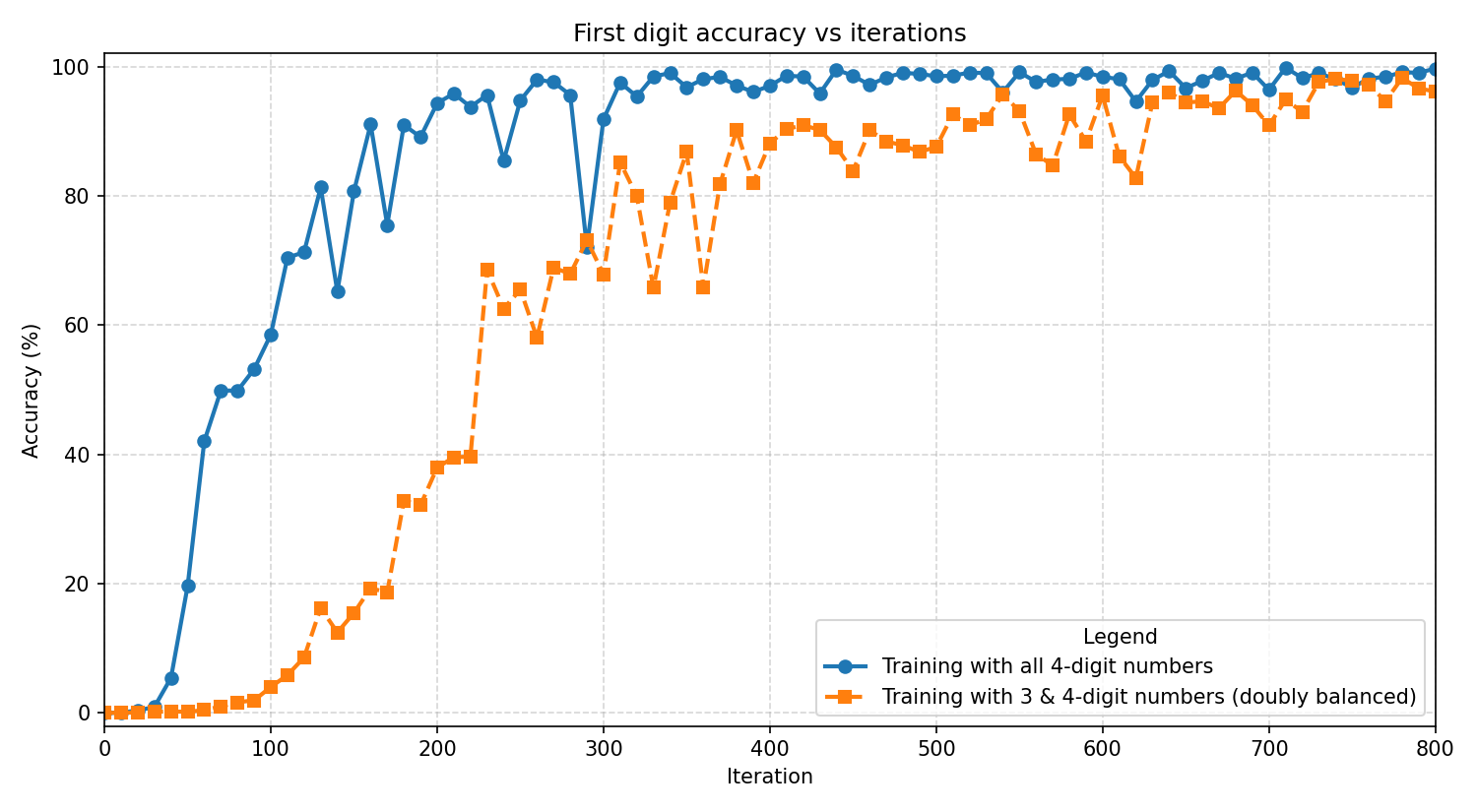} 
    \end{subfigure}%
    \caption{\textbf{Learning different subskills in sorting creates competition.} We train NanoGPTs using either closeness balanced only dataset (\textbf{Blue line}) or doubly balanced dataset (\textbf{Orange line}). Unlike in the doubly balanced dataset where the model needs to learn both comparing length and comparing individual digits, in the closeness balanced only dataset, the model does not need to learn comparing the length, since all numbers are 4-digit. As a result, the model takes much shorter time to achieve 80\%+ accuracy in first digit comparison.}
    \label{fig:competition_length_first_digit}
\end{figure}
Since all input numbers are 4-digit, the model does not need to learn the length comparison subskill. As shown in Figure~\ref{fig:competition_length_first_digit}, the model learns 1st digit comparison (subskill 2) significantly earlier when it does not need to learn length comparison (subskill 1). Specifically, when trained with "closeness-balanced only" dataset, it only takes 130 iterations to achieve a $80\%+$ accuracy. As a comparison, it takes the model 310 iterations to achieve similar performance when trained in the "doubly balanced" dataset.

\paragraph{Models sometimes apply different subskills in undesired ways.} Sorting not only requires models to master different subskills individually, but also to apply them in a logical way. We find during training models are likely to combine different subskills in undesired ways. We design the following three "skewed" test datasets, in all cases one of the input number is 4-digit, the other three numbers are 3-digit:

\begin{enumerate}
    \item \textbf{First digit skewed} The first digit of the 4-digit number is drawn from 1 to 4, while the first digit of the 3-digit numbers is drawn from 5 to 9. The other digits are drawn from 0 to 9. A skewed error in this case is defined as the model outputting the first digit of the 4-digit number as the start of its predicted smallest number. A test example and its corresponding skewed error would be \(2774,524,996,875 = 2\ldots\)
    \item \textbf{Second digit skewed} The 4 input numbers share the same first digit, which is drawn from 1 to 9. The second digit of the 4-digit number is drawn from 0 to 4, while the second digit of the 3-digit numbers is drawn from 5 to 9. The other digits are drawn from 0 to 9. A skewed error in this case is defined as the model outputting the first two digits of the 4-digit number as the start of its predicted smallest number. A test example and its corresponding skewed error would be \(197,1234,165,183 = 12\ldots\)
    \item \textbf{Third digit skewed} The 4 input numbers share the same 1st and 2nd digit, which are drawn from 10 to 99. The third digit of the 4-digit number is drawn from 0 to 4, while the third digit of the 3-digit numbers is drawn from 5 to 9. A skewed error in this case is defined as the model outputting the first three digits of the 4-digit number as the start of its predicted smallest number. A test example and its corresponding skewed error would be \(787,789,7815,786 = 781\ldots\)
\end{enumerate}
\begin{figure}[t]
    \centering
    \begin{subfigure}[b]{0.33\textwidth}
        \centering
        \includegraphics[width=\linewidth]{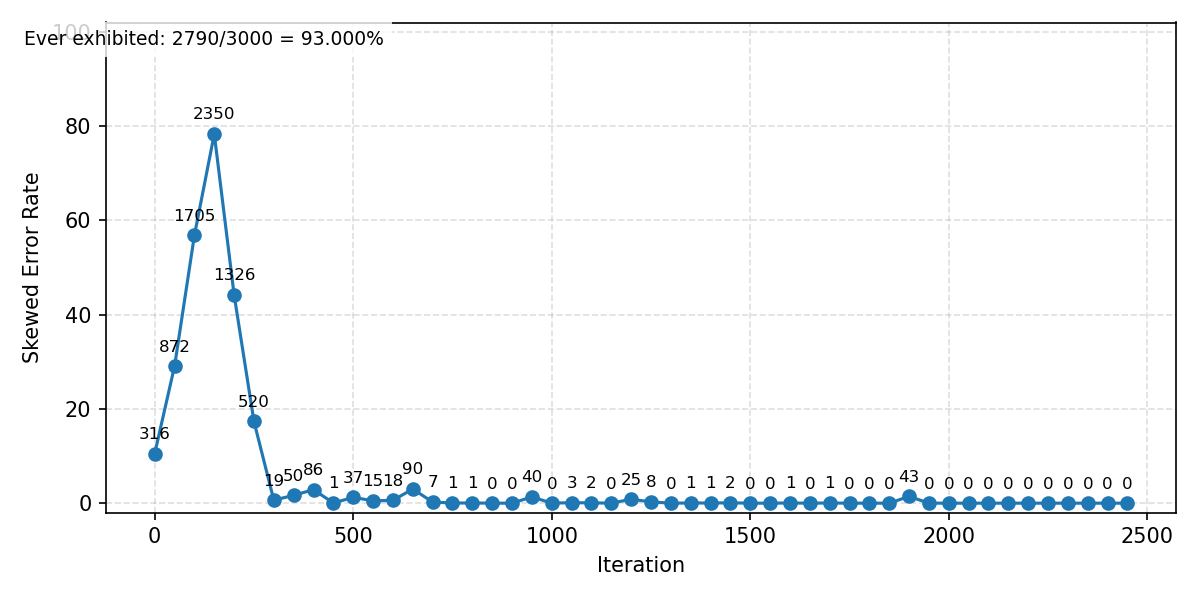}
        \caption{First digit skewed test}
        \label{fig:skewed_error_a}
    \end{subfigure}\hfill
    \begin{subfigure}[b]{0.33\textwidth}
        \centering
        \includegraphics[width=\linewidth]{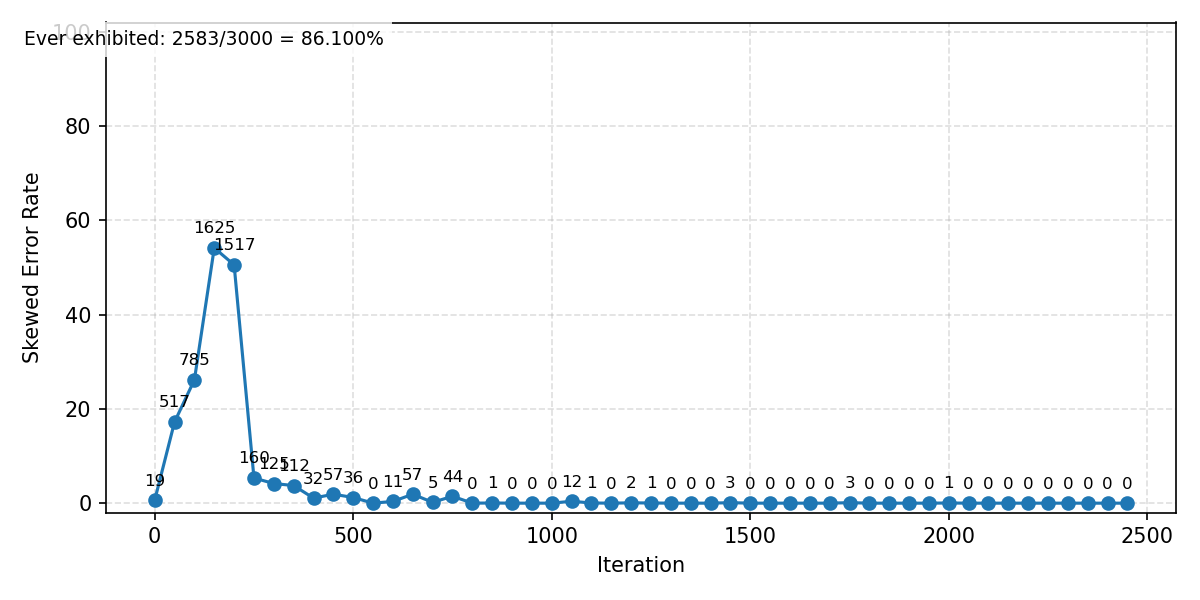}
        \caption{Second digit skewed test}
        \label{fig:skewed_error_b}
    \end{subfigure}\hfill
    \begin{subfigure}[b]{0.33\textwidth}
        \centering
        \includegraphics[width=\linewidth]{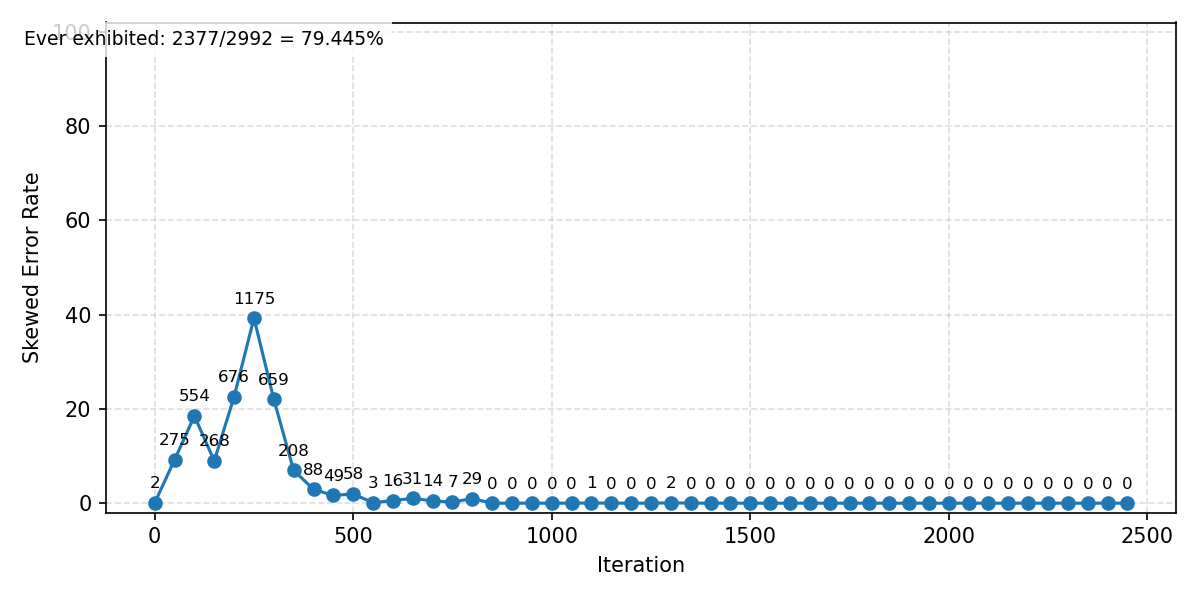}
        \caption{Third digit skewed test}
        \label{fig:skewed_error_c}
    \end{subfigure}

    \caption{\textbf{Transformers apply different subskills in expected ways}. As in Figure 5, we train the model using doubly balanced dataset, which requires the model to master both length comparison and digitwise comparison. We test the model on three datasets, "First digit skewed", "Second digit skewed", and "Third digit skewed". \textbf{(a)} On "First digit skewed" dataset, the model makes errors of the kind \(2774,524,996,875 = 2\ldots\). At iter 150, the error rate peaks at 78\%.  \textbf{(b)} On "Second digit skewed" dataset, the model makes errors of the kind \(197,1234,165,183 = 12\ldots\). At iter 150, the error rate peaks at 54\%. \textbf{(c)} On "Third digit skewed" dataset, the model makes errors of the kind \(787,789,7815,786 = 781\ldots\). At iter 250, the error rate peaks at 39\%.}
    \label{fig:skewed_error}
\end{figure}
As shown in Figure~\ref{fig:skewed_error}, the model indeed makes the three kinds of skewed error, with the first kind peaks at even 79\% at iteration 150. All three error rates increases at first as the model learns length comparison and digitwise comparison but illogically combining them, then drops as the model figures out how to correctly apply these subskills.

\paragraph{Additional Mixing Error Analysis}
As shown in Table~\ref{tab:mixing}, we identified two types of counterintuitive errors models usually make while learning the sorting task. In the main paper, we showed that conflicting digit pairs will exacerbate the mixing error occurrences. Here we investigate the role that same digit pairs play in producing the mixing error effects.

\subsubsection*{Same-digit tests (SD)}
Table~\ref{tab:same-digit-tests} summarizes the same-digit tests.
These tests vary whether the first and third digit positions are equal across $b$ and $c$, while keeping the $(2,4)$ conflict present.
The goal is to see whether equality at particular positions encourages the model to copy those digits across numbers, producing swaps or repeats.

\begin{table}[t]
\caption{\textbf{Same-digit tests for mixing errors in sorting.}
We vary whether the first and third digit positions are equal across the two middle numbers $b$ and $c$, while keeping the $(2,4)$ conflict present. Mixing errors occur only when $b_3=c_3$, and are amplified when $b_1=c_1$.}
\label{tab:same-digit-tests}
\centering
\small
\begin{tabular}{@{}p{0.9cm}@{\hspace{2pt}}p{3.0cm} c c c p{1.5cm} c c@{}}
\hline
Test id & Example input & $b_1=c_1$ & $b_3=c_3$ & $(2,4)$ conf. & Label & Mix Err. & SD \\
\hline
SD1 & \texttt{1000,6589,6682,9999} & yes & yes & yes & baseline & 6.63\% & 0.46\% \\
SD2 & \texttt{1000,6589,7682,9999} & no  & yes & yes & change $b_1$  & 3.61\% & 0.32\% \\
SD3 & \texttt{1000,6589,6672,9999} & yes & no  & yes & change $b_3$  & 0.0\% & 0.0\% \\
SD4 & \texttt{1000,6589,5672,9999} & no  & no  & yes & both changed   & 0.0\% & 0.0\% \\
\hline
\end{tabular}
\end{table}
% \begin{center}
% \begin{tabular}{@{}p{0.9cm}@{\hspace{2pt}}p{3.0cm} c c c p{1.5cm} c c@{}}
%  % 9 columns total: adjust to match header cells
% \hline
% Test id & Example input & $b_1=c_1$ & $b_3=c_3$ & $(2,4)$ conf. & Label & Mix Err. & SD \\
% \hline
% SD1 & \texttt{1000,6589,6682,9999} & yes & yes & yes & baseline & 6.63\% & 0.46\% \\
% SD2 & \texttt{1000,6589,7682,9999} & no  & yes & yes & change $b_1$  & 3.61\% & 0.32\% \\
% SD3 & \texttt{1000,6589,6672,9999} & yes & no  & yes & change $b_3$  & 0.0\% & 0.0\% \\
% SD4 & \texttt{1000,6589,5672,9999} & no  & no  & yes & both changed   & 0.0\% & 0.0\% \\
% \hline
% \end{tabular}
% \end{center}

\noindent\textbf{Interpretation:} Compare mixing frequencies across SD1--SD4. Equality at the third digit position ($b_3=c_3$) appears \emph{necessary} for mixing to occur. Equality at the first digit ($b_1=c_1$) amplifies mixing.
\\
\begin{figure}[t]
    \centering

\begin{subfigure}{0.5\textwidth} % Adjust width as needed
        \centering
        \includegraphics[width=\linewidth]{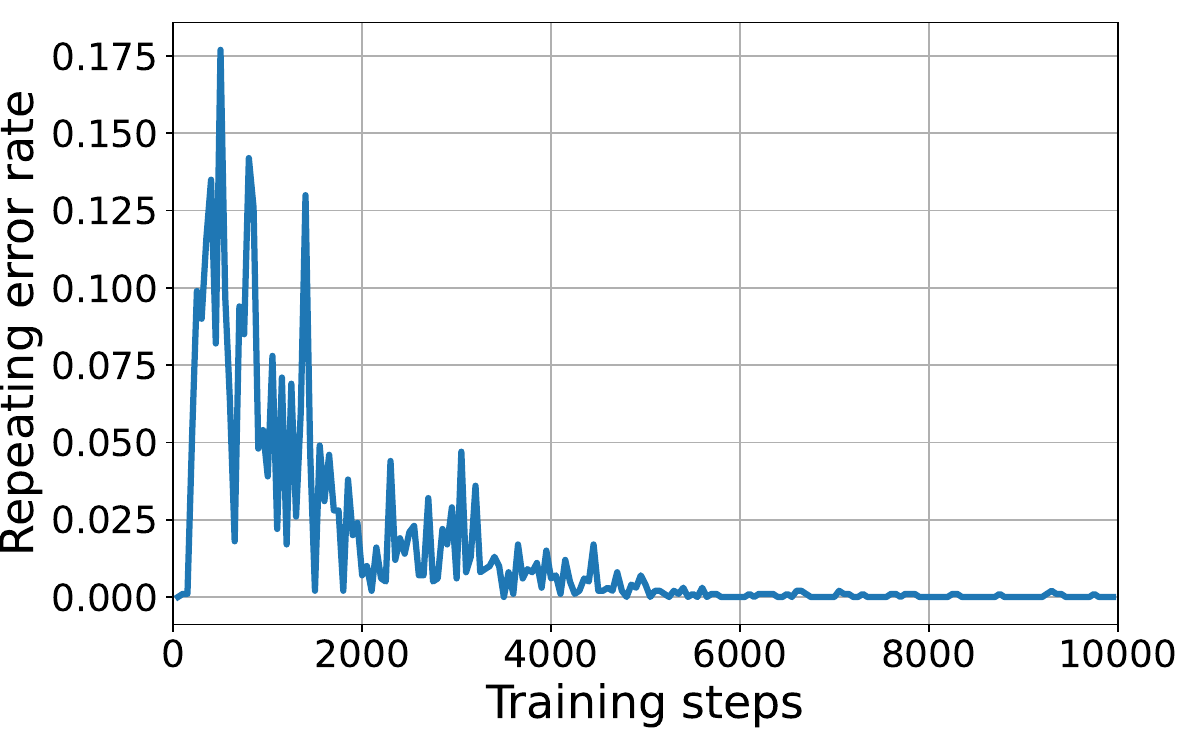} 
        \caption{Repeating error in conflicting group}
    \end{subfigure}%
    \hfill % Adds horizontal space between subfigures
    \begin{subfigure}{0.5\textwidth} % Adjust width as needed
        \centering
        \includegraphics[width=\linewidth]{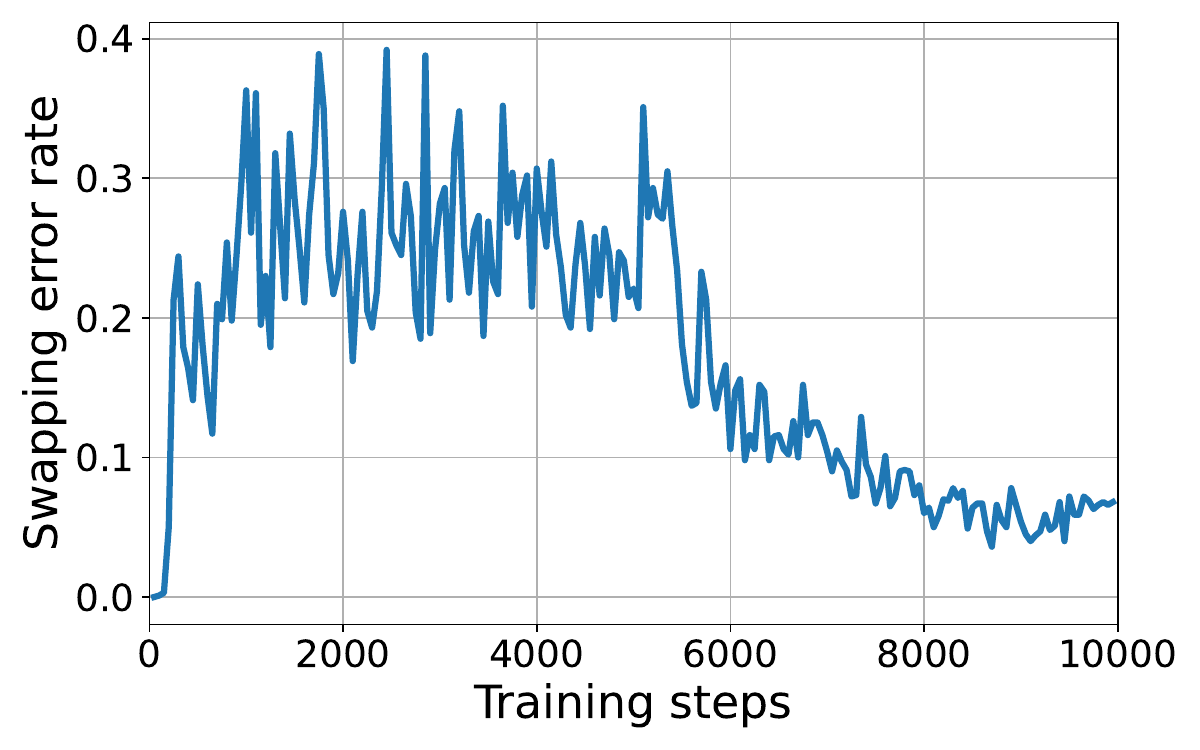}
        \caption{Swapping error in conflicting group}

    \end{subfigure}

\caption{\textbf{Repeating errors disappear as training progresses.} In the conflicting test group, models first experience a high rate of repeating error, then gradually eliminates this subcategory of mixing error as we increase the training steps. This contrasts with the swapping error, which persists and converges to around 6\% at the end of training.
    } 
    \label{fig:repeating_error_disappear}
\end{figure}

\noindent\textbf{Dynamics of repeating error} When subdividing mixing error into two finer subcategories, swapping and repeating, we find that unlike swapping errors which persist throughout the training process the repeating errors are gradually eliminated. The dynamics of the two types of errors are shown in Figure~\ref{fig:repeating_error_disappear}.

\section{Details about GSM8K-based question templates}\label{sec:append-clauses}

Most clauses we add can be classified into two types based on how that information is used in the summation. Let $x_1 + x_2 +...+x_i$ be the multi-operand addition we need to perform.
\begin{enumerate}
    \item \textbf{Type 1: }The new term $x_i$ is a product of two numbers $a_i$ and $b_i$ which are directly provided in the new clause. For a concrete example, one original question is "There are 6 boxes of crayons that hold 8 orange crayons. There are 7 boxes of crayons that have 5 blue crayons. There is 1 box of 11 red crayons. How many crayons are there in total?" A $k=1$ extended question would add the clause "There are 4 boxes of crayons that hold 6 green crayons" before asking the total number of crayons.
    \item \textbf{Type 2: }The new term $x_i$ must be derived by using information from earlier clauses, thus requiring the model to form a reasoning chain to compute the correct term. For a concrete example, if the original question is "McKenna has 34 stuffed animals. Kenley has twice as many as McKenna. Tenly has 5 more than Kenley . How many stuffed animals do the three girls have in all?" A $k=1$ extended question would add the clause "Riley has 10 fewer stuffed animals than Tenly" before asking how many stuffed animals in total these people have.
\end{enumerate}

\subsection{Extended question examples}
Here we provide five examples of our extended question with $k=6$.
\begin{enumerate}
    \item \textbf{Example 1: } Original question:
\begin{Verbatim}[breaklines=true,fontsize=\small,xleftmargin=1em]
Toulouse has twice as many sheep as Charleston. Charleston has 4 times as many sheep as Seattle. How many sheep do Toulouse, Charleston, and Seattle have together if Seattle has 20 sheep?
\end{Verbatim}
    $k = 6$ extended question:
\begin{Verbatim}[breaklines=true,fontsize=\small,xleftmargin=1em]
    Toulouse has 2 times as many sheep as Charleston. Charleston has 4 times as many sheep as Seattle. Seattle has 2 times as many sheep as Alice. Alice has 5 times as many sheep as Bob. Bob has 3 times as many sheep as David. David has 2 times as many sheep as Zoe. Zoe has 4 times as many sheep as John. John has 5 times as many sheep as Rachel. How many sheep do Toulouse, Charleston, Seattle, Alice, Bob, David, Zoe, John and Rachel have together if Rachel has 20 sheep?
\end{Verbatim}
    \item \textbf{Example 2: }
    Original question:
\begin{Verbatim}[breaklines=true,fontsize=\small,xleftmargin=1em]
    Henrietta is repainting her house. The walls in the living room take up 600 square feet. She has three bedrooms. The walls in each bedroom take up 400 square feet. If one gallon of paint can cover 600 square feet, how many gallons of paint does Henrietta need to paint her house?
\end{Verbatim}
    $k = 6$ extended question:
\begin{Verbatim}[breaklines=true,fontsize=\small,xleftmargin=1em]
    The living room walls take up 600 square feet. The bedroom walls (each) take up 400 square feet and there are 3 of them. The kitchen walls (each) take up 500 square feet and there are 6 of them. The dining room walls (each) take up 240 square feet and there are 5 of them. The bathroom walls (each) take up 150 square feet and there are 8 of them. The garage walls (each) take up 480 square feet and there are 5 of them. The hallway walls (each) take up 900 square feet and there are 8 of them. The sunroom walls (each) take up 300 square feet and there are 6 of them. If one gallon of paint covers 600 square feet, how many gallons of paint does Henrietta need to paint her house?
\end{Verbatim}
    \item \textbf{Example 3: }
    Original question:
\begin{Verbatim}[breaklines=true,fontsize=\small,xleftmargin=1em]
    Bill is making omelets for his family's breakfast. It takes him 3 minutes to chop a pepper, 4 minutes to chop an onion, and 1 minute to grate enough cheese for one omelet. It takes him 5 minutes to assemble and cook the omelet. If he needs to chop up four peppers, two onions, and also grates cheese for cooking each of five omelets, how long will he spend preparing for and cooking the five omelets?  
\end{Verbatim}
    $k = 6$ extended question:
\begin{Verbatim}[breaklines=true,fontsize=\small,xleftmargin=1em]
    Bill is making omelets for his family's breakfast. It takes him 3 minutes to chop a pepper, 4 minutes to chop an onion, and 1 minute to grate enough cheese for one omelet. It takes him 5 minutes to assemble and cook one omelet. He needs to chop up 4 peppers, chop 2 onions, and grate cheese for cooking each of 5 omelets. He also needs to chop 3 tomatoes, and it takes him 4 minutes to chop a tomato. It takes him 2 minutes to slice enough mushrooms for one omelet, and he adds mushrooms to each of the 5 omelets. It takes him 3 minutes to dice enough ham for one omelet, and he adds ham to each of the 5 omelets. It takes him 1 minute to rinse a handful of spinach for one omelet, and he adds spinach to each of the 5 omelets. It takes him 2 minutes to crack and whisk the eggs for one omelet, and he does this separately for each of the 5 omelets. After he finishes cooking, it takes him 6 minutes to wash the cutting board and knife. How long will he spend preparing for and cooking the 5 omelets?
\end{Verbatim}
    \item \textbf{Example 4: }
    Original question:
\begin{Verbatim}[breaklines=true,fontsize=\small,xleftmargin=1em]
    In a spelling contest held in her school, Drew got 20 questions correct, winning her the competition. She got six questions wrong, and her competitor Carla got 14 questions correct, and twice as many questions wrong as the number of questions Drew got wrong. If each competitor was asked a different question, how many questions were asked in the competition? 
\end{Verbatim}
    $k = 6$ extended question:
\begin{Verbatim}[breaklines=true,fontsize=\small,xleftmargin=1em]
    In a spelling contest held in her school, Drew got 20 questions correct, winning her the competition. She got six questions wrong, and her competitor Carla got 14 questions correct and twice as many questions wrong as the number of questions Drew got wrong. Another competitor, Maya, got 3 fewer questions correct than Drew and 2 more questions wrong than Drew. Another competitor, Eli, got 5 more questions correct than Carla and half as many questions wrong as Carla. Another competitor, Sasha, got twice as many questions correct as the number of questions Drew got wrong, and she got 4 fewer questions wrong than Carla. Another competitor, Noah, got the same number of questions correct as Carla and 3 fewer questions wrong than Drew. Another competitor, Priya, got 2 fewer questions correct than Eli and 1 more question wrong than Sasha. Another competitor, Jamal, got 4 fewer questions correct than Maya and twice as many questions wrong as Noah. If each competitor was asked a different question, how many questions were asked in the competition?
\end{Verbatim}
    \item \textbf{Example 5: }
    Original question:
\begin{Verbatim}[breaklines=true,fontsize=\small,xleftmargin=1em]
    Buffy has a sailboat with a planing hull that she sails in the Gulf of Mexico. Under optimal wind conditions, the use of two sails brings the ship to the speed of 50 knots, whereas under the same conditions, the use of one sail brings her ship to the speed of 25 knots. A knot is a nautical term for speed equal to 1 nautical mile per hour, and one nautical mile equals 1.15 land miles. If she travels in her sailboat under optimal wind conditions for 4 hours with one sail and then for another 4 hours with two sails, what is the total distance, in land miles, that she will travel?
\end{Verbatim}
    $k = 6$ extended question:
\begin{Verbatim}[breaklines=true,fontsize=\small,xleftmargin=1em]
    Buffy has a sailboat with a planing hull that she sails in the Gulf of Mexico. Under optimal wind conditions, the use of two sails brings the ship to the speed of 50 knots, whereas under the same conditions, the use of one sail brings her ship to the speed of 25 knots. A knot is a nautical term for speed equal to 1 nautical mile per hour, and one nautical mile equals 1.15 land miles. If she travels in her sailboat under optimal wind conditions for 4 hours with one sail and then for another 4 hours with two sails, Under the same conditions, the use of three sails brings the ship to the speed of 60 knots, and she sails for 1 hour using three sails. Under the same conditions, the use of four sails brings the ship to the speed of 70 knots, and she sails for 2 hours using four sails. Under the same conditions, sailing with one reefed sail brings her ship to the speed of 15 knots, and she sails for 4 hours with a reefed sail. Under the same conditions, the use of two sails while carrying extra cargo brings her ship to the speed of 40 knots, and she sails for 1 hour using two sails with the extra cargo. Under the same conditions, the use of one sail with a favorable current brings her ship to the speed of 30 knots, and she sails for 2 hours with that setup. Under the same conditions, if she lowers all sails and just drifts, her ship moves at 5 knots, and she drifts for 4 hours. what is the total distance, in land miles, that she will travel?
\end{Verbatim}
\end{enumerate}
\subsection{Common Error Types}
\label{gsm-error-type}
We examined some failure cases of LLMs in our clause-extended question test. Common types of error include the following.

\begin{enumerate}
    \item \textbf{Direct calculation mistakes in multi-operand addition} Among the error cases, often the model attempts to perform direct addition on multiple operands (in contrast to adding one by one), but made a mistake in its calculation. An example is $4 + 3 + 2 + 1 + 5 + 3 + 4 + 6 = 32 \text{ inches}$, where the correct sum should be 28.
    \item \textbf{Repeating one item} The model sometimes double-counted one term. For example in earlier step it correctly writes the summation it needs to perform as $64 + 256 + 32 + 32 + 16 + 48 + 64 + 48 + 32$, but later when adding one by one it writes
    \begin{verbatim}
        **Add the numbers step by step:**
  - \( 64 + 256 = 320 \)
  - \( 320 + 32 = 352 \)
  - \( 352 + 32 = 384 \)
  - \( 384 + 16 = 400 \)
  - \( 400 + 48 = 448 \)
  - \( 448 + 64 = 512 \)
  - \( 512 + 48 = 560 \)
  - \( 560 + 48 = 608 \) // It double-counted this "48"
  - \( 608 + 32 = 640 \)
    \end{verbatim}
    
    \item \textbf{Missing one item} Sometimes the LLM does not add all the relevant terms. An example is when earlier it calculated the produce of guavas is 100 kg, but this did not get added in the final summation. When it calculate the total produce of all fruits, it misses this item and writes $\text{Total produce} = 400 + 800 + 600 + 700 + 300 + 350 + 200 + 450 = 4000 \text{ kg}$.
    \item \textbf{Incorrectly interpreting the units of intermediate results} For example, in earlier step the model calculates one intermediate result to be 42 pills/week. Later when calculating others it uses as 42 pills/day.
    \item \textbf{Fail to locate effective information} Sometimes the model quotes as irrelevant piece of information as its argument for performing one reasoning. For example, one question says "“Wanda brings half as many treats as Jane”, but the model analyzes that "Jane brings 75\% as many pieces of bread as treats. Let \( T_J \) be the number of treats Jane brought. Then:
\(T_J = T_W = 30\)". To correctly calculate $T_J$, it should use the information “Wanda brings half as many treats as Jane” from the question and writes $T_J = 2 * T_W = 60$ instead.
    
\end{enumerate}

The "direct calculation mistakes" emphasize the challenges for transformer-based models to perform long arithmetic. The other types, such as "repeating/missing one item", bear close resemblance to the mixing errors we observed in the sorting task, indicating the difficulty for transformers to master human-rule compositions.
\section{Details about scaling tests}
\label{scaling_test}
\subsection{Pure Model scaling}
\label{app:pure_model_scaling}

To investigate the effect of model scaling on learning order, we scaled the NanoGPT architecture to 20M and 100M parameters. The architectural details for each model size are presented in Table \ref{tab:model_scaling}. All three variants were trained using the identical hyperparameters listed in Table \ref{tab:hyperparams_comparison}.

We observed that the number of iterations required to learn the addition task increased as we scaled up the model parameters. However, Figure~\ref{fig:addition-scaling_scratch} illustrates that despite this difference in capacity, both the 20M and 100M models exhibited the same reverse digit-wise learning order. These results suggest that parameter scaling does not fundamentally alter the underlying stagewise learning dynamics for arithmetic tasks.

\begin{figure}[t]
    \centering
    \begin{subfigure}{0.5\textwidth} % Adjust width as needed
                \centering
        \includegraphics[width=\linewidth]{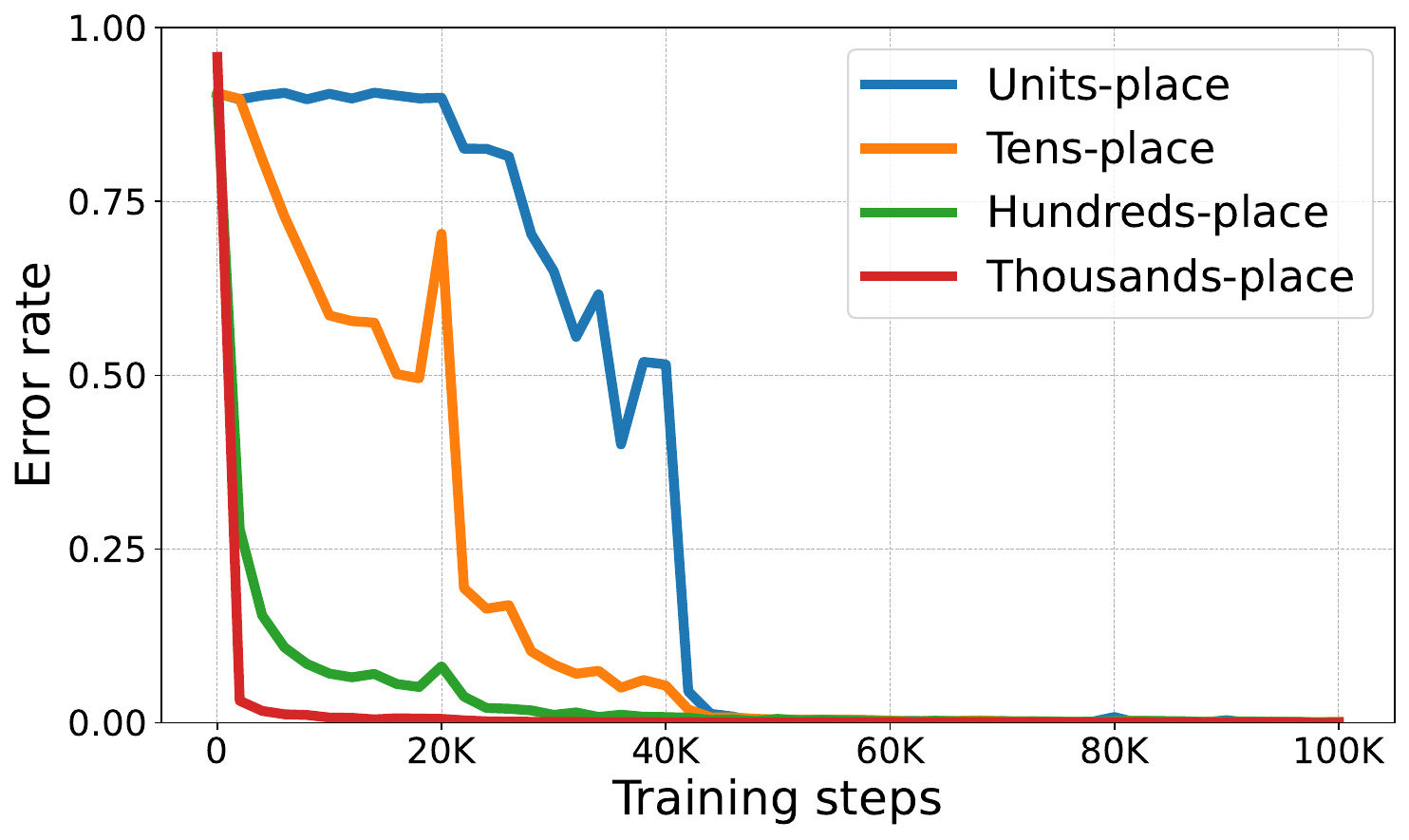}
        \caption{Digit wise error-rate for 20M NanoGPT model}
    \end{subfigure}%
    \hfill % Adds horizontal space between subfigures
    \begin{subfigure}{0.5\textwidth} % Adjust width as needed
          \centering
            \includegraphics[
            width=\linewidth,
            keepaspectratio
          ]{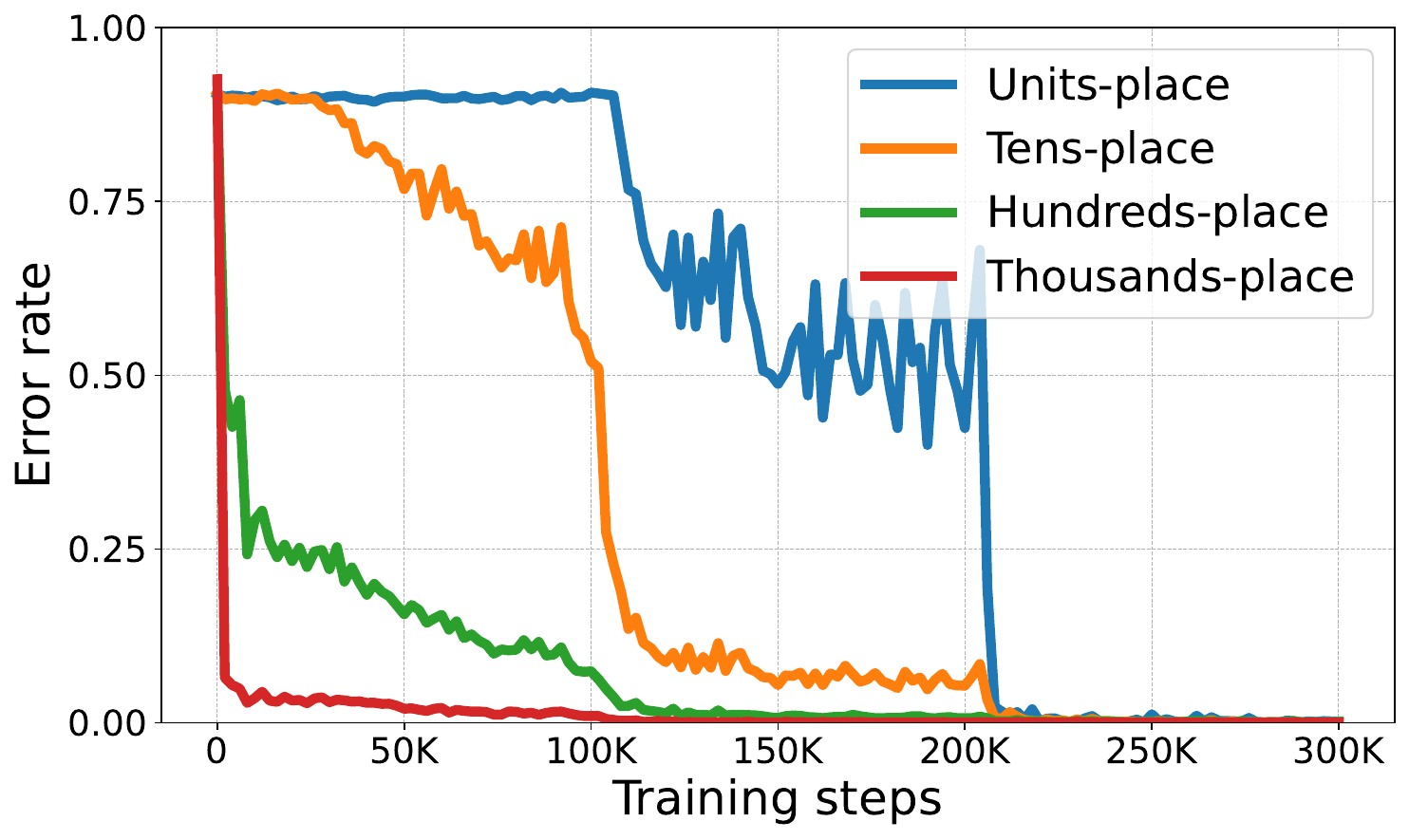}
          \caption{Digit wise error-rate for 100M NanoGPT model}
    \end{subfigure}
    \caption{\textbf{Using larger models to train the addition task from scratch model still learn digits in reverse format}. The results are for 20M and 100M NanoGPT models for addition of the format $a+b+c+d=e$. }
    \label{fig:addition-scaling_scratch}
\end{figure}

\subsection{Pre-trained Model Fine-tuning}
\label{app:pretraining_model}
To investigate the effect of pre-training on arithmetic learning dynamics, we fine-tuned the Pythia-1B model on a 4-operand addition task. Prior to fine-tuning, we evaluated the off-the-shelf model on the addition task and the model achieved $0\%$ accuracy, confirming that it possessed no zero-shot capability for this specific task.

For the fine-tuning process, we utilized a learning rate of $1 \times 10^{-4}$. Full details of the training configuration and hyperparameters are provided in Table~\ref{tab:hyperparams_comparison}.

We limited the fine-tuning of the pre-trained Pythia-1B model to 200,000 iterations. Although the model did not reach 100\% accuracy within this budget, Figure~\ref{fig:addition-pretrained} illustrates that the learning dynamics remained consistent with our previous observations. Specifically, the model exhibited the characteristic reverse learning order, prioritizing the acquisition of the most significant digit, signifying that pre-training does not fundamentally alter this learning dynamic.

\begin{figure}[t]
    \centering
    % Change 1.0\textwidth to 0.6\textwidth here
    \begin{subfigure}{0.6\textwidth} 
        \centering
        \includegraphics[width=\linewidth]{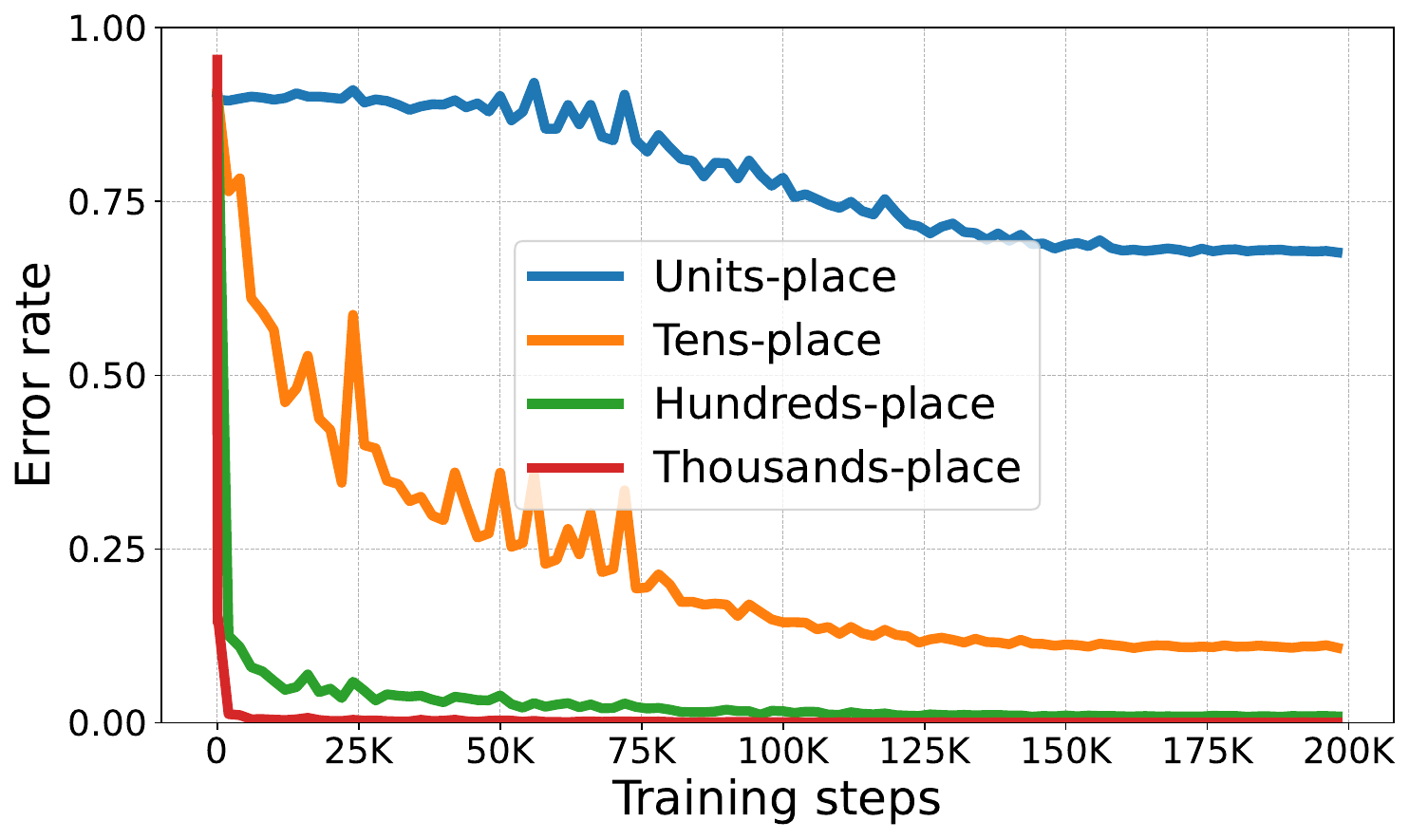}
        \caption{Digit wise error-rate for Pythia-1B model}
    \end{subfigure}
    \caption{\textbf{Pre-trained model fine-tuned for addition learn digits in non-human order}. We fine-tune pythia-1B model on addition of the format $a+b+c+d=e$ for plain-format. The result shows that pre-training on diverse data doesn't affect how the model internally learns arithmetic tasks.}
    \label{fig:addition-pretrained}
\end{figure}

\subsection{Scratchpad-based Reasoning Training}\label{sec:append-scratchpad}

%To evaluate the impact of explicit reasoning steps on learning dynamics, we employed a scratchpad training strategy. 

We employed a scratchpad training strategy to induce intermediate reasoning steps similar to CoT reasoning.
We utilized our standard NanoGPT architecture trained with default hyperparameters (refer to Table~\ref{tab:model_scaling} and Table~\ref{tab:hyperparams_comparison} for full configuration). Here we provide further explanations for the two scratchpad formats.

%We introduced two distinct formatting strategies for the 4-operand addition reasoning traces:

\begin{enumerate}
    \item \textbf{\underline{D}irect Decomposition (D scratchpad):} The model decomposes the addends by place value and immediately computes the final sum.
    \begin{equation*}
        811+856+239+313 = 100(8+8+2+3) + 10(1+5+3+1) + 1(1+6+9+3) = 2219
    \end{equation*}
    
    \item \textbf{\underline{D}ecomposition plus \underline{A}ggregation (D+A scratchpad):} The model decomposes the addends and explicitly calculates the sum for each place value group before converging on the final result.
    \begin{equation*}
    \begin{aligned}
        811+856+239+313 &= 100(8+8+2+3) + 10(1+5+3+1) + 1(1+6+9+3) \\
        &= 100(21)+10(10)+1(19) \\
        &= 2219
    \end{aligned}
    \end{equation*}
\end{enumerate}

Both scratchpad formats significantly outperformed standard training, requiring substantially fewer iterations to reach convergence. Specifically, the \textbf{Decomposition plus Aggregation (D+A)} scratchpad required only 1,000 iterations and the \textbf{Direct Decomposition (D)} format 10,000 iterations, representing a dramatic reduction compared to the approximately 60,000 iterations needed for the standard model.
Figure~\ref{fig:scratchpad_dynamics} illustrates the distinct learning dynamics for each format:
\begin{itemize}
    \item \textbf{Direct Decomposition (D):} As shown in Figure~\ref{fig:scratchpad_dynamics} (a), the model still exhibited the reverse learning order, prioritizing higher-order digits before lower-order ones, mirroring the behavior of the base model.
    \item \textbf{Decomposition plus Aggregation (D+A):} In contrast, Figure~\ref{fig:scratchpad_dynamics} (b) reveals that the D+A format altered this dynamic slightly; while it still learned the Most Significant Digit (MSD) first, the remaining three positions were learned nearly simultaneously rather than sequentially.
    \begin{figure}[t]
    \centering
    \begin{subfigure}{0.47\textwidth} % Adjust width as needed
                \centering
        \includegraphics[width=\linewidth]{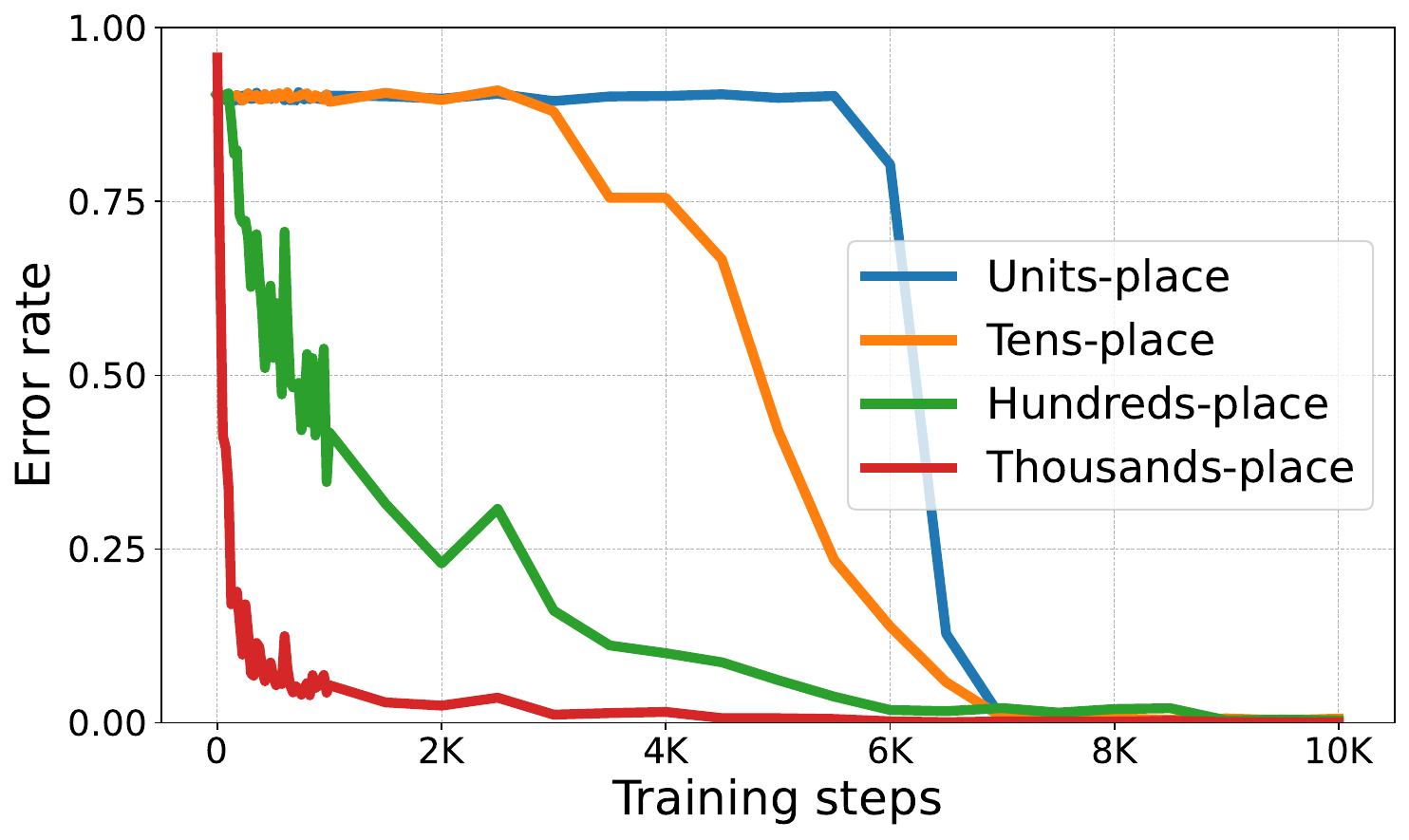}
        \caption{Digit wise error-rate for Direct Decomposition (D)}
    \end{subfigure}%
    \hfill % Adds horizontal space between subfigures
    \begin{subfigure}{0.47\textwidth} % Adjust width as needed
          \centering
            \includegraphics[
            width=\linewidth,
            keepaspectratio
          ]{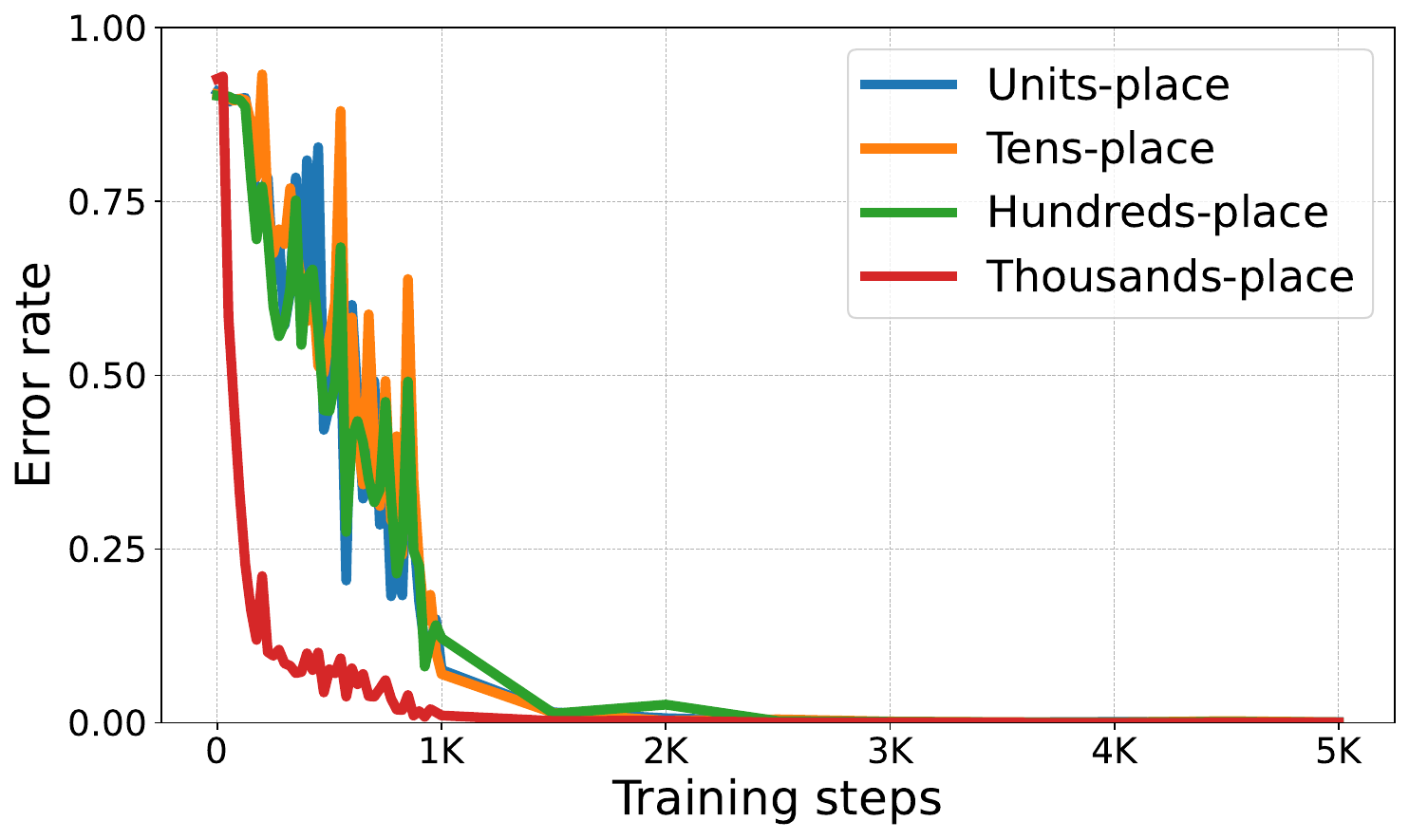}
          \caption{Digit wise error-rate for Decomposition plus Aggregation (D+A)}
    \end{subfigure}
    \caption{\textbf{Scratchpads accelerate convergence but fail to enforce human learning order.} Although scratchpads reduce training steps, neither format successfully restores the Least-to-Most Significant Digit progression inherent to the addition algorithm. }
    \label{fig:scratchpad_dynamics}
\end{figure}

\end{itemize}
\section{Proof of Theorem~\ref{thm:1}}\label{app:proof-thm1}
\begin{proof}
\medskip
\textbf{\underline{Part 1}: we will show $I(a_i;e_0)>0$ for $i \in\{1,2,3\}$.}

Write the full numbers as $A,B,C,D\in\{0,\dots,999\}$. Let $S := A + B + C + D.$ By definition $e_0 \ge 1$ iff $S \ge 1000.$

\medskip\noindent\textbf{Case 1: $a_1$ (hundreds digit).}
Write
\[
S = 100a_1 + R,\qquad R := 10a_2 + a_3 + B + C + D,
\]
so $R$ is independent of $a_1$. Then
\[
\Pr(e_0\ge1\mid a_1=0)=\Pr(R\ge1000),\qquad
\Pr(e_0\ge1\mid a_1=9)=\Pr(R\ge100).
\]
Since $\{R\ge1000\}\subseteq\{R\ge100\}$, these two probabilities are equal only if
$\Pr(100\le R\le 999)=0$. But that event has positive probability: for example the concrete assignment
\[
a_2=0,\ a_3=0,\ B=500,\ C=0,\ D=0
\]
gives $R=500$, so $\Pr(R=500)>0$. Hence
\[
\Pr(R\ge1000)<\Pr(R\ge100),
\]
so $\Pr(e_0\ge1\mid a_1=0)\neq\Pr(e_0\ge1\mid a_1=9)$. Therefore $a_1$ and $e_0$ are not independent and
$I(a_1;e_0)>0$.

\medskip\noindent\textbf{Case 2: $a_2$ (tens digit).}
Write
\[
S = 10 a_2 + R,\qquad R := 100a_1 + a_3 + B + C + D,
\]
so $R$ is independent of $a_2$. Then
\[
\Pr(e_0\ge1\mid a_2=0)=\Pr(R\ge1000),\qquad
\Pr(e_0\ge1\mid a_2=9)=\Pr(R\ge910).
\]
Again $\{R\ge1000\}\subseteq\{R\ge910\}$; these probabilities differ if
$\Pr(910\le R\le999)>0$. A concrete realization shows this event has positive probability:
take
\[
a_1=9,\ a_3=0,\ B=10,\ C=0,\ D=0,
\]
which yields $R=900+0+10+0+0=910$. Thus $\Pr(R=910)>0$, so
\[
\Pr(R\ge1000)<\Pr(R\ge910),
\]
hence $\Pr(e_0\ge1\mid a_2=0)\neq\Pr(e_0\ge1\mid a_2=9)$, and $I(a_2;e_0)>0$.

\medskip\noindent\textbf{Case 3: $a_3$ (units digit).}
Write
\[
S = a_3 + R,\qquad R := 100a_1 + 10a_2 + B + C + D,
\]
so $R$ is independent of $a_3$. Then
\[
\Pr(e_0\ge1\mid a_3=0)=\Pr(R\ge1000),\qquad
\Pr(e_0\ge1\mid a_3=9)=\Pr(R\ge991).
\]
These differ if $\Pr(991\le R\le999)>0$. A concrete example again shows positive probability:
take
\[
a_1=9,\ a_2=0,\ B=91,\ C=0,\ D=0,
\]
which yields $R=900+0+91+0+0=991$. Thus $\Pr(R=991)>0$, so
\[
\Pr(R\ge1000)<\Pr(R\ge991),
\]
hence $\Pr(e_0\ge1\mid a_3=0)\neq\Pr(e_0\ge1\mid a_3=9)$, and $I(a_3;e_0)>0$.

\medskip
\noindent\textbf{\underline{Part 2}: We will show that $I(a_i;e_j)=0$ for \(i,j\in\{1,2,3\}\).}

To avoid a name clash with the digit $c_i$, for each place \(i\) let \(C_{\mathrm{in}}^{(i)}\) denote the carry coming into that place from the less-significant places (so \(C_{\mathrm{in}}^{(3)}=0\)).

\noindent\textbf{Group 1: $I(a_2;e_1), I(a_3;e_1), I(a_3;e_2)$}

We prove the representative case $I(a_2;e_1)=0$. The other two follow by similar arguments. By definition,
\[
e_1 = \left( a_1 + b_1 + c_1 + d_1 + C_{\mathrm{in}}^{(1)} \right) \pmod{10}
\]
where $C_{\mathrm{in}}^{(1)} = \lfloor \frac{(A \bmod 100) + (B \bmod 100) + (C \bmod 100) + (D \bmod 100)}{100} \rfloor$.

Let $S_{100} = (a_1 + b_1 + c_1 + d_1) \pmod{10}$. Since $a_1, b_1, c_1, d_1 \sim \text{Uniform}\{0, \dots, 9\}$, their sum modulo 10 is also uniformly distributed: $S_{100}\sim \text{Uniform}\{0, \dots, 9\}.$
Observe that the hundreds digits $\{a_1, b_1, c_1, d_1\}$ are independent of $a_2$ and the other lower-order digits. Thus:
\[
\{a_1, b_1, c_1, d_1\} \perp C_{\mathrm{in}}^{(1)} \quad \text{and} \quad \{a_1, b_1, c_1, d_1\} \perp a_2
\]
So $S_{100}$ is independent of $C_{\mathrm{in}}^{(1)}$ and $a_2.$ We evaluate the distribution of $e_1$ conditioned on $a_2$. 

\begin{align*}
    P(e_1 = k \mid a_2 = x) &= \sum_{c} P(e_1 = k \mid C_{\mathrm{in}}^{(1)} = c, a_2 = x) P(C_{\mathrm{in}}^{(1)} = c \mid a_2 = x) \\
    &= \sum_{c} P(S_{100} \equiv k - c \pmod{10}) P(C_{\mathrm{in}}^{(1)} = c \mid a_2 = x) \\
    &= \sum_{c} \left(\frac{1}{10}\right) P(C_{\mathrm{in}}^{(1)} = c \mid a_2 = x) \\
    &= \frac{1}{10} \sum_{c} P(C_{\mathrm{in}}^{(1)} = c \mid a_2 = x) \\
    &= \frac{1}{10}
\end{align*}
Since $P(e_1 = k \mid a_2 = x) = P(e_1 = k) = 1/10$, the variables $e_1$ and $a_2$ are independent. Hence, $I(a_2, e_1) = 0.$

\noindent\textbf{Group 2: $I(a_i;e_i) = 0$ for $i\in\{1,2,3\}$}

Since all input digits are independent and uniformly distributed, in
\[
e_i = (a_i+b_i+c_i+d_i+C_{\mathrm{in}}^{(i)}) \pmod{10},
\]
we have the following independence structure. For a fixed position
\(i\in\{1,2,3\}\), the digit \(a_i\) is independent of the other
digits in the same column, namely \(b_i,c_i,d_i\). Moreover,
\(C_{\mathrm{in}}^{(i)}\) is a function only of digits in strictly
lower-order columns \(j>i\), and is therefore independent of all
current-column digits \(a_i,b_i,c_i,d_i\).

Let $X = (b_i + c_i + d_i + C_{\mathrm{in}}^{(i)}) \pmod{10}$. 
Since $b_i, c_i, d_i$ are independent uniform variables on $\{0, \dots, 9\}$, their sum modulo 10 is uniformly distributed. Adding $C_{\mathrm{in}}^{(i)}$ (which is independent of $b_i, c_i, d_i$) results in a cyclic shift of a uniform distribution, which remains uniform. Thus:
\[
X \sim \text{Uniform}\{0, \dots, 9\} \quad \text{and} \quad X \perp a_i
\]

We can express $e_i$ as \(e_i = (a_i + X) \pmod{10}\).
Because $X$ is a uniform random variable independent of $a_i$, the resulting sum $e_i$ is also uniform and independent of $a_i$. Formally, for any $k, x \in \{0, \dots, 9\}$:
\[
P(e_i = k \mid a_i = x) = P(X \equiv k - x \pmod{10}) = \frac{1}{10} = P(e_i = k)
\]
Since the conditional distribution does not depend on $a_i$, $I(a_i, e_i) = 0$.

\noindent\textbf{Group 3: $I(a_1;e_2), I(a_1;e_3), I(a_2;e_3)$}

For every pair in this group, $a$ belong to a higher digit place than $e$. We prove the representative case $I(a_1;e_2)=0$. As before, 
\[
e_2 = (a_2 + b_2 + c_2 + d_2 + C_{\mathrm{in}}^{(2)}) \pmod{10},
\]
where $C_{\mathrm{in}}^{(2)}$ is the carry from the units place, determined by $\{a_3, b_3, c_3, d_3\}$. Since $e_2$ is completely determined by digits from the tens place and the ones place, and by definition $a_1 \perp \{a_2, b_2, c_2, d_2, a_3, b_3, c_3, d_3\}$, $a_1$ and $e_2$ are independent. Hence, $I(a_1;e_2)=0$.

\medskip
\noindent\textbf{\underline{Part 3}: We will show  $I(a_i;e_i\mid c_{i-1})>0$ for \(i=1,2,3\).}

Fix \(i\in\{1,2,3\}\). Write the local sum at position \(i\) as
\[
S \;=\; a_i + b_i + c_i + d_i + C_{\mathrm{in}}^{(i)},
\]
and set
\[
U := b_i + c_i + d_i + C_{\mathrm{in}}^{(i)},
\]
so $S = a_i + U$. The output digit and carry-out are
\[
e_i \;=\; S \bmod 10,\qquad
c_{i-1} := \Big\lfloor \frac{S}{10}\Big\rfloor \in\{0,1,2,3\}.
\]

First note 
\[
I(a_i;e_i\mid c_{i-1}) \;=\; \mathbb{E}_{c_{i-1}}\big[I(a_i;e_i\mid c_{i-1}=c)\big]
\;=\; \sum_{c \in\{0,1,2,3\} } \Pr(c_{i-1}=c)\,I(a_i;e_i\mid c_{i-1}=c).
\]
Since each term is nonnegative, it suffices to find a single carry value \(c\in\{0,1,2,3\}\) with $I(a_i;e_i\mid c_{i-1}=c)>0$. We take \(c=0\).

Condition on $c_{i-1}=0$, i.e. $S=a_i+U\in\{0,1,\dots,9\}$. Consider two values of \(a_i\):

\medskip
\noindent\emph{(i) $a_i=9$ and $c_{i-1}=0$.}  
The inequality $9+U\le 9$ forces $U\le 0$, hence $U=0$. Therefore $e_i = (9+0)\bmod 10 = 9$, and in particular
\[
\Pr\big(e_i=0\mid a_i=9,\,c_{i-1}=0\big)=0.
\]

\noindent\emph{(ii) $a_i=0$ and $c_{i-1}=0$.}  
By uniform sampling there is positive probability to have $b_i=c_i=d_i=C_{\mathrm{in}}^{(i)}=0$, so $U=0$ and thus $e_i=0$ occurs with positive probability. Hence
\[
\Pr\big(e_i=0\mid a_i=0,\,c_{i-1}=0\big)>0.
\]

The two conditional probabilities above are different, so the conditional
distribution of \(e_i\) given \((a_i, c_{i-1}=0)\) depends on \(a_i\).
Therefore,
\[
I(a_i; e_i \mid c_{i-1}=0) > 0.
\]
Since \(\Pr(c_{i-1}=0) > 0\), it follows that
\[
\begin{aligned}
I(a_i; e_i \mid c_{i-1})
&= \sum_{c \in \{0,1,2,3\}} \Pr(c_{i-1}=c)\, I(a_i; e_i \mid c_{i-1}=c) \\
&\ge \Pr(c_{i-1}=0)\, I(a_i; e_i \mid c_{i-1}=0) \\
&> 0.
\end{aligned}
\]

% The two conditional probabilities above are different, so the conditional distribution of $e_i$ given $(a_i,c_{i-1}=0)$ depends on \(a_i\). Thus \(I(a_i;e_i\mid c_{i-1}=0)>0\). Since $\Pr(c_{i-1}=0)>0$, we conclude
% \[
% I(a_i;e_i\mid c_{i-1}) \;=\; \sum_{c \in\{0,1,2,3\} } \Pr(c_{i-1}=c)\,I(a_i;e_i\mid c_{i-1}=c)
% \;\ge\; \Pr(c_{i-1}=0)\,I(a_i;e_i\mid c_{i-1}=0) \;>\; 0.
% \]

This argument applies for each \(i=1,2,3\), completing the proof of the theorem.
\end{proof}

%%%%%%%%%%%%%%%%%%%%%%%%%%%%%%%%%%%%%%%%%%%%%%%%%%%%%%%%%%%%%%%%%%%%%%%%%%%%%%%
%%%%%%%%%%%%%%%%%%%%%%%%%%%%%%%%%%%%%%%%%%%%%%%%%%%%%%%%%%%%%%%%%%%%%%%%%%%%%%%

\ifarxiv
  % No NeurIPS checklist in the arXiv version
\else
  \newpage
  \section*{NeurIPS Paper Checklist}

\begin{enumerate}

\item {\bf Claims}
    \item[] Question: Do the main claims made in the abstract and introduction accurately reflect the paper's contributions and scope?
    \item[] Answer: \answerYes{}
    \item[] Justification: The abstract and Section~\ref{sec:introduction} state the paper's scope, questions, and contributions, and the subsequent sections substantiate the claims through synthetic arithmetic experiments, robustness checks, and an external GSM8K-style evaluation.
    \item[] Guidelines:
    \begin{itemize}
        \item The answer \answerNA{} means that the abstract and introduction do not include the claims made in the paper.
        \item The abstract and/or introduction should clearly state the claims made, including the contributions made in the paper and important assumptions and limitations. A \answerNo{} or \answerNA{} answer to this question will not be perceived well by the reviewers. 
        \item The claims made should match theoretical and experimental results, and reflect how much the results can be expected to generalize to other settings. 
        \item It is fine to include aspirational goals as motivation as long as it is clear that these goals are not attained by the paper. 
    \end{itemize}

\item {\bf Limitations}
    \item[] Question: Does the paper discuss the limitations of the work performed by the authors?
    \item[] Answer: \answerYes{}
    \item[] Justification: Section~\ref{sec:conclusion} explicitly discusses limitations, including the focus on small transformers, synthetic tasks, and black-box diagnostics rather than circuit-level mechanisms; Section~\ref{sec:external-robustness} also notes that the LLM results are qualitative external validation rather than mechanistic identification.
    \item[] Guidelines:
    \begin{itemize}
        \item The answer \answerNA{} means that the paper has no limitation while the answer \answerNo{} means that the paper has limitations, but those are not discussed in the paper. 
        \item The authors are encouraged to create a separate ``Limitations'' section in their paper.
        \item The paper should point out any strong assumptions and how robust the results are to violations of these assumptions (e.g., independence assumptions, noiseless settings, model well-specification, asymptotic approximations only holding locally). The authors should reflect on how these assumptions might be violated in practice and what the implications would be.
        \item The authors should reflect on the scope of the claims made, e.g., if the approach was only tested on a few datasets or with a few runs. In general, empirical results often depend on implicit assumptions, which should be articulated.
        \item The authors should reflect on the factors that influence the performance of the approach. For example, a facial recognition algorithm may perform poorly when image resolution is low or images are taken in low lighting. Or a speech-to-text system might not be used reliably to provide closed captions for online lectures because it fails to handle technical jargon.
        \item The authors should discuss the computational efficiency of the proposed algorithms and how they scale with dataset size.
        \item If applicable, the authors should discuss possible limitations of their approach to address problems of privacy and fairness.
        \item While the authors might fear that complete honesty about limitations might be used by reviewers as grounds for rejection, a worse outcome might be that reviewers discover limitations that aren't acknowledged in the paper. The authors should use their best judgment and recognize that individual actions in favor of transparency play an important role in developing norms that preserve the integrity of the community. Reviewers will be specifically instructed to not penalize honesty concerning limitations.
    \end{itemize}

\item {\bf Theory assumptions and proofs}
    \item[] Question: For each theoretical result, does the paper provide the full set of assumptions and a complete (and correct) proof?
    \item[] Answer: \answerYes{}
    \item[] Justification: The theoretical result is stated as Theorem~\ref{thm:1}, with its uniform-sampling assumptions stated in the theorem and surrounding text; a complete proof is provided in Appendix~\ref{app:proof-thm1}.
    \item[] Guidelines:
    \begin{itemize}
        \item The answer \answerNA{} means that the paper does not include theoretical results. 
        \item All the theorems, formulas, and proofs in the paper should be numbered and cross-referenced.
        \item All assumptions should be clearly stated or referenced in the statement of any theorems.
        \item The proofs can either appear in the main paper or the supplemental material, but if they appear in the supplemental material, the authors are encouraged to provide a short proof sketch to provide intuition. 
        \item Inversely, any informal proof provided in the core of the paper should be complemented by formal proofs provided in appendix or supplemental material.
        \item Theorems and Lemmas that the proof relies upon should be properly referenced. 
    \end{itemize}

    \item {\bf Experimental result reproducibility}
    \item[] Question: Does the paper fully disclose all the information needed to reproduce the main experimental results of the paper to the extent that it affects the main claims and/or conclusions of the paper (regardless of whether the code and data are provided or not)?
    \item[] Answer: \answerYes{}
    \item[] Justification: Section~\ref{sec:experimental-setup} and the appendix specify the task formats, data-generation procedures, model architecture, optimizer, hyperparameters, decoding, and evaluation metrics needed to reproduce the main experimental results.
    \item[] Guidelines:
    \begin{itemize}
        \item The answer \answerNA{} means that the paper does not include experiments.
        \item If the paper includes experiments, a \answerNo{} answer to this question will not be perceived well by the reviewers: Making the paper reproducible is important, regardless of whether the code and data are provided or not.
        \item If the contribution is a dataset and\slash or model, the authors should describe the steps taken to make their results reproducible or verifiable. 
        \item Depending on the contribution, reproducibility can be accomplished in various ways. For example, if the contribution is a novel architecture, describing the architecture fully might suffice, or if the contribution is a specific model and empirical evaluation, it may be necessary to either make it possible for others to replicate the model with the same dataset, or provide access to the model. In general. releasing code and data is often one good way to accomplish this, but reproducibility can also be provided via detailed instructions for how to replicate the results, access to a hosted model (e.g., in the case of a large language model), releasing of a model checkpoint, or other means that are appropriate to the research performed.
        \item While NeurIPS does not require releasing code, the conference does require all submissions to provide some reasonable avenue for reproducibility, which may depend on the nature of the contribution. For example
        \begin{enumerate}
            \item If the contribution is primarily a new algorithm, the paper should make it clear how to reproduce that algorithm.
            \item If the contribution is primarily a new model architecture, the paper should describe the architecture clearly and fully.
            \item If the contribution is a new model (e.g., a large language model), then there should either be a way to access this model for reproducing the results or a way to reproduce the model (e.g., with an open-source dataset or instructions for how to construct the dataset).
            \item We recognize that reproducibility may be tricky in some cases, in which case authors are welcome to describe the particular way they provide for reproducibility. In the case of closed-source models, it may be that access to the model is limited in some way (e.g., to registered users), but it should be possible for other researchers to have some path to reproducing or verifying the results.
        \end{enumerate}
    \end{itemize}

\item {\bf Open access to data and code}
    \item[] Question: Does the paper provide open access to the data and code, with sufficient instructions to faithfully reproduce the main experimental results, as described in supplemental material?
    \item[] Answer: \answerYes{}
    \item[] Justification: An anonymized code repository is provided in the supplementary material, and the data are synthetic or template-based with generation procedures described in Section~\ref{sec:experimental-setup} and the appendix.
    \item[] Guidelines:
    \begin{itemize}
        \item The answer \answerNA{} means that paper does not include experiments requiring code.
        \item Please see the NeurIPS code and data submission guidelines (\url{https://neurips.cc/public/guides/CodeSubmissionPolicy}) for more details.
        \item While we encourage the release of code and data, we understand that this might not be possible, so \answerNo{} is an acceptable answer. Papers cannot be rejected simply for not including code, unless this is central to the contribution (e.g., for a new open-source benchmark).
        \item The instructions should contain the exact command and environment needed to run to reproduce the results. See the NeurIPS code and data submission guidelines (\url{https://neurips.cc/public/guides/CodeSubmissionPolicy}) for more details.
        \item The authors should provide instructions on data access and preparation, including how to access the raw data, preprocessed data, intermediate data, and generated data, etc.
        \item The authors should provide scripts to reproduce all experimental results for the new proposed method and baselines. If only a subset of experiments are reproducible, they should state which ones are omitted from the script and why.
        \item At submission time, to preserve anonymity, the authors should release anonymized versions (if applicable).
        \item Providing as much information as possible in supplemental material (appended to the paper) is recommended, but including URLs to data and code is permitted.
    \end{itemize}

\item {\bf Experimental setting/details}
    \item[] Question: Does the paper specify all the training and test details (e.g., data splits, hyperparameters, how they were chosen, type of optimizer) necessary to understand the results?
    \item[] Answer: \answerYes{}
    \item[] Justification: The main text describes tasks, data distributions, architecture, optimizer, batch size, learning rate, dropout, and decoding; Appendix~\ref{sec:append-synthetic-details}, Appendix~\ref{sec:append-data-gen-comparison}, Appendix~\ref{sec:append-data-gen-sorting}, and the experiment-settings appendix provide further details.

    \item[] Guidelines:
    \begin{itemize}
        \item The answer \answerNA{} means that the paper does not include experiments.
        \item The experimental setting should be presented in the core of the paper to a level of detail that is necessary to appreciate the results and make sense of them.
        \item The full details can be provided either with the code, in appendix, or as supplemental material.
    \end{itemize}

\item {\bf Experiment statistical significance}
    \item[] Question: Does the paper report error bars suitably and correctly defined or other appropriate information about the statistical significance of the experiments?
    \item[] Answer: \answerNo{}
    \item[] Justification: The paper reports learning curves, error rates, ablation results, and robustness checks, but it does not currently report multi-seed error bars, confidence intervals, or formal statistical significance tests for the main experimental claims.

    \item[] Guidelines:
    \begin{itemize}
        \item The answer \answerNA{} means that the paper does not include experiments.
        \item The authors should answer \answerYes{} if the results are accompanied by error bars, confidence intervals, or statistical significance tests, at least for the experiments that support the main claims of the paper.
        \item The factors of variability that the error bars are capturing should be clearly stated (for example, train/test split, initialization, random drawing of some parameter, or overall run with given experimental conditions).
        \item The method for calculating the error bars should be explained (closed form formula, call to a library function, bootstrap, etc.)
        \item The assumptions made should be given (e.g., Normally distributed errors).
        \item It should be clear whether the error bar is the standard deviation or the standard error of the mean.
        \item It is OK to report 1-sigma error bars, but one should state it. The authors should preferably report a 2-sigma error bar than state that they have a 96\% CI, if the hypothesis of Normality of errors is not verified.
        \item For asymmetric distributions, the authors should be careful not to show in tables or figures symmetric error bars that would yield results that are out of range (e.g., negative error rates).
        \item If error bars are reported in tables or plots, the authors should explain in the text how they were calculated and reference the corresponding figures or tables in the text.
    \end{itemize}

\item {\bf Experiments compute resources}
    \item[] Question: For each experiment, does the paper provide sufficient information on the computer resources (type of compute workers, memory, time of execution) needed to reproduce the experiments?
    \item[] Answer: \answerNo{}
    \item[] Justification: The paper provides model sizes and training hyperparameters, but it does not currently provide sufficient GPU/CPU type, memory, wall-clock time, or total compute estimates for each experiment.
    \item[] Guidelines:
    \begin{itemize}
        \item The answer \answerNA{} means that the paper does not include experiments.
        \item The paper should indicate the type of compute workers CPU or GPU, internal cluster, or cloud provider, including relevant memory and storage.
        \item The paper should provide the amount of compute required for each of the individual experimental runs as well as estimate the total compute. 
        \item The paper should disclose whether the full research project required more compute than the experiments reported in the paper (e.g., preliminary or failed experiments that didn't make it into the paper). 
    \end{itemize}
    
\item {\bf Code of ethics}
    \item[] Question: Does the research conducted in the paper conform, in every respect, with the NeurIPS Code of Ethics \url{https://neurips.cc/public/EthicsGuidelines}?
    \item[] Answer: \answerYes{}
    \item[] Justification: The research uses synthetic arithmetic data, public benchmark templates, and evaluations of existing models, and does not involve human-subject experiments, private data, or deployment of high-risk systems.
    \item[] Guidelines:
    \begin{itemize}
        \item The answer \answerNA{} means that the authors have not reviewed the NeurIPS Code of Ethics.
        \item If the authors answer \answerNo, they should explain the special circumstances that require a deviation from the Code of Ethics.
        \item The authors should make sure to preserve anonymity (e.g., if there is a special consideration due to laws or regulations in their jurisdiction).
    \end{itemize}

\item {\bf Broader impacts}
    \item[] Question: Does the paper discuss both potential positive societal impacts and negative societal impacts of the work performed?
    \item[] Answer: \answerNo{}
    \item[] Justification: The introduction motivates the work through reasoning reliability, robustness, alignment, and safety, but the current paper does not include a dedicated discussion of both positive and negative societal impacts.

    \item[] Guidelines:
    \begin{itemize}
        \item The answer \answerNA{} means that there is no societal impact of the work performed.
        \item If the authors answer \answerNA{} or \answerNo, they should explain why their work has no societal impact or why the paper does not address societal impact.
        \item Examples of negative societal impacts include potential malicious or unintended uses (e.g., disinformation, generating fake profiles, surveillance), fairness considerations (e.g., deployment of technologies that could make decisions that unfairly impact specific groups), privacy considerations, and security considerations.
        \item The conference expects that many papers will be foundational research and not tied to particular applications, let alone deployments. However, if there is a direct path to any negative applications, the authors should point it out. For example, it is legitimate to point out that an improvement in the quality of generative models could be used to generate Deepfakes for disinformation. On the other hand, it is not needed to point out that a generic algorithm for optimizing neural networks could enable people to train models that generate Deepfakes faster.
        \item The authors should consider possible harms that could arise when the technology is being used as intended and functioning correctly, harms that could arise when the technology is being used as intended but gives incorrect results, and harms following from (intentional or unintentional) misuse of the technology.
        \item If there are negative societal impacts, the authors could also discuss possible mitigation strategies (e.g., gated release of models, providing defenses in addition to attacks, mechanisms for monitoring misuse, mechanisms to monitor how a system learns from feedback over time, improving the efficiency and accessibility of ML).
    \end{itemize}
    
\item {\bf Safeguards}
    \item[] Question: Does the paper describe safeguards that have been put in place for responsible release of data or models that have a high risk for misuse (e.g., pre-trained language models, image generators, or scraped datasets)?
    \item[] Answer: \answerNA{}
    \item[] Justification: The work does not release high-risk pretrained language models, image generators, scraped datasets, or other assets with substantial misuse risk; the released materials are intended for reproducing synthetic arithmetic experiments.
    \item[] Guidelines:
    \begin{itemize}
        \item The answer \answerNA{} means that the paper poses no such risks.
        \item Released models that have a high risk for misuse or dual-use should be released with necessary safeguards to allow for controlled use of the model, for example by requiring that users adhere to usage guidelines or restrictions to access the model or implementing safety filters. 
        \item Datasets that have been scraped from the Internet could pose safety risks. The authors should describe how they avoided releasing unsafe images.
        \item We recognize that providing effective safeguards is challenging, and many papers do not require this, but we encourage authors to take this into account and make a best faith effort.
    \end{itemize}

\item {\bf Licenses for existing assets}
    \item[] Question: Are the creators or original owners of assets (e.g., code, data, models), used in the paper, properly credited and are the license and terms of use explicitly mentioned and properly respected?
    \item[] Answer: \answerNo{}
    \item[] Justification: The paper cites existing assets such as NanoGPT, Pythia-1B, GSM8K-style benchmarks, and evaluated open-source LLMs, but it does not currently list all corresponding licenses or terms of use.

    \item[] Guidelines:
    \begin{itemize}
        \item The answer \answerNA{} means that the paper does not use existing assets.
        \item The authors should cite the original paper that produced the code package or dataset.
        \item The authors should state which version of the asset is used and, if possible, include a URL.
        \item The name of the license (e.g., CC-BY 4.0) should be included for each asset.
        \item For scraped data from a particular source (e.g., website), the copyright and terms of service of that source should be provided.
        \item If assets are released, the license, copyright information, and terms of use in the package should be provided. For popular datasets, \url{paperswithcode.com/datasets} has curated licenses for some datasets. Their licensing guide can help determine the license of a dataset.
        \item For existing datasets that are re-packaged, both the original license and the license of the derived asset (if it has changed) should be provided.
        \item If this information is not available online, the authors are encouraged to reach out to the asset's creators.
    \end{itemize}

\item {\bf New assets}
    \item[] Question: Are new assets introduced in the paper well documented and is the documentation provided alongside the assets?
    \item[] Answer: \answerYes{}
    \item[] Justification: The new experimental code, synthetic data-generation procedures, and clause-extension templates are documented through the supplementary repository and the appendix descriptions of task construction and evaluation.
    \item[] Guidelines:
    \begin{itemize}
        \item The answer \answerNA{} means that the paper does not release new assets.
        \item Researchers should communicate the details of the dataset\slash code\slash model as part of their submissions via structured templates. This includes details about training, license, limitations, etc. 
        \item The paper should discuss whether and how consent was obtained from people whose asset is used.
        \item At submission time, remember to anonymize your assets (if applicable). You can either create an anonymized URL or include an anonymized zip file.
    \end{itemize}

\item {\bf Crowdsourcing and research with human subjects}
    \item[] Question: For crowdsourcing experiments and research with human subjects, does the paper include the full text of instructions given to participants and screenshots, if applicable, as well as details about compensation (if any)? 
    \item[] Answer: \answerNA{}
    \item[] Justification: The paper does not involve crowdsourcing experiments or research with human subjects; all data are synthetic, template-derived, or based on existing public benchmarks and models.
    \item[] Guidelines:
    \begin{itemize}
        \item The answer \answerNA{} means that the paper does not involve crowdsourcing nor research with human subjects.
        \item Including this information in the supplemental material is fine, but if the main contribution of the paper involves human subjects, then as much detail as possible should be included in the main paper. 
        \item According to the NeurIPS Code of Ethics, workers involved in data collection, curation, or other labor should be paid at least the minimum wage in the country of the data collector. 
    \end{itemize}

\item {\bf Institutional review board (IRB) approvals or equivalent for research with human subjects}
    \item[] Question: Does the paper describe potential risks incurred by study participants, whether such risks were disclosed to the subjects, and whether Institutional Review Board (IRB) approvals (or an equivalent approval/review based on the requirements of your country or institution) were obtained?
    \item[] Answer: \answerNA{}
    \item[] Justification: The paper does not involve crowdsourcing or human-subject research, so IRB or equivalent approval is not applicable.
    \item[] Guidelines:
    \begin{itemize}
        \item The answer \answerNA{} means that the paper does not involve crowdsourcing nor research with human subjects.
        \item Depending on the country in which research is conducted, IRB approval (or equivalent) may be required for any human subjects research. If you obtained IRB approval, you should clearly state this in the paper. 
        \item We recognize that the procedures for this may vary significantly between institutions and locations, and we expect authors to adhere to the NeurIPS Code of Ethics and the guidelines for their institution. 
        \item For initial submissions, do not include any information that would break anonymity (if applicable), such as the institution conducting the review.
    \end{itemize}

\item {\bf Declaration of LLM usage}
    \item[] Question: Does the paper describe the usage of LLMs if it is an important, original, or non-standard component of the core methods in this research? Note that if the LLM is used only for writing, editing, or formatting purposes and does \emph{not} impact the core methodology, scientific rigor, or originality of the research, declaration is not required.
    %this research? 
    \item[] Answer: \answerYes{}
    \item[] Justification: The use of LLMs as evaluated systems is described in Section~\ref{sec:external-robustness}, where modern open-source LLMs are tested on clause-extended GSM8K problems.
    \item[] Guidelines:
    \begin{itemize}
        \item The answer \answerNA{} means that the core method development in this research does not involve LLMs as any important, original, or non-standard components.
        \item Please refer to our LLM policy in the NeurIPS handbook for what should or should not be described.
    \end{itemize}

\end{enumerate}
\fi

\end{document}

% This document was modified from the file originally made available by
% Pat Langley and Andrea Danyluk for ICML-2K. This version was created
% by Iain Murray in 2018, and modified by Alexandre Bouchard in
% 2019 and 2021 and by Csaba Szepesvari, Gang Niu and Sivan Sabato in 2022.
% Modified again in 2023 and 2024 by Sivan Sabato and Jonathan Scarlett.
% Previous contributors include Dan Roy, Lise Getoor and Tobias
% Scheffer, which was slightly modified from the 2010 version by
% Thorsten Joachims & Johannes Fuernkranz, slightly modified from the
% 2009 version by Kiri Wagstaff and Sam Roweis's 2008 version, which is
% slightly modified from Prasad Tadepalli's 2007 version which is a
% lightly changed version of the previous year's version by Andrew
% Moore, which was in turn edited from those of Kristian Kersting and
% Codrina Lauth. Alex Smola contributed to the algorithmic style files.